\newif\ifextended
\extendedtrue

\documentclass[sigplan,screen,nonacm=true]{acmart}

\settopmatter{printfolios=true}

\usepackage[ruled]{algorithm2e}

\urlstyle{tt}
\citestyle{acmnumeric}

\usepackage[utf8]{inputenc} \usepackage[T1]{fontenc}    \usepackage{url}            \usepackage{booktabs}       \usepackage{amsfonts}       \usepackage{nicefrac}       \usepackage{microtype}      \usepackage[varqu]{zi4}
\usepackage{alltt}
\usepackage[fleqn]{mathtools}
\usepackage{bm}
\usepackage{listings}
\usepackage{subcaption}
\usepackage{url}
\usepackage{stmaryrd}
\usepackage{enumitem}
\usepackage{caption}
\usepackage{todonotes}
\usepackage{wrapfig}
\usepackage{float}
\usepackage{alltt}
\usepackage{pifont}
\newcommand{\smark}{{\color{green!60!black}\ding{51}}}
\newcommand{\emark}{{\color{red!80!black}\ding{55}}}
\newcommand{\mmark}{{\color{orange!90!black}\ding{109}}}
\usepackage{xspace}

\usepackage{tikz}
\usetikzlibrary{bayesnet}
\usetikzlibrary{arrows}
\usetikzlibrary{backgrounds}

\usepackage{cleveref}
\crefname{figure}{Figure}{Figures}
\crefname{table}{Table}{Tables}

\DeclareMathOperator*{\argmin}{argmin}
\newcommand{\matpar}{\,\Vert\,}
\newcommand{\por}{\,\vert\,}
\newtheorem{assumption}{Assumption}

\definecolor{blue}{rgb}{0.0, 0.0, 0.55}

\lstset{
basicstyle=\small\ttfamily,keywordstyle=\color{blue}\ttfamily,alsoletter={'},
   columns=fullflexible,keepspaces=true,showstringspaces=false,numberstyle=\color{Gray}\sffamily\tiny,
   numbersep=2mm,
   xleftmargin=0em,
   mathescape=true,
}

\lstnewenvironment{lstpython}{\lstset{language=Python}}
{}
\lstnewenvironment{lstpythontable}{\lstset{language=Python,basicstyle=\fontsize{8}{9}\ttfamily,aboveskip=0em,belowskip=0em,xleftmargin=0em}}
{}
\lstnewenvironment{lstpythons}{\lstset{language=Python,basicstyle=\ttfamily\small}}
{}
\def\python{\lstinline[language=Python, basicstyle=\normalsize\ttfamily]}

\lstdefinelanguage{Stan}[]{}{
  morekeywords=[3]{guide,parameters,networks,variational},morekeywords=[2]{functions,model,data,parameters,quantities,transformed,generated},
  morekeywords=[1]{for,in,while,repeat,until,if,then,else,true,false,int,real,vector,simplex,ordered,positive_ordered,row_vector,unit_vector,matrix,cholesky_factor_corr, cholesky_factor_cov,corr_matrix, cov_matrix,target,skip,let,factor,observe,sample,return},
  morecomment={[s]{/*}{*/}},
}
\lstnewenvironment{lststan}{\lstset{language=Stan}}
{}
\lstnewenvironment{lststantable}{\lstset{language=Stan,basicstyle=\fontsize{8}{9}\ttfamily,aboveskip=0em,belowskip=0em,xleftmargin=0em}}
{}
\lstnewenvironment{lststantablesmall}{\lstset{language=Stan,basicstyle=\fontsize{7}{8}\ttfamily,aboveskip=0em,belowskip=0em,xleftmargin=0em}}
{}
\def\stan{\lstinline[language=Stan, basicstyle=\normalsize\ttfamily]}

\newcommand{\lb}{\llbracket}
\newcommand{\rb}{\rrbracket}

\newcommand{\sem}[1]{\left\lb #1 \right\rb}
\newcommand{\psem}[1]{\{\mkern-3.8mu[ #1 ]\mkern-3.8mu\}}

\newcommand{\pcomp}[1]{\mathcal{C}\!\left( #1 \right)}
\newcommand{\dcomp}[2]{\mathcal{C}_{#1}\left( #2 \right)}
\newcommand{\scomp}[2]{\mathcal{C}_{#1}\left( #2 \right)}

\newcommand{\kcomp}[2]{\mathcal{C}\left( #1, #2 \right)}

\begin{document}

\title{Compiling Stan to Generative Probabilistic Languages and Extension to Deep Probabilistic Programming}

\author{Guillaume Baudart}
\affiliation{
  \institution{INRIA Paris\\\mbox{École normale supérieure -- PSL University}}
  \country{France}
}

\author{Javier Burroni}
\affiliation{
  \institution{UMass Amherst}
  \country{USA}
}

\author{Martin Hirzel}
\affiliation{
  \institution{\mbox{MIT-IBM Watson AI Lab, IBM Research}}
  \country{USA}
}

\author{Louis Mandel}
\affiliation{
  \institution{\mbox{\hspace*{-2mm}MIT-IBM Watson AI Lab, IBM Research}}
  \country{USA}
}

\author{Avraham Shinnar}
\affiliation{
  \institution{\mbox{\hspace*{2mm}MIT-IBM Watson AI Lab, IBM Research}}
  \country{USA}
}

\renewcommand{\shortauthors}{}

\newcommand{\speedup}{2.3}
\newcommand{\nbbench}{26\xspace}

\begin{abstract}
  Stan is a probabilistic programming language that is popular in the statistics community, with a high-level syntax for expressing probabilistic models.
Stan differs by nature from generative probabilistic programming languages like Church, Anglican, or Pyro.
This paper presents a comprehensive compilation scheme to compile any Stan model to a generative language and proves its correctness.
We use our compilation scheme to build two new backends for the Stanc3 compiler targeting Pyro and NumPyro.
Experimental results show that the NumPyro backend yields a \speedup x speedup compared to Stan in geometric mean over \nbbench benchmarks.
Building on Pyro we extend Stan with support for explicit variational inference guides and deep probabilistic models.
That way, users familiar with Stan get access to new features without
having to learn a fundamentally new language.

 \end{abstract}

\begin{CCSXML}
<ccs2012>
<concept>
<concept_id>10003752.10003753.10003757</concept_id>
<concept_desc>Theory of computation~Probabilistic computation</concept_desc>
<concept_significance>500</concept_significance>
</concept>
<concept>
<concept_id>10011007.10011006.10011041</concept_id>
<concept_desc>Software and its engineering~Compilers</concept_desc>
<concept_significance>500</concept_significance>
</concept>
</ccs2012>
\end{CCSXML}

\ccsdesc[500]{Software and its engineering~Compilers}
\ccsdesc[500]{Theory of computation~Probabilistic computation}

\keywords{Probabilistic programming, Semantics, Stan, Pyro}

\maketitle

\ifextended
\newpage
\fi
\section{Introduction}\label{sec:introduction}

Probabilistic Programming Languages (PPLs) are designed to describe probabilistic models and run inference on them.
There exists a variety of PPLs.
BUGS~\cite{lunn2009bugs}, JAGS~\cite{plummer2003jags}, and Stan~\cite{carpenter2017stan} focus on efficiency, constraining what is expressible to a subset of models which support fast inference techniques.
These languages enjoy broad adoption by the statistics and social sciences communities~\cite{gelman2006data, gelman2013bayesian,carlin2008bayesian}.
\emph{Generative languages} like Church~\cite{goodman_et_al_2008}, Anglican~\cite{tolpin_et_al_2016}, WebPPL~\cite{goodman_stuhlmuller_2014}, Pyro~\cite{bingham_et_al_2019}, and Gen~\cite{cusamotowner_et_al_2019} describe \emph{generative models}, i.e., stochastic procedures that simulate the data generation process.
Coming from a core programming languages heritage, generative PPLs typically support rich control constructs and models over structured data.
Generative PPLs are increasingly used in machine-learning research and are rapidly incorporating new ideas, such as Stochastic Gradient Variational Inference~(SVI), in what is now called Deep Probabilistic Programming~\cite{bingham_et_al_2019, baudart_hirzel_mandel_2018, tran_et_al_2017}.

While the semantics of probabilistic languages have been extensively
studied~\cite{kozen_1981,gordon_et_al_2014,staton_2017,GorinovaGS19},
to the best of our knowledge little is known about the relationship
between Stan and generative PPLs.
We show that a simple 1:1 translation is incorrect or incomplete for a set of subtle but widely-used 
Stan features, such as left expressions or implicit priors.

This paper formalizes the relationship between Stan and generative PPLs and introduces, with correctness proof, a \emph{comprehensive} compilation scheme that can compile any Stan program to a generative PPL.
This enables leveraging the rich set of existing Stan models for testing,
benchmarking, or experimenting with new features or inference
techniques.
Based on this compilation scheme we implemented two new backends for the Stanc3 compiler targeting Pyro~\cite{bingham_et_al_2019} and NumPyro~\cite{numpyro_2019}, a JAX~\cite{jax2018github} based version of Pyro.
Both Pyro and NumPyro runtimes offer NUTS~\cite{homan2014nuts} (No U-Turn Sampler), an optimized Hamiltonian Monte-Carlo (HMC) algorithm that is the preferred inference method for Stan.
We can thus validate our approach against Stan.
Results show that models compiled using our NumPyro backend yield equivalent results while being \speedup x faster than their Stan counterpart in the geometric mean over \nbbench benchmarks.
Our compiler and runtime library are open-source at \url{https://github.com/deepppl}.

In addition, recent probabilistic languages offer new features to program and reason about complex models.
Our compilation scheme combined with conservative extensions of Stan can be used to make these benefits available to Stan users.
As a proof of concept, based on our Pyro backend, this paper introduces DeepStan: Stan extended with support for explicit variational guides and deep probabilistic models.
Variational inference was central in the design of Pyro and programmers can easily craft their own inference guides to run variational inference on probabilistic models.
Pyro is built on top of PyTorch~\cite{paszke_et_al_2017}. Programmers can thus seamlessly import neural networks designed with the state-of-the-art API provided by PyTorch.

\medskip
This paper makes the following contributions:
\begin{itemize}[leftmargin=*]
  \item A comprehensive compilation scheme from Stan to a generative
    PPL (\Cref{sec:overview}).
  \item Correctness proof of the compilation scheme (\Cref{sec:formal}).
\item An open-source implementation of two new backends for Stanc3 targeting Pyro and NumPyro~(\Cref{sec:implem}).
  \item An extension of Stan with explicit variational inference
    guides and deep probabilistic models~(\Cref{sec:deepstan}).
\end{itemize}

\noindent
The fundamental new result of this paper is that every Stan
program can be expressed as a generative probabilistic program.
Besides advancing the understanding of probabilistic programming
languages at a fundamental level, this paper aims to provide practical
benefits to the communities of both Stan and Pyro.
From the perspective of the Stan community, this paper presents a new
competitive compiler backend and additional capabilities while retaining
familiar syntax and semantics.
This compiler can thus be used to migrate existing Stan codebases to Pyro and NumPyro.
From the perspective of the Pyro community, this paper presents a new
compiler frontend that unlocks many existing real-world
models as examples and benchmarks.

\ifextended
This paper is a version with appendices presenting the proofs and the evaluation results of the paper published at PLDI 2021~\cite{deepstan-short}.
\else
An extended version of the paper with appendices presenting the proofs and the evaluation results is available~\cite{deepstan-extended}.
\fi

 \section{Overview}\label{sec:overview}

This section shows how to compile Stan~\cite{carpenter2017stan},
which specifies a joint probability distribution,
to a generative PPL like Church, Anglican, or Pyro.
This translation also demonstrates that Stan's expressive power is at most as large as that of generative languages, a fact that was not clear before our paper.

\begin{figure}
  \centering
  \begin{minipage}{0.58\linewidth}
    \begin{lststantable}
data {
  int N;
  int<lower=0,upper=1> x[N]; } 
parameters {
  real<lower=0,upper=1> z; } 
model { 
  z ~ beta(1, 1);
  for (i in 1:N) x[i] ~ bernoulli(z); }
    \end{lststantable}
  \end{minipage}
  \begin{minipage}{0.35\linewidth}
    \centering
    \vspace{-1em}
    \scalebox{0.85}{
    \begin{tikzpicture}

  \node[latent] (z) {$z$};
  \node[obs, right=of z] (x) {$x$};
  \plate[]{model}{
      (x)
  }{$N$};
  \factoredge {} {z} {x};
  \node[below=of z, yshift=1cm, xshift=-0.2cm] (model-caption) {$p(z \por x_1, \dots, x_N)$};
  \end{tikzpicture}}
\end{minipage}
  \vspace{-1em}
  \caption{\label{fig:coin_stan}Biased coin model in Stan.}
\end{figure}

As a running example, consider the biased coin model in
\cref{fig:coin_stan}. Stan's \stan{data} block defines observed
variables for $N$ coin flips $x_i$, $i\in [1:N]$, which can be~$0$ for
tails or~$1$ for heads. The \stan{parameters} block introduces a
latent variable~$z \in [0, 1]$ for the bias of the coin.  The
\stan{model} block sets the prior of the bias~$z$ to
$\textrm{Beta}(1,1)$, i.e., a uniform distribution over $[0, 1]$.  The
\stan{for} loop indicates that coin flips~$x_i$ are independent and
identically distributed (IID) and depend on~$z$ via a Bernoulli
distribution.  Given concrete observed coin flips, inference yields a
posterior distribution for~$z$ conditioned on $x_1,\ldots,x_N$.

\subsection{Generative translation}\label{sec:generative_translation}

Generative PPLs are general-purpose languages extended with two probabilistic constructs~\cite{van2018introduction, gordon_et_al_2014,staton_2017}:
\python{sample($D$)} generates a sample from a distribution~$D$ and \python{factor($v$)} assigns a score~$v$ to the current execution trace.
Typically, \python{factor} is used to condition the model on input data~\cite{tolpin_et_al_2016}.
We also introduce \python{observe($D$,$v$)} as a syntactic shortcut for \python{factor($D_{\rm{pdf}}$($v$))} where $D_{\rm{pdf}}$ denotes the probability density function of~$D$.
This construct penalizes executions according to the score of~\python{$v$} w.r.t.~$D$ which captures the assumption that the observed data~$v$ follows the distribution~$D$.

\begin{figure}
\begin{subfigure}[t]{0.45\columnwidth}
\begin{lstpythontable}
def model(N, x):
  z = sample(
        beta(1.,1.))
  for i in range(0, N): 
    observe(
      bernoulli(z), x[i])
  return z

\end{lstpythontable}
\caption{Generative scheme}
\label{fig:compile_coin_generative}
\end{subfigure}
\vline
\hspace{0.5em}
\begin{subfigure}[t]{0.5\columnwidth}
\begin{lstpythontable}
def model(N, x):
  z = sample(uniform(0.,1.))
  observe(beta(1.,1.), z)
  for i in range(0, N): 
    observe(
      bernoulli(z), x[i])
  return z
\end{lstpythontable}
\caption{Comprehensive scheme}
\label{fig:compile_coin_comprehensive}
\end{subfigure}
\caption{Compiled coin model of \Cref{fig:coin_stan}.}
\label{fig:compile_coin}
\end{figure}

\paragraph{Compilation.}
Stan uses the same syntax~\stan{v ~ $D$} for both observed and latent variables.
The distinction comes from the kind of the left-hand-side variable: observed variables are declared in the \stan{data} block, latent variables are declared in the \stan{parameters} block.
A straightforward \emph{generative translation} compiles a statement \stan{v ~ $D$} into \python{v = sample($D$)} if \stan{v} is a parameter or \python{observe($D$, v)} if~\stan{v} is data.
For example, \Cref{fig:compile_coin_generative} shows the compiled
(using the generative scheme) version of the Stan model of
\cref{fig:coin_stan} in Python syntax.

\subsection{Non-generative features}
\label{sec:non-gen-feat}

\begin{table*}[!t]
  \centering
  \caption{\label{tab:stan_features}Stan features that defy generative translation: prevalence, example, and compilation.}
\vspace{-1em}
{\small\begin{tabular}[t]{@{}lrll@{}}
\textsc{Feature} & \% & \textsc{Example} & \textsc{Compilation}\\\toprule
Left \mbox{expression} & 15
& \begin{minipage}{14em}\begin{lststantable}
sum(phi) ~ normal(0, 0.001*N);
\end{lststantable}\end{minipage}
& \begin{minipage}{16.5em}\begin{lstpythontable}
observe(Normal(0.,0.001*N), sum(phi))
\end{lstpythontable}\end{minipage}\\\midrule
Multiple updates & 8
& \begin{minipage}{14em}\begin{lststantable}
phi_y ~ normal(0,sigma_py);
phi_y ~ normal(0,sigma_pt)
\end{lststantable}\end{minipage}
&\begin{minipage}{16.5em}\begin{lstpythontable}
observe(Normal(0.,sigma_py), phi_y);
observe(Normal(0.,sigma_pt), phi_y)
\end{lstpythontable}\end{minipage}
\\\midrule
Implicit prior & 58
&\begin{minipage}{14em}\begin{lststantable}
real alpha0; 
/* missing 'alpha0 ~ ...' */
\end{lststantable}\end{minipage}
&\begin{minipage}{16.5em}\begin{lstpythontable}
alpha0 = sample(improper_uniform())
\end{lstpythontable}\end{minipage}\\\midrule
\end{tabular}}
\end{table*}

In Stan, a model represents the unnormalized density of the joint distribution of the parameters defined in the \stan+parameters+ block given the data defined in the \stan+data+ block~\cite{carpenter2017stan,GorinovaGS19}.
A Stan program can thus be viewed as a function from parameters and data to the value of a special variable \stan+target+ that represents the log-density of the model.
A Stan model can be described using classic imperative statements, plus two special statements that modify the value of \stan{target}.
The first one,~\mbox{\stan|target+= |$\,e$}, increments the value of \stan+target+ by~$e$.
The second one, \mbox{\textit{e} \texttt{\textasciitilde} \textit{D}}, is equivalent to \mbox{\stan{target}$\,$\stan|+=|$\,D_{\text{lpdf}}(\textit{e})$}~\cite{GorinovaGS19} where $D_{\text{lpdf}}$ denotes the log probability density function of~$D$.

Unfortunately, these constructs allow the definition of models that cannot be translated using the generative translation defined above.
\cref{tab:stan_features} lists the Stan features that are not handled correctly.
A \emph{left expression} is where the left-hand-side of \stan{~} is an arbitrary expression.
\emph{Multiple updates} occur when the same parameter appears on the left-hand-side of multiple \stan{~} statements.
An \emph{implicit prior} occurs when there is no explicit \stan{~} statement in the model for a parameter.

The ``\%'' column of \cref{tab:stan_features} indicates the percentage of Stan models that exercise each of the non-generative features among the 531 valid files in \url{https://github.com/stan-dev/example-models}.
The example column contains illustrative excerpts from such models.
Since these are official and long-standing examples, we assume that they use the non-generative features on purpose.
Comments in the source code further corroborate that the programmer knowingly used the features.
While some features only occur in a minority of models, their prevalence is too high to ignore.

\subsection{Comprehensive translation}\label{sec:comprehensive_translation}

The previous section illustrates that Stan is centered around the definition of \stan{target}, not around generating samples for parameters, which is required by generative PPLs.
The comprehensive translation adds an initialization step to generate samples for all the parameters and compiles all Stan \stan{~} statements as observations.
Parameter initialization draws from the uniform distribution in their definition domain.
For the biased coin example, the result of this translation is shown in \Cref{fig:compile_coin_comprehensive}:
The parameter \python{z} is first sampled uniformly on its definition domain and then conditioned with an observation.

The compilation column of \cref{tab:stan_features} illustrates
the translation of non-generative features.
Left expression and multiple updates are simply compiled into observations.
Parameter initialization uses the uniform distribution over its definition domain.
For unbounded domains, we introduce new distributions (e.g., \python{improper_uniform}) with a constant density that can be normalized away during inference.
\Cref{sec:formal_comprehensive_translation} details the complete compilation scheme.

\paragraph{Intuition of correctness}
The semantics of Stan as described in \cite{GorinovaGS19} is a classic imperative semantics.  Its environment includes the special variable \stan{target}, the unnormalized log-density of the model.
On the other hand, the semantics of a generative PPL as described in \cite{staton_2017} defines a kernel mapping an environment to a measurable function.
Our compilation scheme adds uniform initializations for all parameters which comes down to the Lebesgue measure on the parameters space, and translates all \stan{~} statements to \python{observe} statements.
We can then show that a succession of  \python{observe} statements yields a distribution with the same log-density as the Stan semantics.
\Cref{sec:correctness} details the correctness proof.

\paragraph{Implementation.}
The comprehensive compilation scheme can compile any Stan program to a
generative PPL.  \Cref{sec:implem} discusses the implementation of two new backends for the Stanc3 compiler targeting Pyro~\cite{bingham_et_al_2019} -- a PPL in the line of WebPPL~\cite{goodman_stuhlmuller_2014} -- and NumPyro -- a JAX~\cite{jax2018github} based version of Pyro.
\Cref{sec:evaluation} experimentally validates that our backends can compile most existing Stan models. 
Results also show that models compiled using our NumPyro backend outperform their Stan counterpart on existing benchmarks.

\paragraph{Extensions.}
Pyro is a \emph{deep universal probabilistic programming languages}
with native support for variational inference. 
Building on Pyro, we use our compiler to extend Stan with support for explicit variational
guides (\Cref{sec:guides}) and deep neural networks to capture complex
relations between parameters (\Cref{sec:nn}).

 \section{Semantics and Compilation}
\label{sec:formal}

This section, building on previous work, first formally defines the semantics of Stan (\Cref{sec:sem_stan}) and the semantics of GProb, a small generative probabilistic language (\Cref{sec:formal_generative_language}).
It then defines the compilation function from Stan to GProb (\Cref{sec:formal_comprehensive_translation}) and proves its correctness (\Cref{sec:correctness}).

\subsection{Stan: a Declarative Probabilistic Language}
\label{sec:sem_stan}

\newcommand{\kwf}[1]{\ensuremath{\mathtt{{\color{blue}{#1}}}}}
\newcommand{\ttf}[1]{\ensuremath{{\texttt{#1}}}}

\newcommand{\pfunctions}[1]{\ensuremath{\kwf{functions}\ \ttf{\{}{#1}\ttf{\}}}}
\newcommand{\pdata}[1]{\ensuremath{\kwf{data}\ \ttf{\{}{#1}\ttf{\}}}}
\newcommand{\ptransdata}[1]{\ensuremath{\kwf{transformed\ data}\ \ttf{\{}{#1}\ttf{\}}}}
\newcommand{\pparameters}[1]{\ensuremath{\kwf{parameters}\ \ttf{\{}{#1}\ttf{\}}}}
\newcommand{\ptransparam}[1]{\ensuremath{\kwf{transformed\ parameters}\ \ttf{\{}{#1}\ttf{\}}}}
\newcommand{\pmodel}[1]{\ensuremath{\kwf{model}\ \ttf{\{}{#1}\ttf{\}}}}
\newcommand{\pgenerated}[1]{\ensuremath{\kwf{generated\ quantities}\ \ttf{\{}{#1}\ttf{\}}}}

\newcommand{\secdata}[1]{\ensuremath{\mathit{data}(#1)}}
\newcommand{\secparams}[1]{\ensuremath{\mathit{params}(#1)}}
\newcommand{\secmodel}[1]{\ensuremath{\mathit{model}(#1)}}

\newcommand{\sassign}[2]{\ensuremath{{#1}\ \ttf{=}\ {#2}}}
\newcommand{\sseq}[2]{\ensuremath{{#1}\ttf{;}\ {#2}}}
\newcommand{\sfor}[4]{\ensuremath{\kwf{for}\ \ttf{(}{#1}\ \kwf{in}\ {#2}\ttf{:}{#3}\ttf{)}\ \ttf{\{}{#4}\ttf{\}}}}
\newcommand{\sforeach}[3]{\ensuremath{\kwf{for}\ \ttf{(}{#1}\ \kwf{in}\ {#2}\ttf{)}\ \ttf{\{}{#3}\ttf{\}}}}
\newcommand{\swhile}[2]{\ensuremath{\kwf{while}\ \ttf{(}{#1}\ttf{)}\ \ttf{\{}{#2}\ttf{\}}}}
\newcommand{\sif}[3]{\ensuremath{\kwf{if}\ \ttf{(}{#1}\ttf{)}\ {#2}\ \kwf{else}\ {#3}}}
\newcommand{\sskip}{\ensuremath{\kwf{skip}}}
\newcommand{\stilde}[2]{\ensuremath{{#1}\ \mbox{\texttt{\textasciitilde}}\ {#2}}}
\newcommand{\starget}[1]{\ensuremath{\kwf{target}\ \ttf{+=}\ {#1}}}

\newcommand{\earray}[1]{\ensuremath{\ttf{\{}{#1}\ttf{\}}}}
\newcommand{\evector}[1]{\ensuremath{\ttf{[}{#1}\ttf{]}}}
\newcommand{\eaccess}[2]{\ensuremath{{#1}\ttf{[}{#2}\ttf{]}}}
\newcommand{\ecall}[2]{\ensuremath{{#1}\ttf{(}{#2}\ttf{)}}}

\newcommand{\elet}[3]{\ensuremath{\kwf{let}\,{#1}\ \ttf{=}\ {#2}\ \kwf{in}}\ {#3}}
\newcommand{\efor}[5]{\ensuremath{\kwf{for}_{#1}\ \ttf{(}{#2}\ \kwf{in}\ {#3}\ttf{:}{#4}\ttf{)}\ {#5}}}
\newcommand{\eforeach}[4]{\ensuremath{\kwf{for}_{#1}\ \ttf{(}{#2}\ \kwf{in}\ {#3}\ttf{)}\ {#4}}}
\newcommand{\ewhile}[3]{\ensuremath{\kwf{while}_{#1}\ \ttf{(}{#2}\ttf{)}\ {#3}}}
\newcommand{\eif}[3]{\ensuremath{\kwf{if}\ \ttf{(}{#1}\ttf{)}\ {#2}\ \kwf{else}\ {#3}}}
\newcommand{\efactor}[1]{\ensuremath{\kwf{factor}\ttf{(}{#1}\ttf{)}}}
\newcommand{\eobserve}[2]{\ensuremath{\kwf{observe}\ttf{(}{#1}\ttf{,}{#2}\ttf{)}}}
\newcommand{\esample}[1]{\ensuremath{\kwf{sample}\ttf{(}{#1}\ttf{)}}}
\newcommand{\ereturn}[1]{\ensuremath{\kwf{return}\ttf{(}{#1}\ttf{)}}}

\newcommand{\eunit}{\ensuremath{\ttf{()}}}

\newcommand{\aletin}[2]{\ensuremath{\mathit{let}\,{#1}\,=\,{#2}\,\mathit{in}}}
\newcommand{\aif}[3]{\ensuremath{\mathit{if}\,{#1}\,\mathit{then}\,{#2}\,\mathit{else}\,{#3}}}

\newcommand{\treal}{\ensuremath{\kwf{real}}}
\newcommand{\tint}{\ensuremath{\kwf{int}}}
\newcommand{\treals}{\ensuremath{\kwf{reals}}}
\newcommand{\tints}{\ensuremath{\kwf{ints}}}
\newcommand{\tvector}[1]{\ensuremath{\kwf{vector}\ttf{[}{#1}\ttf{]}}}
\newcommand{\trowvector}[1]{\ensuremath{\kwf{row\_vector}\ttf{[}{#1}\ttf{]}}}
\newcommand{\tmatrix}[2]{\ensuremath{\kwf{matrix}\ttf{[}{#1}\ttf{,}{#2}\ttf{]}}}
\newcommand{\tarray}[2]{\ensuremath{{#1}\ttf{[}{#2}\ttf{]}}}

\newcommand{\dom}[1]{\ensuremath{\textrm{Dom}({#1})}}

\newcommand{\ImproperUniform}{\ensuremath{\mathtt{improper\_uniform}}}

\newcommand{\stanunroll}[2]{\ensuremath{\left\lfloor #1 \right\rfloor_{#2}}}
\newcommand{\unroll}[2]{\ensuremath{\left\lfloor #1 \right\rfloor_{#2}}}

The Stan language is informally described in~\cite{carpenter2017stan}.
A Stan program is a sequence of blocks which in order:
declares functions, declares input names and types, pre-processes input data, declares the parameters to infer, defines transformations on parameters, defines the model, and post-processes the parameters.
The only mandatory block is \stan{model}.
Variables declared in a block are visible in subsequent blocks.
\begin{small}
\begin{center}
  \begin{tabular}{lcl}
 \textit{program} & ::= &
  \pfunctions{\mathit{fundecl}^*} ?
\\ &&
  \pdata{\mathit{decl}^*} ?
\\ &&
  \ptransdata{\mathit{decl}^*\ \mathit{stmt}} ?
\\ &&
  \pparameters{\mathit{decl}^*} ?
\\ &&
  \ptransparam{\mathit{decl}^*\ \mathit{stmt}} ?
\\ &&
  \pmodel{\mathit{decl}^*\ \mathit{stmt}}
\\ &&
  \pgenerated{\mathit{decl}^*\ \mathit{stmt}} ?
\end{tabular}
\end{center}
\end{small}

Variable declarations (\textit{decl}$^*$) are lists of variables names with their types (e.g., \stan{int N;}) or arrays with their type and shape (e.g., \stan{real x[N]}).
Types can be constrained to specify the domain of a variable (e.g., \stan{real <lower=0> x} for \mbox{$x \in \mathbb{R}^+$}).
Note that \stan{vector} and \stan{matrix} are primitive types that can be used in arrays (e.g. \stan{vector[N] x[10]} is an array of \stan{10} vectors of size \stan{N}).
Shapes and sizes of arrays, matrices, and vectors are explicit and can be arbitrary expressions. 
\begin{small}
\begin{center}
\def\arraystretch{1.2}
\begin{tabular}{l@{$\ $}c@{$\ $}l}
\textit{decl}& ::= &
  \textit{base\_type} \textit{constraint} $x$ \ttf{;}\\
&$~\mid~$&
  \textit{base\_type} \textit{constraint} $x$ \ttf{[}\textit{shape}\ttf{]} \ttf{;}
\\
\textit{base\_type}&::= &
  \treal $~\mid~$
                          \tint\\
&$~\mid~$&
\tvector{\textit{size}} $~\mid~$
  \tmatrix{\textit{size}}{\textit{size}}
\\
\textit{constraint}&::= &
  $\varepsilon$ $~\mid~$
  \ttf{<} \ttf{lower} \ttf{=} $e$ \ttf{,} \ttf{upper} \ttf{=} $e$ \ttf{>}\\
&$~\mid~$&
  \ttf{<} \ttf{lower} \ttf{=} $e$ \ttf{>} $~\mid~$
  \ttf{<}  \ttf{upper} \ttf{=} $e$ \ttf{>}
\\
\textit{shape} & ::= &
\textit{size} $~\mid~$ \textit{shape} \ttf{,} \textit{size}
\\
\textit{size} & ::= &
  \textit{e}
\end{tabular}
\end{center}
\end{small}

Inside a block, Stan is similar to a classic imperative language, with
two extra, specialized statements: $\starget{e}$ directly updates the log-density of the model (stored in the reserved variable \stan{target}), and $\stilde{x}{D}$ indicates that a variable~$x$ follows a distribution~$D$.

\begin{small}
\begin{center}
\begin{tabular}{l@{\ \ }r@{\ \ }l@{\ \ }l}
\textit{stmt} & ::= &
\sassign{x}{e}
& variable assignment \\ & $\mid$ &
\sassign{\eaccess{x}{e_1\ttf{,} ...\ttf{,} e_n}}{e}
& array assignment \\ & $\mid$ &
\sseq{\mathit{stmt_1}}{\mathit{stmt_2}}
& sequence \\ & $\mid$&
\sfor{x}{e_1}{e_2}{\mathit{stmt}}
&  loop over a range \\ & $\mid$&
\sforeach{x}{e}{\mathit{stmt}}
& loop over a collection \\ & $\mid$&
\swhile{e}{\textit{stmt}}
& while loop \\ & $\mid$&
\sif{e}{\mathit{stmt_1}}{\mathit{stmt_2}}
& conditional \\ & $\mid$&
\sskip
& no operation \\ & $\mid$&
\starget{e} & direct log-density update \\ & $\mid$ &
\stilde{e}{\ecall{f}{e_1\ttf{,}...\ttf{,}e_n}}
& probability distribution
\end{tabular}
\end{center}
\end{small}

Expressions comprise constants, variables, arrays, vectors, matrices, access to elements of an indexed structure, and function calls (also used to model binary operators):
\begin{small}
\begin{center}
\begin{tabular}{l@{~~}c@{~~}l@{$\quad$}l}
\textit{e} & ::= &
  $c$
$\mid$
  $x$
$\mid$
  \ecall{f}{e_1\ttf{,}...\ttf{,}e_n}
  $\mid$
  \earray{e_1\ttf{,}...\ttf{,}e_n}
  $\mid$
  \evector{e_1\ttf{,}...\ttf{,}e_n}\\
  &$\mid$&
  \evector{\evector{e_{1_1}\ttf{,}...\ttf{,}e_{1_m}}\ttf{,}...\ttf{,}\evector{e_{n_1}\ttf{,}...\ttf{,}e_{n_m}}}
$\mid$
  \eaccess{e_1}{e_2}
\end{tabular}
\end{center}
\end{small}

\paragraph{Semantics.}
Stan programs are evaluated in three steps:
\begin{enumerate}
\item data preparation with \stan{data} and \stan{transformed} \stan{data}
\item inference over the model defined by \stan{parameters},\\
  \stan{transformed} \stan{parameters}, and \stan{model}
\item post-processing with \stan{generated} \stan{quantities}.
\end{enumerate}

\noindent
Sections \stan{transformed} \stan{data}, \stan{transformed} \stan{parameters}, and \stan{generated} \stan{quantities} introduce new variables.
Semantically, these sections can all be inlined in the \stan{model} section.
Any Stan program can thus be rewritten into an equivalent program with only the three blocks \stan{data}, \stan{parameters}, and \stan{model}.
Alternatively, we show in \Cref{sec:compil-all}  that the \stan{transformed} \stan{data} section can be pre-computed and passed as input to the model, and the \stan{generated} \stan{quantities} can be post-processed after the inference.

\begin{small}
$$
\begin{array}{lcl}
\pfunctions{\mathit{fundecls}} \\
\pdata{\mathit{decls}_d}                         &&  \\
\kwf{transformed\ data}\\
\ttf{\{}\mathit{decls}_{\mathit{td}}~\mathit{stmt}_{\mathit{td}} \ttf{\}}         \\
\pparameters{\mathit{decls}_p}                   &&\pdata{\mathit{decls}_d} \\
  \kwf{transformed\ parameters} & \equiv    &\pparameters{\mathit{decls}_p}\\
  \qquad\qquad\ttf{\{}{\mathit{decls}_{\mathit{tp}}~\mathit{stmt}_{\mathit{tp}}}\ttf{\}} && \kwf{model}\ \ttf{\{}\\
\pmodel{\mathit{decls}_m~\mathit{stmt}_m}        &&      \quad \mathit{decls}_{\mathit{td}}~\mathit{decls}_{\mathit{tp}}~\mathit{decls}_m~\mathit{decls}_g \\
  \kwf{generated\ quantities}&& \quad \mathit{stmt}_{\mathit{td}}'~\mathit{stmt}_{\mathit{tp}}'~\mathit{stmt}_m'~\mathit{stmt}_g' \\
  \qquad\qquad\ttf{\{}{\mathit{decls}_g~\mathit{stmt}_g}\ttf{\}}&&\ttf{\}}\\
\end{array}
$$
\end{small}

\noindent
Functions declared in \stan{functions} are inlined~($\mathit{stmt}'$ is equivalent to $\mathit{stmt}$ after inlining).
To simplify the presentation, we focus on this simplified language.

\paragraph{Notations.}
To refer to the different parts of a program, we will use the following functions.
For a Stan program $p = \pdata{\mathit{decls}_d}~\pparameters{\mathit{decls}_p}~\pmodel{\mathit{decls}_m~\mathit{stmt}}$:
\[  \secdata{p} = \mathit{decls}_d \quad
  \secparams{p} = \mathit{decls}_p \quad
  \secmodel{p} = \mathit{stmt}
\]

In the following, an environment $\gamma: \mathit{Var} \rightarrow \mathit{Val}$ is a mapping from variables to values,  
$\gamma(x)$ returns the value of the variable~$x$ in an environment~$\gamma$, $\gamma[x \leftarrow v]$ returns the environment $\gamma$ where the value of~$x$ is set to~$v$, and $\gamma_1, \gamma_2$ denotes the union of two environments.

The notation $\int_X \mu(dx) f(x)$ is the integral of~$f$ w.r.t. the measure~$\mu$ where~$x \in X$ is the integration variable. When $\mu$ is the Lebesgue measure we also write  $\int_X f(x) dx$.

\medskip
Following~\cite{GorinovaGS19}, we define the semantics of the model block as a deterministic function that takes an initial environment containing the input data and the parameters, and returns an updated environment where the value of the variable \stan{target} is the un-normalized log-density of the model.

We can then define the semantics of a Stan program as a \emph{kernel}~\cite{kozen_1981, staton_et_al_2016, staton_2017}, that is, a function $\psem{p}: \mathcal{D} \rightarrow \Sigma_{X} \rightarrow [0, \infty]$ where $\Sigma_{X}$ denotes the $\sigma$-algebra of the parameter domain~$X$, that is, the set of measurable sets of the product space of parameter values.
Given an environment~$D$ containing the input data, $\sem{p}_D$ is a \emph{measure} that maps a measurable set of parameter values~$U$ to a score in $[0, \infty]$ obtained by integrating the density of the model, $\exp(\kwf{target})$, over all the possible parameter values in~$U$.
\begin{small}
$$
\psem{p}_D
~=~
\lambda U. \int_U \exp(\sem{\secmodel{p}}_{D[\secparams{p} \leftarrow \theta]}(\kwf{target}))\ d\theta
$$
\end{small}

\noindent
Given the input data, the posterior distribution of a Stan program is obtained by normalizing the measure $\psem{p}_D$.
As the integrals are often intractable, the runtime uses approximate inference schemes to compute the posterior distribution.

\medskip

We now detail the semantics of statements and expressions in a model block.
This formalization is similar to the semantics proposed in~\cite{GorinovaGS19} but expressed denotationally.

\begin{figure}[t]
  \vspace{-0.5em}
  \begin{small}
  $$
  \def\arraystretch{1.4}
  \hspace*{-3mm}\begin{array}{l@{\;}c@{\;\;}l}
  \sem{\sassign{x}{e}}_\gamma &=&
    \gamma[x \leftarrow \sem{e}_\gamma]
  \\
  \sem{\sassign{\eaccess{x}{e_1\ttf{,} ...\ttf{,} e_n}}{e}}_\gamma &=&
    \gamma[x \leftarrow (\eaccess{x}{\sem{e_1}_\gamma\ttf{,} ...\ttf{,} \sem{e_n}_\gamma} \leftarrow \sem{e}_\gamma) ]
  \\
  \sem{\sseq{s_1}{s_2}}_\gamma &=&
    \sem{s_2}_{\sem{s_1}_\gamma}
  \\
  \multicolumn{3}{l}{
  \sem{\sfor{x}{e_1}{e_2}{s}}_\gamma =
  }\\[0.em]
  \multicolumn{3}{l}{\qquad
    {\def\arraystretch{1.1}\begin{array}[t]{@{}l}
      \aletin{n_1}{\sem{e_1}_\gamma} \ 
      \aletin{n_2}{\sem{e_2}_\gamma} \\
      \mathit{if}\ {n_1 > n_2}\ \mathit{then}\ {\gamma}\ 
      \mathit{else}\ \sem{\sfor{x}{n_1 + 1}{n_2}{s}}_{\sem{s}_{\gamma[x \leftarrow n_1]}}\\
    \end{array}}
  }
  \\
  \sem{\swhile{e}{s}}_\gamma &=&
    {\def\arraystretch{1.1}\begin{array}[t]{@{}l}
      \mathit{if}\ \sem{e}_\gamma = 0\ \mathit{then}\ {\gamma}\
      \mathit{else}\ \sem{\swhile{e}{s}}_{\sem{s}_\gamma}
    \end{array}}
  \\
  \sem{\sif{e}{s_1}{s_2}}_\gamma &=&
    \aif{\sem{e_1}_\gamma \not= 0}{\sem{s_1}_\gamma}{\sem{s_2}_\gamma}
  \\
  \sem{\sskip}_\gamma &=&
    \gamma
  \\
  \sem{\starget{e}}_\gamma &=&
    \gamma[\kwf{target} \leftarrow \gamma(\kwf{target}) + \sem{e}_\gamma]
  \\
  \sem{\stilde{e_1}{e_2}}_\gamma &=&
\aletin{D}{\sem{e_2}_\gamma}
      \sem{\starget{D_{\mathit{lpdf}}(e_1)}}_\gamma
\end{array}  $$
  \end{small}
  \vspace{-1.em}
  \caption{Semantics of statements}
  \label{fig:ssem}
\end{figure}

\paragraph{Statements.}
The semantics of a statement $\sem{s}: (\mathit{Var} \rightarrow \mathit{Val}) \rightarrow (\mathit{Var} \rightarrow \mathit{Val})$ is a function from an environment $\gamma$ to an updated environment.
\Cref{fig:ssem} gives the semantics of Stan statements.
The initial environment contains the input data, the parameters, and the reserved variable \stan{target} initialized to \stan{0}.
An assignment updates the value of a variable or of a cell of an indexed structure in the environment.
A sequence $\sseq{s_1}{s_2}$ evaluates $s_2$ in the environment produced by $s_1$.
A \stan{for} loop on ranges first evaluates the value of the bounds~$n_1$ and~$n_2$ and then repeats the execution of the body $1 + n_2 - n_1$ times.
Iterations over indexed structures~($\sforeach{x}{e}{s}$) are syntactic sugar over loops on ranges.
The behavior depends on the underlying type.
For vectors and arrays, iteration is limited to one dimension.
\begin{small}
$$
\begin{array}{l@{\;}c@{\;\;}l}
  \sem{\sforeach{x}{e}{s}}_\gamma &=&
    \begin{array}[t]{@{}l}
      \aletin{v}{\sem{e}_\gamma} \qquad \text{($i$ is a fresh variable)}\\
      \sem{\sfor{i}{1}{\mathit{length}(v)}{\sseq{\sassign{x}{\eaccess{v}{i}}}{s}}}_\gamma
    \end{array}
\end{array}
$$
\end{small}
For matrices, iteration is over the two dimensions:
\begin{small}
$$
\begin{array}{l}
  \sem{\sforeach{x}{e}{s}}_\gamma = \qquad\qquad\text{($i$ and $j$ are fresh variables)}\\ \qquad
    \begin{array}[t]{@{}l@{}}
      \aletin{v}{\sem{e}_\gamma} \\
      \sem{\begin{array}{@{}l@{}}
        \efor{}{i}{1}{\mathit{length}(v)}{} \\
            {\qquad \sfor{j}{1}{\mathit{length}(\eaccess{v}{i})}{\sseq{\sassign{x}{\eaccess{\eaccess{v}{i}}{j}}}{s}}}
      \end{array}}_\gamma
    \end{array}
\end{array}
$$
\end{small}

\noindent
A \stan{while} loop repeats the execution of its body while the condition is not~$0$.
An \stan{if} statement executes one of the two branches depending on the value of the condition.
A \stan{skip} leaves the environment unchanged.
A statement $\starget{e}$ adds the value of~$e$ to \stan{target} in the environment.
Finally, a statement $\stilde{e_1}{e_2}$ evaluates the expression~$e_2$ into a probability distribution~$D$ and updates the target with the value of the log-density of~$D$ at~$e_1$.

\begin{figure}[t]
  \centering
  \begin{small}
  $$
  \def\arraystretch{1.4}
  \begin{array}{r@{\;}c@{\;}l@{\;\;}r@{\;}c@{\;}l}
  \sem{c}_\gamma &=& c
  &
  \sem{\earray{e_1\ttf{,}...\ttf{,}e_n}}_\gamma &=&
    \earray{\sem{e_1}_\gamma\ttf{,}...\ttf{,}\sem{e_n}_\gamma}
  \\
  \sem{x}_\gamma &=& \gamma(x)
  &
  \sem{\evector{e_1\ttf{,}...\ttf{,}e_n}}_\gamma &=&
    \evector{\sem{e_1}_\gamma\ttf{,}...\ttf{,}\sem{e_n}_\gamma}
  \\
  \sem{\eaccess{e_1}{e_2}}_\gamma &=& \eaccess{\sem{e_1}_\gamma}{\sem{e_2}_\gamma}
  &
  \sem{\ecall{f}{e}}_\gamma &=& f(\sem{e}_\gamma)
  \end{array}
  $$
\end{small}
  \vspace{-1.em}
  \caption{Semantics of expressions}
  \label{fig:esem}
\end{figure}

\paragraph{Expressions.} The semantics of an expression $\sem{e}: (\mathit{Var} \rightarrow \mathit{Val}) \rightarrow \mathit{Val}$ is a function from a environment to values. 
\Cref{fig:esem} gives the semantics of Stan expressions.
Constants evaluate to themselves.  Variables are looked up in the  environment.
Arrays, vectors, and matrix expressions evaluate all their components.
Indexing expressions obtain the corresponding value in the associated data.
Function calls apply the function to the value of the arguments.
Functions are built-ins like~\stan{+} or \stan{normal} (user-defined functions are inlined).

\paragraph{Limitations}
We consider only terminating programs which means in particular that all loops perform a bounded number of iterations.
We also limit the access and update of \stan{target} to the statements $\starget{e}$ and $\stilde{e_1}{e_2}$.

\begin{assumption}
  \label{hyp:term}
  All programs terminate.
\end{assumption}
\begin{assumption}
  \label{hyp:target}
  Expressions cannot depend on \stan{target}.
\end{assumption}

\subsection{GProb: a Simple Generative PPL}
\label{sec:formal_generative_language}

To formalize the compilation, we first define the target language: GProb, a simple generative probabilistic language  similar to the one defined in~\cite{staton_2017}.
GProb is an expression language with the following syntax:
\begin{small}
$$
\begin{array}{l@{\ \ }r@{\ \ }l}
e & ::= & 
          c \mid
          x \mid
          \earray{e_1\ttf{,}...\ttf{,}e_n} \mid
          \evector{e_1\ttf{,}...\ttf{,}e_n} \mid
          \eaccess{e_1}{e_2} \mid
          \ecall{f}{e_1\ttf{,}...\ttf{,}e_n} \\& \mid~&
          \elet{x}{e_1}{e_2} \mid
          \elet{\eaccess{x}{e_1\ttf{,} ...\ttf{,} e_n}}{e}{e'} \\& \mid~&
          \eif{e}{e_1}{e_2} \mid
          \efor{\mathcal{X}}{x}{e_1}{e_2}{e_3} \mid
          \ewhile{\mathcal{X}}{e_1}{e_2} \\& \mid~&
          \efactor{e} \mid
          \esample{e} \mid
          \ereturn{e}
\end{array}
$$
\end{small}

\noindent
An expression is either a Stan expression, a local binding~(\stan{let}), a conditional~(\python{if}), or a loop~(\python{for} or \python{while}).
To simplify the presentation, loops are parameterized by the set~$\mathcal{X}$ of variables updated and returned by their body.
Moreover, we limit the definition of the semantics to terminating loops that do not depend on sampled values in the body of the loop.
GProb also contains the classic probabilistic expressions: \stan{sample} draws a sample from a distribution, and \stan{factor} assigns a score to the current execution trace to condition the model.
The \python{return} expression lifts a deterministic expression to a probabilistic context.

We also introduce \stan{observe($D$,$v$)} as a syntactic shortcut for \stan{factor($D_{\rm{pdf}}$(v))} where $D_{\rm{pdf}}$ denotes the density function of~$D$.
This construct penalizes the current execution with the likelihood of~$v$ w.r.t.~$D$ which captures the assumption that the observed data~$v$ follows the distribution~$D$.

\begin{figure}[t]
  \begin{small}
  \centering
  \[
  \def\arraystretch{1.8}
  \begin{array}{@{}l@{\;}c@{\;\;}l}
    \psem{\ereturn{e}}_\gamma & = & \lambda U. \, \delta_{\sem{e}_\gamma}(U)
    \\

    \psem{\elet{x}{e_1}{e_2}}_\gamma & = &
      \lambda U. \displaystyle\int_X\psem{e_1}_\gamma(dv) \times \psem{e_2}_{\gamma[x \leftarrow v]}(U)
    \\

    \multicolumn{3}{@{}l}{
    \psem{\elet{\eaccess{x}{e_1\ttf{,} ...\ttf{,} e_n}}{e}{e'}}_\gamma =
    } \\[-.5em]
    \multicolumn{3}{@{}l}{\quad
      \lambda U. \displaystyle\int_{X}\psem{e}_\gamma(dv) \times \psem{e'}_{\gamma[x \leftarrow (\eaccess{x}{\sem{e_1}_\gamma\ttf{,} ...\ttf{,} \sem{e_n}_\gamma} \leftarrow v )]}(U)
    }
    \\
\multicolumn{3}{@{}l}{
    \psem{\efor{\mathcal{X}}{x}{e_1}{e_2}{e_3}}_\gamma =
    } \\[-.5em]
    \multicolumn{3}{@{}l}{\quad
      \lambda U. {\def\arraystretch{1.4}\begin{array}[t]{@{}l}
                   \aletin{n_1}{\sem{e_1}_\gamma} ~
                   \aletin{n_2}{\sem{e_2}_\gamma} \\
                   \mathit{if}\ {n_1 > n_2}\ \mathit{then}\ \delta_{\gamma(\mathcal{X})}(U)\\
                   \mathit{else}\! \displaystyle\int_X 
                   {\def\arraystretch{1.4}\begin{array}[t]{@{}l}
                   \psem{e_3}_{\gamma[x \leftarrow n_1]}(dv) \ \times \\ 
                   \psem{\efor{\mathcal{X}}{x}{n_1 + 1}{n_2}{e_3}}_{\gamma[\mathcal{X} \leftarrow v]}(U)
                  \end{array}}
                 \end{array}}
    }
    \\

    \psem{\ewhile{\mathcal{X}}{e_1}{e_2}}_\gamma & = &
\\[-.5em]
    \multicolumn{3}{@{}l}{\quad
      \lambda U. {\def\arraystretch{1.4}\begin{array}[t]{@{}l}
                   \mathit{if}\ \sem{e_1}_\gamma = 0\ \mathit{then}\ \delta_{\gamma(\mathcal{X})}(U)\\
                   \mathit{else}\! \displaystyle\int_X \psem{e_2}_\gamma(dv) \times \psem{\ewhile{\mathcal{X}}{e_1}{e_2}}_{\gamma[\mathcal{X} \leftarrow v]}(U)
                 \end{array}}
    }
    \\

    \psem{\eif{e}{e_1}{e_2}}_\gamma & = &
      \lambda U. 
      {\def\arraystretch{1.4}\begin{array}[t]{@{}l}
        \mathit{if}\ \sem{e}_\gamma \not= 0 
        \ \mathit{then}\ \psem{e_1}_\gamma(U)\\
        \mathit{else}\ \psem{e_2}_\gamma(U)
    \end{array}}
    \\

    \psem{\esample{e}}_\gamma & = & \lambda U. \sem{e}_\gamma(U) \\

    \psem{\efactor{e}}_\gamma & = & \lambda U. \exp(\sem{e}_\gamma) \delta_{\eunit}(U) \\
  \end{array}
  \]
\end{small}
  \vspace{-1.em}
  \caption{Generative probabilistic language semantics}
  \label{fig:pplsem}
\end{figure}

\paragraph{Semantics.}
Following~\cite{staton_2017} we give a measure-based semantics to GProb.
The semantics of an expression is a kernel that given an environment returns a measure on the set of possible values.
Given input data, normalizing the corresponding measure computes the program's posterior distribution.

The semantics of GProb is given in \Cref{fig:pplsem}.
A deterministic expression is lifted to a probabilistic expression with the Dirac delta measure ($\delta_x(U)= 1$ if $x \in U$, $0$ otherwise).
A local definition $\elet{x}{e_1}{e_2}$ is interpreted by integrating the semantics of $e_2$ over the set of all possible values for~$x$.

Compared to the language defined in~\cite{staton_2017}, we added Stan-like loops.
Loops behave like a sequence of expressions and return the values of the variables updated by their body.
We impose that the condition of a loop cannot depend on parameters sampled in the loop body, and consider only terminating loops.
Hence for any given context $\gamma$, it suffices to unroll the  definition of the loop semantics a finite number of times to get a measure term describing the semantics.

Finally, the semantics of probabilistic operators is the following.
The semantics of \stan{sample($e$)} is the probability distribution $\sem{e}_\gamma$ (e.g. $\mathcal{N}(0, 1)$). A type system, omitted here for conciseness, ensures that we only sample from distributions.
The semantics of \stan{factor($e$)} is a measure defined on the singleton space \eunit\ whose value is ~$\exp(\sem{e})$ (this operator corresponds to $\kwf{score}$ in~\cite{staton_2017} but in log-scale, which is common for numerical precision).

\subsection{Comprehensive Translation}\label{sec:formal_comprehensive_translation}

The key idea is to first
sample all parameters from priors with a constant density that can be normalized away during inference (e.g., $\mathit{uniform}$ on bounded domains),
and then compile all~\stan{~} statements into \stan{observe} statements.

The compilation functions~$\dcomp{k}{\secparams{p}}$ for the parameters and~$\scomp{k}{\secmodel{p}}$ for the model are both parameterized by a continuation~$k$.
The compilation of the entire program first compiles the parameters to introduce the priors, then compiles the model, and finally adds a return statement for all the parameters. In continuation passing style:
\begin{small}
$$
\pcomp{p} = \dcomp{\scomp{\ereturn{\secparams{p}}}{\secmodel{p}}}{\secparams{p}}
$$
\end{small}

\begin{figure}
  \begin{small}
  \centering
  $$
  \def\arraystretch{1.3}
  \begin{array}{@{}l@{\;}c@{\;\;}l}
    \dcomp{k}{t\ \mathit{cstr}\ x \ttf{;}}
    & = & \elet{x}{\esample{\kcomp{\mathit{cstr}}{\ttf{[]}}}}{k}
    \\
    \dcomp{k}{t\ \mathit{cstr}\ \tarray{x}{\mathit{shape}} \ttf{;}}
    & = &
    \elet{x}{\esample{\kcomp{\mathit{cstr}}{\mathit{shape}}}}{k}
    \\
    \dcomp{k}{\mathit{decl}\ \mathit{decls}}
    & = &
    \dcomp{\dcomp{k}{\mathit{decls}}}{\mathit{decl}}
    \\
    \kcomp{\varepsilon}{\mathit{shape}}
    & = & \ecall{\ImproperUniform}{[-\infty, \infty]\ttf{,}{\mathit{shape}}}
    \\
    \kcomp{\ttf{<}\ttf{lower}\ttf{=}e_1\ttf{>}}{\mathit{shape}}
    & = & \ecall{\ImproperUniform}{[e_1, \infty]\ttf{,}{\mathit{shape}}}
    \\
    \kcomp{\ttf{<}\ttf{upper}\ttf{=}e_2\ttf{>}}{\mathit{shape}}
    & = & \ecall{\ImproperUniform}{[-\infty, e_2]\ttf{,}{\mathit{shape}}}
    \\
\multicolumn{3}{@{}l}{
    \kcomp{\ttf{<}\ttf{lower}\ttf{=}e_1\ttf{,}\ttf{upper}\ttf{=}e_2\ttf{>}}{\mathit{shape}}
    =  \ecall{\mathtt{uniform}}{[e_1, e_2]\ttf{,}{\mathit{shape}}}
}
  \end{array}
  $$
  \vspace{-0.5em}
  \caption{Comprehensive compilation of parameters}
  \label{fig:comp-params}
\end{small}
\end{figure}

\begin{figure}\
  \begin{small}
  \centering
  $$
  \def\arraystretch{1.3}
  \begin{array}{@{}l@{\;}c@{\;\;}l}
    \scomp{k}{\sassign{x}{e}} & = & \elet{x}{\ereturn{e}}{k} \\
    \scomp{k}{\sassign{\eaccess{x}{e_1\ttf{,} ...\ttf{,} e_n}}{e}} & = &
      \elet{\eaccess{x}{e_1\ttf{,} ...\ttf{,} e_n}}{e}{k} \\
    \scomp{k}{\sseq{\mathit{s_1}}{\mathit{s_2}}} & = &
      \scomp{\scomp{k}{\mathit{s_2}}}{\mathit{s_1}} \\
    \multicolumn{3}{@{}l}{
    \scomp{k}{\sfor{x}{e_1}{e_2}{\mathit{s}}} =
    }\\
    \multicolumn{3}{@{}l}{\qquad
      \elet{\mathit{lhs}(\mathit{s})}{\efor{\mathit{lhs}(\mathit{s})}{x}{e_1}{e_2}{\scomp{\ereturn{\mathit{lhs}(\mathit{s})}}{\mathit{s}}}}{k}
    } \\
    \multicolumn{3}{@{}l}{
    \scomp{k}{\swhile{e}{\mathit{s}}} =
    }\\
    \multicolumn{3}{@{}l}{\qquad
      \elet{\mathit{lhs}(\mathit{s})}{\ewhile{\mathcal{\mathit{lhs}(\mathit{s})}}{e}{\scomp{\ereturn{\mathit{lhs}(\mathit{s})}}{\mathit{s}}}}{k}
    }\\
    \scomp{k}{\sif{e}{\mathit{s}_1}{\mathit{s}_2}} & = &
      \eif{e}{\scomp{k}{\mathit{s}_1}}{\scomp{k}{\mathit{s}_2}} \\
    \scomp{k}{\sskip} & = & k \\
    \scomp{k}{\starget{e}} & = &
       \elet{\eunit}{\efactor{e}}{k} \\
    \scomp{k}{\stilde{e}{\ecall{f}{e_1\ttf{,}...\ttf{,}e_n}}} & = &
        \elet{\eunit}{\eobserve{\ecall{f}{e_1\ttf{,}...\ttf{,}e_n}}{e}}{k} \\
  \end{array}
  $$
  \end{small}
  \vspace{-1.em}
  \caption{Comprehensive compilation of statements}
  \label{fig:comp}
\end{figure}

\paragraph{Parameters.}
In Stan, parameters are defined on $\mathbb{R}^n$ with optional domain constraints (e.g. \stan{<lower=0>}).
For each parameter, the comprehensive translation sets the prior to either the \emph{uniform} distribution on a bounded domain, or an improper prior with a constant density w.r.t.\ the Lebesgue measure that we call \ImproperUniform.
The compilation function of the parameters, defined \Cref{fig:comp-params}, thus produces a succession of sample expressions: 
\begin{small}
$$
\dcomp{k}{\secparams{p}} ~=~ \elet{x_1}{D_1}{\dots\elet{x_n}{D_n}{k}}
$$
\end{small}
where each~$D_i$ is either \python{uniform} or \python{improper_uniform}.

\paragraph{Statements.}
\Cref{fig:comp} defines compilation for
statements, $\scomp{k}{\mathit{stmt}}$, parameterized by a continuation~$k$.
Stan imperative assignments become functional updates using local bindings.
Compiling the sequences chains the continuations.
Updates to the target are compiled into \python{factor} expressions and all
\stan{~} statements are compiled into observations. 

The compilation of Stan loops raises an issue.
In Stan, all the variables that appear on the left-hand side of an assignment in the body of a loop are state variables that are incrementally updated at each iteration.
Since GProb is a functional language, the state of the loops is made explicit.
To propagate the environment updates at each iteration, loop expressions are annotated with all the variables that are assigned in their body~($\mathit{lhs}(\mathit{stmt})$).
These variables are returned at the end of the loop and can be used in the continuation.

\paragraph{Pre- and post-processing blocks.}
\label{sec:compil-all}
\Cref{sec:sem_stan} showed that all Stan programs can be rewritten in a kernel using only the \stan{data}, \stan{parameters}, and \stan{model} sections. This approach can make the model more complicated and thus, the inference more expensive. 
In particular, pre- and post-processing steps do not need to be computed at each inference step.

In Stan, users can define functions in the \stan{functions} block.
These functions can be compiled into functions in the target language using the comprehensive translation. 

The code in the \stan{transformed} \stan{data} section only depends on variables declared in the \stan{data} section and can be computed only once before the inference.
We compile this section into a function that takes as argument the data and returns the transformed data.
The variables declared in the \stan{transformed} \stan{data} section become new inputs for the model.

On the other hand, the \stan{transformed} \stan{parameters} block must be part of the model since it depends on the model parameters.
This section is thus inlined in the compiled model.

Finally, the \stan{generated} \stan{quantities} block is executed only once on the inference result.
It is compiled into a function of the data, transformed data, and parameters returned by the model. 
The \stan{transformed} \stan{parameters} block is also inlined, since generated quantities may depend on them.

\subsection{Correctness of the Compilation}
\label{sec:correctness}

We can now show that a Stan program and the corresponding compiled code yield the same un-normalized measure up to a constant factor (and thus the same posterior distribution).
The proof has two steps: (1)~simplifying the sequence of \python{sample} statements introduced by the compilation of the parameters, and (2)~showing that the value of the Stan \stan{target} corresponds to the score computed by the generated code.

\paragraph{Priors.}
First, we simplify the nested integrals introduced by the sequence of \python{sample} statements for the parameters priors into one integral over the parameter domain.

\begin{lemma}
  \label{lem:params}
  For all Stan programs $p$ with $\mathit{stmt} = \secmodel{p}$ and $\mathcal{P} = \secparams{p}$, and environments~$\gamma$:
\begin{small}
  $$
  \psem{\pcomp{p}}_\gamma \propto
    \lambda U. \!\!\int_U\! \psem{\scomp{\ereturn{\eunit}}{\mathit{stmt}}}_{\gamma[\mathcal{P} \leftarrow \theta]}(\{\eunit\})d\theta
  $$
\end{small}

\noindent
where $U \in \Sigma_{X}$ is a measurable set of parameter values, with $X = \dom{\mathcal{P}}$.
\end{lemma}

The proof relies on the fact that the parameters are sampled from the distributions \python{uniform} or \python{improper_uniform} which both have constant density w.r.t. the Lebesgue measure on their domain.
These constants introduce a constant ratio~($\propto$) between the two measures.
In addition, since parameters cannot appear on the left-hand side of an assignment we can simplify the \python{return} statement.
The detailed proof is given in \ifextended \Cref{apx:correctness}\else \cite{deepstan-extended}\fi.

\paragraph{Score and \texttt{target}.}
We now show that the value of the Stan \stan{target} variable corresponds to the score computed by the generated code.

\begin{lemma}
  \label{lem:body}
  For all Stan statements $\mathit{stmt}$ compiled with a continuation $k$,
  if $\gamma(\kwf{target}) = 0$, and $\sem{\mathit{stmt}}_\gamma = \gamma'$,
$$
  \psem{\scomp{k}{\mathit{stmt}}}_{\gamma} = \lambda U.
  \exp(\gamma'(\kwf{target})) \times \psem{k}_{\gamma'[\kwf{target} \leftarrow 0]}(U)
  $$
\end{lemma}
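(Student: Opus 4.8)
The plan is to proceed by structural induction on $\mathit{stmt}$, using the continuation-passing structure of $\scomp{k}{\cdot}$. The unifying idea is that every construct that touches $\kwf{target}$ on the Stan side compiles to a $\efactor{\cdot}$ (directly for $\starget{e}$, via $\eobserve{}{}$ for $\stilde{}{}$), and the GProb semantics of $\elet{\eunit}{\efactor{e}}{k}$ multiplies the continuation's measure by $\exp(\sem{e}_\gamma)$ — the unit binding is never used by $k$, so the integral collapses to this product. Since $\exp$ turns the additive accumulation of $\kwf{target}$ into a multiplicative accumulation of scores, the factors emitted along an execution multiply to exactly $\exp(\gamma'(\kwf{target}))$, which is the claimed right-hand side.

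First I would clear the base cases. For $\sskip$ and the two assignment forms no factor is emitted and $\gamma'(\kwf{target}) = \gamma(\kwf{target}) = 0$, so $\exp(\gamma'(\kwf{target})) = 1$ and $\elet{x}{\ereturn{e}}{k}$ reduces by the Dirac rule to $\psem{k}$ in the updated environment, matching the goal. For $\starget{e}$ and $\stilde{e_1}{e_2}$ the single emitted factor contributes precisely $\exp(\gamma'(\kwf{target}))$, because $\gamma'$ differs from $\gamma$ only by that one $\kwf{target}$ increment (and $\gamma'[\kwf{target}\leftarrow 0] = \gamma$); the only thing to check is that the log-score recorded in $\kwf{target}$ agrees with the log-score of the compiled factor, which is immediate from the two semantics. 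The conditional case follows by applying the induction hypothesis to the branch selected by $\sem{e}_\gamma$.

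The sequence $\sseq{s_1}{s_2}$, compiled to $\scomp{\scomp{k}{s_2}}{s_1}$, is where the bookkeeping is delicate. Applying the hypothesis to $s_1$ with continuation $\scomp{k}{s_2}$ peels off $\exp(\gamma_1(\kwf{target}))$ (where $\gamma_1 = \sem{s_1}_\gamma$) and hands $\scomp{k}{s_2}$ an environment with $\kwf{target}$ reset to $0$. To apply the hypothesis to $s_2$ I need its premise $\kwf{target} = 0$ — now satisfied — and I need that running $s_2$ from a zeroed $\kwf{target}$ yields the same non-$\kwf{target}$ environment as running it from $\gamma_1$, with final $\kwf{target}$ values differing by the constant $\gamma_1(\kwf{target})$. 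This \emph{offset-invariance} — that $\kwf{target}$ acts as a write-only accumulator that never feeds back into control flow or other variables — is the key auxiliary fact; I would state and prove it as a separate lemma by an easy structural induction. The two per-phase factors then combine through $\exp(a)\,\exp(b) = \exp(a+b)$ into $\exp(\gamma'(\kwf{target}))$.

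The main obstacle is the loops. For $\sfor{x}{e_1}{e_2}{s}$ the compiler threads only the assigned variables $\mathit{lhs}(s)$ through a GProb for-loop whose body carries the continuation $\ereturn{\mathit{lhs}(s)}$, whereas the Stan semantics threads the whole environment. I would prove a dedicated loop lemma by induction on the iteration count $n_2 - n_1 + 1$: the base case $n_1 > n_2$ gives $\delta_{\gamma(\mathcal{X})}$ with no factor, and the step runs the body once — invoking the statement-level hypothesis with continuation $\ereturn{\mathit{lhs}(s)}$ to get a factor times a Dirac on the updated state — and then integrates against the measure for the remaining iterations. Two points need care: (i) variables outside $\mathit{lhs}(s)$ are genuinely untouched by the body, so threading only $\mathcal{X} = \mathit{lhs}(s)$ loses nothing, which follows from the definition of $\mathit{lhs}$; and (ii) the per-iteration $\kwf{target}$ increments must be offset-invariant so their exponentials multiply to $\exp$ of the total accumulation — precisely where the auxiliary lemma is reused. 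The $\swhile{e}{s}$ case is identical, with the induction driven by the termination assumption implicit in $\sem{\mathit{stmt}}_\gamma = \gamma'$ so that the iteration count is well-defined.
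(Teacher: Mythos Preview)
Your proposal is correct and follows the same approach as the paper: structural induction on $\mathit{stmt}$, with the sequence case relying on exactly the offset-invariance property you isolate as an auxiliary lemma (the paper states it inline as $\sem{\mathit{stmt}}_{\gamma[\kwf{target}\leftarrow t]}(\kwf{target}) = t + \sem{\mathit{stmt}}_{\gamma[\kwf{target}\leftarrow 0]}(\kwf{target})$, leaving the non-$\kwf{target}$ part implicit). Your treatment is in fact more thorough than the paper's, which details only assignment, $\starget{e}$, and sequence and omits the loop cases you carefully outline via an inner induction on the iteration count.
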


The proof is done by induction on the structure of~$\mathit{stmt}$ and the finite number of loops iterations~(\Cref{hyp:term}).
The hypothesis $\gamma(\kwf{target}) = 0$ simplifies the induction by avoiding to keep an accumulator of the value of $\kwf{target}$.
Resetting the value of target in the environment $\gamma'[\kwf{target} \leftarrow 0]$ for the evaluation of the continuation~$k$ is thus necessary for the inductive step.
The proof is given in \ifextended \Cref{apx:correctness}\else \cite{deepstan-extended}\fi.

\paragraph{Correctness.}
We now have all the elements to prove that the comprehensive compilation is correct.
That is, generated code yields the same un-normalized measure up to a constant factor that will be normalized away by the inference.

\begin{theorem}
For all Stan programs $p$, the semantics of the source and compiled programs are equal up to a constant:

\begin{small}
$$
\psem{p}_D \propto \psem{\pcomp{p}}_{D}
$$
\end{small}
\end{theorem}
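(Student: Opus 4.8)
The plan is to chain the two preceding lemmas together, reading both at the data environment $\gamma = D$. First I would invoke \Cref{lem:params} with $\gamma = D$, which rewrites the left-hand side as
$$
\psem{\pcomp{p}}_D = \lambda U. \int_U \psem{\scomp{\ereturn{\eunit}}{\secmodel{p}}}_{D, \theta}(\{\eunit\})\, d\theta.
$$
This collapses the nested integrals over the individual parameter priors into a single Lebesgue integral over the parameter domain $X$, and it is precisely this step that introduces the multiplicative constant (the product of the constant densities of the \emph{Uniform} and \emph{ImproperUniform} priors w.r.t. the Lebesgue measure), which is what the $\propto$ in the theorem records.

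Next I would apply \Cref{lem:body} to the model body, taking $\mathit{stmt} = \secmodel{p}$ and continuation $k = \ereturn{\eunit}$. The hypothesis $\gamma(\kwf{target}) = 0$ holds because the initial environment always initializes \kwf{target} to $0$. Writing $\gamma' = \sem{\secmodel{p}}_{D, \theta}$, the lemma gives
$$
\psem{\scomp{\ereturn{\eunit}}{\secmodel{p}}}_{D, \theta} = \lambda U. \exp(\gamma'(\kwf{target})) \times \psem{\ereturn{\eunit}}_{\gamma'[\kwf{target} \leftarrow 0]}(U).
$$
I would then evaluate this measure at the singleton $U = \{\eunit\}$, which is exactly the argument appearing inside the integral above. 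By the semantics of \kwf{return} we have $\psem{\ereturn{\eunit}}_{\gamma'[\kwf{target} \leftarrow 0]}(\{\eunit\}) = \delta_{\eunit}(\{\eunit\}) = 1$, so the integrand reduces to $\exp(\gamma'(\kwf{target})) = \exp(\sem{\secmodel{p}}_{D, \theta}(\kwf{target}))$. Substituting back yields
$$
\psem{\pcomp{p}}_D \propto \lambda U. \int_U \exp(\sem{\secmodel{p}}_{D, \theta}(\kwf{target}))\, d\theta,
$$
whose right-hand side is exactly the definition of $\psem{p}_D$ from \Cref{sec:sem_stan}, which completes the argument.

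Since the substantive content is already discharged inside the two lemmas, the theorem is mostly a matter of composing them correctly, and I do not expect a hard technical core here. The one point that requires genuine care is the bookkeeping of the proportionality constant: I must confirm that the factor introduced by \Cref{lem:params} is independent of $U$ — it is a product of prior normalization constants, hence a single global scalar — so that it survives as a purely multiplicative constant and is washed out when the posterior is recovered by normalization. A secondary subtlety is justifying the evaluation at the singleton $\{\eunit\}$: I must check that the continuation $\ereturn{\eunit}$ contributes only a Dirac mass at \eunit\ and does not disturb the accumulated score, which follows from the semantics of \kwf{return} together with the fact (used already in \Cref{lem:params}) that the model statements never update the parameters $\theta$.
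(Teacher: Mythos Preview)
Your proposal is correct and follows essentially the same route as the paper: invoke \Cref{lem:params} at $\gamma = D$ to collapse the nested prior integrals (absorbing the constant prior densities into the $\propto$), then apply \Cref{lem:body} with $k = \ereturn{\eunit}$ and evaluate at $\{\eunit\}$ to recover $\exp(\sem{\secmodel{p}}_{D,\theta}(\kwf{target}))$, matching the definition of $\psem{p}_D$. Your additional remarks on why the proportionality constant is independent of $U$ and on the initial value of \kwf{target} are sound and simply make explicit what the paper leaves implicit.
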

\begin{proof}
  The proof is a direct consequence of \Cref{lem:params,lem:body} and the definition of the two semantics.
  With $\mathit{stmt} = \secmodel{p}$ and $\mathcal{P} = \secparams{p}$:

  \begin{small}
  $$
  \def\arraystretch{1.8}
  \begin{array}{@{}l}
    \psem{\pcomp{p}}_{D} \propto
    \lambda U.\!\!
    \displaystyle\int_U\!\psem{\scomp{\ereturn{\eunit}}{\mathit{stmt}}}_{D[\mathcal{P} \leftarrow \theta]}(\{\eunit\})\ d\theta
    \\ =
    \lambda U.\!\!
    \displaystyle\int_U\!
                    {\def\arraystretch{1.4}\begin{array}[t]{@{}l}
                       \exp(\sem{\mathit{stmt}}_{D[\mathcal{P} \leftarrow \theta]}(\kwf{target})) 
                       \times \psem{\ereturn{\eunit}}(\{\eunit\})\ d\theta
                     \end{array}}
    \\ =
    \lambda U.\!\!
    \displaystyle\int_U\!\exp(\sem{\mathit{stmt}}_{D[\mathcal{P} \leftarrow \theta]}(\kwf{target}))\ d\theta
    \;\; = \;
    \psem{p}_{D}
    \vspace{-0.75em}
  \end{array}
  $$
\end{small}
\end{proof}

 \section{Implementation}
\label{sec:implem}

We implemented two new backends for the Stan compiler targeting Pyro~\cite{bingham_et_al_2019} and NumPyro~\cite{numpyro_2019}.
NumPyro is a variant of Pyro built on top of JAX~\cite{jax2018github}, a Python library that provides efficient automatic differentiation, vectorization, and just-in-time compilation on CPU, GPU, and TPU.

For both backends, we have implemented three compilation schemes:
generative~(\Cref{sec:generative_translation}),
comprehensive~(\Cref{sec:comprehensive_translation}),
and mixed.

\paragraph{Mixed Compilation}
The \emph{mixed} compilation scheme is an optimization of the comprehensive translation where proper priors are used whenever possible.
The mixed compilation can thus generate code that is similar to the generative translation whenever possible.

The mixed translation can be decomposed into three steps:
first, compile the program with the comprehensive scheme;
second, using the commutativity theorem of~\cite{staton_2017}, reschedule $\esample{\mathit{uniform}}$ statements as late as possible and reschedule $\eobserve{D}{x}$ statements as early as possible; and
third, merge consecutive \python{sample} and \python{observe} statements using the following property:
\begin{small}
$$
\begin{array}{@{}ll}
\elet{x}{\esample{\mathit{uniform}}}{\elet{\eunit}{\eobserve{D}{x}}{e}}
\\  \qquad \equiv
\elet{x}{\esample{\mathit{D}}}{e}
\end{array}
$$
\end{small}

In Stan, distribution are automatically truncated based on the parameter support as in the following model.
\begin{lststan}
parameters { real<lower=0> sigma; }
model { sigma ~ normal(0, 1); }
\end{lststan}

\noindent
The merge between the \python{sample} and \python{observe} statements is thus only correct if the two distributions have the same support.
We extended the signature of Stan distributions to include the definition domain which can be used to check if the merge is possible.

Finally, the mixed compilation generates correct code even if the~\stan{~} statements do not respect the dependency order such as in the following example.
\begin{lststan}
  y ~ normal(x, 1); x ~ normal(0, 1); ...
\end{lststan}
The mixed compilation reschedules the statements to generate the following code which does not break any environment update:
\begin{lststan}
  let x = sample(normal(0, 1)) in
  let y = sample(normal(x, 1)) in ...
\end{lststan}

\paragraph{Architecture}
We implemented the compiler as a fork of the Stanc3 compiler\footnote{\url{https://github.com/stan-dev/stanc3}}, thus guarantying compatibility with the official Stan syntax and existing static analyses.
The Stanc3 compiler is composed of multiple intermediate languages.
We decided to implement the new backends for the first internal language which is the closest to the Stan source.
The implementation is thus closer to the formalization, making it easier to keep track of the correspondence between the Stan source and the generated Pyro and NumPyro code.
In particular, in Pyro and NumPyro all sampling sites~(corresponding to \stan{sample} and \stan{observe}) are associated to a unique name which can be used for diagnostics and results analyses.
We use Stan variable names with an optional postfix to preserve uniqueness when necessary.
For example, in loops, the postfix tracks the current iteration.

\paragraph{Compiling to Pyro.}
The compiler addresses common challenges like name handling.
Pyro and Stan naming conventions are different (e.g., \stan{lambda} is a common parameter name in Stan) and Pyro has a shared namespace, while Stan distinguishes variables and functions.
The compiler carefully avoids conflicts by renaming.
Moreover, Stan supports function overloading.
The compiler uses static type information to disambiguate function calls by renaming.
Finally, there are semantics differences like one-based vs. zero-based arrays.

Stan has a large standard library that also has to be ported to Pyro.
Our implementation currently supports a substantial portion of, but not the entire, standard library.
Even though Pyro is built on top of Python and thus benefits from a large set of packages, it is not straightforward to implement all Stan functions.
The Python counterpart sometimes also has typing or semantic differences.
For example, the Stan Bernoulli distribution returns an integer and the Pyro one a float.
The differences are handled either in the library or in the compiler.
The categorical distribution, which is defined on ${[1, N]}$ in Stan and on ${[0, N-1]}$ in Pyro, illustrates both aspects.
The translation of categories is done in the library for a call to the log probability mass function:
\begin{lstpython}
def categorical_lpmf(y, theta):
  return Categorical(theta).log_prob(y - 1)
\end{lstpython}
and the compiler translates~(\stan{y ~ categorical(theta)}) into
\begin{lstpython}
  observe(categorical(theta), y - 1)
\end{lstpython}

Pyro does not require type declarations, but preserving the shape information of the indexes structures~(arrays, vectors, row vectors, matrices) is important. 
Parameter shapes are passed as arguments to the priors' initialization.

Finally, compared to GProb, Pyro is a Python library.
The compiler can thus use Python imperative features and side-effects for in-place mutation of arrays.
However, inference cannot run on models with arbitrary side-effects.
For instance, we need to introduce explicit copies when array cells are updated inside loops.

\paragraph{Compiling to NumPyro.}
The NumPyro backend shares the Pyro backend's challenges and has additional constraints coming from JAX.
Dynamic features like dynamic array slices are not supported and will fail during inference.
Control structures~(conditional and loops) are library functions where the body must be passed in as a pure function.
Our compiler accomplishes this by lambda-lifting the bodies of the control structures.
This is similar to the compilation of Stan loops to GProb described in \Cref{sec:formal_comprehensive_translation} where the updated variables are given explicitly.
Returning to the coin example~(\Cref{fig:coin_stan}), the compiled NumPyro code using the mixed compilation scheme is thus
\begin{lstpython}
def model(N, x):
  z = sample(beta(1, 1))
  def fori__2(i, acc):
    observe(bernoulli(z), x[i - 1])
  _ = fori_loop(1, N + 1, fori__2, None)
\end{lstpython}

NumPyro loops are a recent feature which can have a noticeable performance impact.
If the body of a loop does not contain probabilistic constructs, we thus generate a JAX loop instead of a NumPyro one.

 \section{Extending Stan: explicit variational guides and neural networks}
\label{sec:deepstan}

Probabilistic languages like Pyro offer new features to program and reason about complex models.
This section shows that our compilation scheme can be used to lift these benefits for Stan users.
Building on Pyro, we propose DeepStan, a conservative extension of Stan with:
(1) variational inference with high-level but explicit guides, and
(2) a clean interface to neural networks written in PyTorch.

\subsection{Explicit variational guides}
\label{sec:guides}

Variational Inference~(VI) tries to find the member~$q_{\theta^*}(z)$ of a family~$\mathcal{Q} = \big\{q_{\theta}(z)\big\}_{\theta \in \Theta}$ of simpler distributions that is the closest to the true posterior $p(z \por \mathbf{x})$~\cite{blei_kucukelbir_mcauliffe_2017}.
Members of the family $\mathcal{Q}$ are characterized by the values of the \emph{variational parameters}~$\theta$.
The fitness of a candidate is measured using the Kullback-Leibler~(KL) divergence from the true posterior, which VI aims to minimize:
\begin{small}
$$
q_{\theta^*} (z)= \argmin_{\theta \in \Theta} \mbox{KL}\Big(q_{\theta}(z) \matpar p(z \por \mathbf{x})\Big).
$$
\end{small}

Pyro natively supports variational inference and lets users define the
family $\mathcal{Q}$ (the \emph{variational guide}) alongside the model.
To support this for Stan users, we extend Stan with two new optional blocks: \stan{guide} \stan{parameters} and \stan{guide}.
The \stan{guide} block defines a distribution parameterized by the \stan{guide} \stan{parameters}.
Variational inference optimizes the values of these parameters to approximate the true posterior.

DeepStan inherits restrictions for the definition of the guide from Pyro:
the guide must be defined on the same parameter space as the model, i.e., it must sample all the parameters of the model;
and
the guide should also describe a distribution from which we can directly generate valid samples without running the inference first, which prevents the use of non-generative features and updates of \stan{target}.
The generative translation from \Cref{sec:generative_translation} generates a Python function that can serve as a Pyro guide.
The \stan{guide} \stan{parameters} block is used to generate Pyro \python{param} statements, which introduce learnable parameters.
Unlike Stan parameters that define random variables for use in the model, guide parameters are learnable coefficients that will be optimized during inference.

These restrictions still allow sophisticated guides.
The following section presents a guide defined by a neural network.

\subsection{Adding neural networks}
\label{sec:nn}

One important advantage of Pyro is its tight integration with PyTorch, enabling the authoring of \emph{deep probabilistic models}: probabilistic models involving neural networks.
It is impractical to define neural networks directly in Stan.
To support deep probabilistic models, we extend Stan with an optional \stan{networks} block to import neural network definitions.

Neural networks can be used to capture intricate dynamics between random variables.
An example is the \emph{Variational Auto-Encoder} (VAE) illustrated in \cref{fig:vae}.
A VAE learns a vector-space representation~$z$ for each observed data
point~$x$ (e.g., the pixels of an image)~\cite{kingma_welling_2013,rezende_mohamed_wierstra_2014}.
Each data point~$x$ depends on the latent representation~$z$ in a complex non-linear way, via a deep neural network: the \emph{decoder}.
The leftmost part of \cref{fig:vae} shows the corresponding graphical model.
The output of the decoder is a vector~$\mu$ that parameterizes a Bernoulli distribution over each dimension of~$x$ (e.g., each pixel associated to its probability of being in the image).

\begin{figure}
\begin{minipage}{0.41\linewidth}
    \hspace{-1em}
    \scalebox{0.65}{
    \begin{tikzpicture}

  \node[latent] (z1) {$z$};
  \factor[below=of z1, yshift=-0.25cm, xshift=-1cm] {decoder} {right:\texttt{decoder}} {} {};
  \node[const, left=of decoder, xshift=0.5cm] (theta) {$\theta\,$};
  \node[const, below=of decoder, yshift=0.4cm, xshift=0.25cm] (mu) {$\,\mu$};
  \factor[below=of decoder, yshift=-1cm] {bernoulli} {right:Bernoulli} {} {};
  \node[obs, below=of bernoulli, xshift=1cm, yshift=0.25cm] (x1) {$x$};

  \edge[-] {theta} {decoder};
  \edge[-latex] {z1} {decoder} ;
  \edge[-latex] {decoder} {bernoulli};
  \factoredge {} {bernoulli} {x1};

  \plate{model} { (x1)(z1)(decoder)(bernoulli)
    (decoder-caption) (bernoulli-caption)
  } {$N$};

  \node[below=of model, yshift=0.9cm, xshift=-0.25cm] (model-caption) {model $p_\theta (\mathbf{x} \por \mathbf{z})$};

  \node[latent, right=of z1] (z2) {$z$};
  \factor[below=of z2, yshift=-0.25cm, xshift=1cm] {normal} {left:Normal} {} {};
  \factor[below=of normal, yshift=-1cm] {encoder} {left:\texttt{encoder}} {} {};
  \node[const, below=of normal, yshift=0.4cm, xshift=-0.55cm] (musigma) {$\mu_z, \sigma_z$};
  \node[const, right=of encoder, xshift=-0.5cm] (phi) {$\phi$};
  \node[obs, below=of encoder, xshift=-1cm, yshift=0.25cm] (x2) {$x$};

  \edge[-] {phi} {encoder};
  \edge[-latex] {x2} {encoder};
  \edge[-latex] {encoder} {normal};
  \factoredge {} {normal} {z2};

  \plate{guide} { (x2)(z2)(encoder)(normal)
    (encoder-caption) (normal-caption)
  } {$N$};

  \node[below=of guide, yshift=0.9cm, xshift=0.25cm] (model-caption) {guide $q_\phi (\mathbf{z} \por \mathbf{x})$};
  \end{tikzpicture}}
\end{minipage}
\begin{minipage}{0.5\linewidth}
\begin{lststantablesmall}
networks {
  real[,] decoder(real[] x);
  real[,] encoder(int[,] x); }
data {
  int nz;
  int<lower=0, upper=1> x[28, 28]; }
parameters {
  real z[nz]; }
model {
  real mu[28, 28];
  z ~ normal(0, 1);
  mu = decoder(z);
  x ~ bernoulli(mu); }
guide {
  real encoded[2, nz] = encoder(x);
  real mu_z[nz] = encoded[1];
  real sigma_z[nz] = encoded[2];
  z ~ normal(mu_z, sigma_z); }
\end{lststantablesmall}
  \end{minipage}
  \vspace{-0.5em}
  \caption{Graphical models and DeepStan code of the Variational Auto-Encoder model and guide.}
  \label{fig:vae}
  \vspace{-0.5em}
\end{figure}

The key idea of the VAE is to use variational inference to learn the latent representation.
The guide maps each~$x$ to a latent variable~$z$ via another neural network: the \emph{encoder}.
The middle part of \cref{fig:vae} shows the graphical model of the guide.
The encoder returns, for each input~$x$, the parameters~$\mu_z$ and~$\sigma_z$ of a Gaussian distribution in the latent space.
Inference tries to learn good values for the parameters~$\theta$ and~$\phi$, simultaneously training the decoder and the encoder. 

The right part of \cref{fig:vae} shows the corresponding code in DeepStan.
A network is introduced similarly to an external function with its signature and must be implemented in PyTorch.
The network can be used in subsequent blocks, in particular the \stan{model} block and the \stan{guide} block.

\subsection{Bayesian networks}
\label{sec:mlp}

\begin{figure}
\begin{minipage}{0.25\linewidth}
    \hspace{-1em}
    \scalebox{0.65}{
      \begin{tikzpicture}

    \node[latent] (x) {$x$};
    \factor[below=of x, yshift=-0.5cm] {mlp} {right:\texttt{mlp}} {} {};
    \node[latent, left=of mlp] (theta) {$\theta$};
    \node[const, below=of mlp, yshift=0.4cm, xshift=0.25cm] (mu) {$\lambda$};
    \factor[below=of mlp, yshift=-1cm] {categorical} {right:Cat.} {} {};
    \node[obs, below=of categorical] (l) {$l$};

    \edge[-latex] {theta} {mlp};
    \edge[-latex] {x} {mlp} ;
    \edge[-latex] {mlp} {categorical};
    \factoredge {} {categorical} {l};

    \plate{model} { (x)(l)(mlp)(categorical)
      (mlp-caption) (categorical-caption)
    } {$N$};

    \node[below=of theta, yshift=0.9cm] (model-caption) {$p (\theta \por \mathbf{x}, \mathbf{l})$};
    \end{tikzpicture}}
  \end{minipage}
  \begin{minipage}{0.74\linewidth}
\begin{lststantablesmall}
networks { vector mlp(real[,,] imgs); }
data {
 int batch_size; int nx; int nh; int ny;
 real <lower=0, upper=1> imgs[28,28,batch_size];
 int <lower=1, upper=10> labels[batch_size]; }
parameters {
  real mlp.l1.weight[nh, nx]; real mlp.l1.bias[nh];
  real mlp.l2.weight[ny, nh]; real mlp.l2.bias[ny]; }
model {
  vector[batch_size] lambda;
  mlp.l1.weight ~  normal(0, 1);
  mlp.l1.bias ~ normal(0, 1);
  mlp.l2.weight ~ normal(0, 1);
  mlp.l2.bias ~  normal(0, 1);
  lambda = mlp(imgs);
  labels ~ categorical_logit(lambda); }
guide parameters {
  real w1_mu[nh, nx]; real w1_sigma[nh, nx];
  real b1_mu[nh]; real b1_sigma[nh];
  real w2_mu[ny, nh]; real w2_sigma[ny, nh];
  real b2_mu[ny]; real b2_sigma[ny]; }
guide {
  mlp.l1.weight ~ normal(w1_mu, exp(w1_sigma));
  mlp.l1.bias   ~ normal(b1_mu, exp(b1_sigma));
  mlp.l2.weight ~ normal(w2_mu, exp(w2_sigma));
  mlp.l2.bias   ~ normal(b2_mu, exp(b2_sigma)); }
\end{lststantablesmall}
  \end{minipage}
  \vspace{-0.5em}
  \caption{Graphical models and DeepStan code of the Bayesian MLP.}
  \label{fig:mlp}
  \vspace{-0.5em}
\end{figure}

Neural networks can also be treated as probabilistic models.
A \emph{Bayesian neural network} is a neural network whose learnable parameters (weights and biases) are random variables instead of concrete values~\cite{neal_2012}.
Building on Pyro features, we make it easy for users to \emph{lift} neural networks, i.e., replace concrete neural network parameters by random variables.

The left side of \cref{fig:mlp} shows a simple classifier for handwritten digits based on a multi-layer perceptron~(MLP) where all the parameters are lifted to random variables. Unlike the networks used in the VAE, the parameters (regrouped under the variable~$\theta$) are represented using a circle to indicate random variables.  The inference starts from prior beliefs about the parameters and learns distributions that fit observed data. We then generate samples of concrete weights and biases to obtain an ensemble of as many MLPs as desired. The ensemble can vote for predictions and can quantify agreement.

The right of \cref{fig:mlp} shows the corresponding code in DeepStan.
We let users declare lifted neural network parameters in Stan's
\stan{parameters} block just like any other random variables. Network
parameters are identified by the name of the network and a path, e.g.,
\stan{mlp.l1.weight}, following PyTorch naming conventions.
The \stan{model} block defines
\stan{normal(0,1)} priors for the weights and biases of the two linear
layers of the MLP. Then, for each image, the computed label follows a
categorical distribution parameterized by the output of the network,
which associates a probability to each of the ten possible values of
the discrete random variable~\stan{label}. The \stan{guide} \stan{parameters}
define $\mu$ and $\sigma$, and the \stan{guide} block uses those
parameters to propose normal distributions for the model parameters.

\paragraph{Compiling Bayesian neural networks.}

To lift neural networks, we use Pyro
\python{random_module}, a primitive that takes a PyTorch network and a dictionary
of distributions and turns the network into a distribution of
networks where each parameter is sampled from the corresponding
distribution.  We treat network parameters as any other random
variables and apply the comprehensive translation from
\Cref{sec:comprehensive_translation}. This translation
initializes parameters with a uniform prior.
\begin{lstpythons}
priors = {}
priors['l1.weight'] = improper_uniform(shape=[nh, nx])
... # priors of the other parameters
lifted_mlp = pyro.random_module('mlp', mlp, priors)()
\end{lstpythons}
Then, the Stan \stan{~} statements in the \stan{model} block are compiled into Pyro \python{observe} statements.
\begin{lstpythons}
mlp_params = dict(lifted_mlp.named_parameters())
observe(normal(0, 1), mlp_params['l1.weight'])
\end{lstpythons}

It is also possible to mix probabilistic parameters and
non-probabilistic parameters. Our translation only lifts the parameters that are
declared in the \stan{parameters} block by only adding those to the
\python{priors} dictionary.

\section{Evaluation}
\label{sec:evaluation}

We presented our compilation scheme and described the implementation of two new backends for the Stanc3 compiler, targeting Pyro and NumPyro.
This section evaluates our compilation scheme and the proposed extensions.

\subsection{Compiling Stan to Pyro}
First we focus on compiling classic Stan models to Pyro and NumPyro.
We consider three questions:

\begin{description}
\item[RQ1:] Can we compile and run all Stan models?
\item[RQ2:] What is the impact of the compilation on accuracy?
\item[RQ3:] What is the impact of the compilation on speed?
\end{description}

\noindent
To answer these, we used two publicly available benchmark suites: the \texttt{example-models}\footnote{\url{https://github.com/stan-dev/example-models}}
 repository and PosteriorDB~\cite{posteriordb},
a database of Stan models with corresponding data, reference posterior samples, and the configuration used to obtain these samples with Stan.
The experiments were run on a Linux server with 64~cores (2.10GHz, 40GB RAM) with the latest version of Pyro~(1.5.0), NumPyro~(0.4.1), and cmdstanpy~(0.9.67), without GPUs.
The code of the experiments is available at \url{https://github.com/deepppl/evaluation}.

\paragraph{RQ1: Generality of the compilation.}
We run our compiler on the~$541$ models of the \texttt{example-models} repository.
Stanc3 semantics checks reject 10 models.
Out of the remaining~$531$, we were able to compile $522$ models with the comprehensive and mixed compilation schemes for both the Pyro and NumPyro backends, but only~$166$ with the generative scheme.
This further validates the need of our comprehensive translation.
The~$9$ failures all involve truncations, a feature that is not natively supported in Pyro.

To test the inference, we run 1 iteration on the 98 pairs (models, data) of PosteriorDB that can be compiled with Stanc3.
\Cref{tab:eval_compile} presents results for the three compilation schemes: comprehensive, mixed, and generative.

The mixed optimization has no influence on the results.
Failures with the Pyro backend are caused by missing standard library functions that are complicated to port to Pyro.
As discussed in \Cref{sec:implem}, the NumPyro backend relies on JAX, which limits what can be expressed in the model.
The additional errors all involve dynamic features that are not supported in JAX.

As a baseline, we run the same experiment with the generative translation. 
As expected, compilation fails on 60 models. 
The additional runtime errors are the same as for the comprehensive and mixed translations.

\begin{table}
\caption{Successful inference run for 98 PosteriorDB models.}
\label{tab:eval_compile}
\begin{small}
\vspace{-1em}
\begin{tabular}{@{}l rrr@{}}
     & \textsc{Compr.} & \textsc{Mixed} & \textsc{Gener.}\\
\toprule
\textsc{Pyro} & 87 & 87 & 36\\
\textsc{NumPyro} & 83 & 83 & 35\\
\bottomrule
\end{tabular}
\end{small}
\vspace{-1em}
\end{table}

\begin{table*}
  \caption{Comparing inference results with PosteriorDB references}
  \label{tab:eval_accuracy}
\begin{small}
  \vspace{-1em}
  \begin{tabular}{@{}lll@{ }rl@{ }rl@{ }rl@{ }rl@{ }lr@{}}
 
&&&&\multicolumn{2}{c}{\textsc{Pyro}}&\multicolumn{6}{c}{\textsc{NumPyro}}
\\
\cmidrule(lr){5-6}
\cmidrule(lr){7-12}
\textsc{Model} & \textsc{Dataset} & \multicolumn{2}{c}{\textsc{Stan}} &\multicolumn{2}{c}{\textsc{Compr.}} & \multicolumn{2}{c}{\textsc{Compr.}} & \multicolumn{2}{c}{\textsc{Mixed}} & \multicolumn{2}{c}{\textsc{Gener.}} & \textsc{~~~Speedup}\\
    \toprule
    accel\_gp &            mcycle\_gp &     \smark &      00:18:22 &                     \emark &                          &                        \emark &                             &                \emark &                     &                     \emark &                          &      \\
    arK &                  arK &     \smark &      00:00:57 &                   \smark &                    45:39:30 &                      \smark &                       00:00:38 &              \smark &               00:00:37 &                   \smark &                    00:00:34 &    1.48 \\
 arma11 &                 arma &     \smark &      00:01:36 &                   \smark &                    02:19:07 &                      \smark &                       00:00:42 &              \smark &               00:21:46 &                   \smark &                    00:18:53 &    2.26 \\
   dogs &                 dogs &     \smark &      00:01:06 &                   \smark &                    29:12:04 &                      \smark &                       00:06:22 &              \smark &               00:06:12 &                   \smark &                    00:06:09 &    0.17 \\
dogs\_log &                 dogs &     \smark &      00:00:32 &                   \smark &                    23:56:20 &                      \smark &                       00:03:43 &              \smark &               00:03:34 &                     \emark &                          &    0.14 \\
earn\_height &             earnings &     \smark &      00:01:18 &                   \smark &                    01:07:03 &                      \smark &                       00:00:15 &              \smark &               00:00:15 &                     \emark &                          &    5.04 \\
eight\_schools\_centered &        eight\_schools &     \smark &      00:00:05 &                   \smark &                    00:27:09 &                      \smark &                       00:00:07 &              \smark &               00:00:06 &                   \smark &                    00:00:06 &    0.69 \\
eight\_schools\_noncentered &        eight\_schools &     \smark &      00:00:01 &                   \smark &                    00:17:36 &                      \smark &                       00:00:06 &              \smark &               00:00:06 &                  \mmark &                    00:00:06 &    0.19 \\
garch11 &                garch &     \smark &      00:00:17 &                  \mmark &                    20:20:05 &                     \mmark &                       00:02:07 &             \mmark &               00:02:04 &                     \emark &                          &      \\
gp\_regr &         gp\_pois\_regr &     \smark &      00:00:02 &                     \emark &                          &                        \emark &                             &                \emark &                     &                     \emark &                          &      \\
hmm\_drive\_0 &  bball\_drive\_event\_0 &     \smark &      00:03:50 &                   \smark &                   108:34:15 &                      \smark &                       00:25:42 &              \smark &               00:25:38 &                     \emark &                          &    0.15 \\
hmm\_example &          hmm\_example &     \smark &      00:00:28 &                   \smark &                    08:57:44 &                      \smark &                       00:01:02 &              \smark &               00:01:02 &                     \emark &                          &    0.46 \\
kidscore\_interaction &                kidiq &     \smark &      00:01:42 &                   \smark &                    01:40:32 &                      \smark &                       00:00:13 &              \smark &               00:00:13 &                     \emark &                          &    7.80 \\
kidscore\_interaction\_c2 &  kidiq\_with\_mom\_work &     \smark &      00:00:10 &                   \smark &                    00:08:58 &                      \smark &                       00:00:06 &              \smark &               00:00:06 &                     \emark &                          &    1.62 \\
kidscore\_mom\_work &  kidiq\_with\_mom\_work &     \smark &      00:00:14 &                   \smark &                    00:12:16 &                      \smark &                       00:00:09 &              \smark &               00:00:09 &                     \emark &                          &    1.66 \\
kidscore\_momhs &                kidiq &     \smark &      00:00:05 &                   \smark &                    00:12:05 &                      \smark &                       00:00:06 &              \smark &               00:00:06 &                     \emark &                          &    0.86 \\
kidscore\_momhsiq &                kidiq &     \smark &      00:00:28 &                   \smark &                    00:42:54 &                      \smark &                       00:00:08 &              \smark &               00:00:08 &                     \emark &                          &    3.32 \\
kidscore\_momiq &                kidiq &     \smark &      00:00:13 &                   \smark &                    00:30:28 &                      \smark &                       00:00:07 &              \smark &               00:00:07 &                     \emark &                          &    1.82 \\
kilpisjarvi &      kilpisjarvi\_mod &     \smark &      00:00:59 &                   \smark &                    12:12:26 &                      \smark &                       00:00:21 &              \smark &               00:00:21 &                     \emark &                          &    2.87 \\
logearn\_height &             earnings &     \smark &      00:01:19 &                   \smark &                    00:59:53 &                      \smark &                       00:00:15 &              \smark &               00:00:15 &                     \emark &                          &    5.29 \\
logearn\_height\_male &             earnings &     \smark &      00:03:45 &                   \smark &                    01:36:57 &                      \smark &                       00:00:23 &              \smark &               00:00:23 &                     \emark &                          &    9.81 \\
logearn\_logheight\_male &             earnings &     \smark &      00:14:27 &                   \smark &                    06:19:24 &                      \smark &                       00:01:15 &              \smark &               00:01:15 &                     \emark &                          &   11.59 \\
logmesquite\_logvas &             mesquite &     \smark &      00:00:14 &                   \smark &                    00:52:51 &                      \smark &                       00:00:08 &              \smark &               00:00:08 &                     \emark &                          &    1.84 \\
lotka\_volterra &     hudson\_lynx\_hare &     \smark &      00:03:06 &                     \emark &                          &                        \emark &                             &                \emark &                     &                     \emark &                          &      \\
mesquite &             mesquite &     \smark &      00:00:15 &                   \smark &                    00:59:55 &                      \smark &                       00:00:08 &              \smark &               00:00:08 &                     \emark &                          &    1.90 \\
    nes &              nes1980 &     \smark &      00:03:36 &                   \smark &                    00:50:02 &                      \smark &                       00:00:15 &              \smark &               00:00:15 &                     \emark &                          &   14.02 \\
    nes &              nes1976 &     \smark &      00:06:59 &                   \smark &                    00:54:46 &                      \smark &                       00:00:20 &              \smark &               00:00:21 &                     \emark &                          &   20.56 \\
    nes &              nes1972 &     \smark &      00:07:58 &                   \smark &                    00:50:51 &                      \smark &                       00:00:25 &              \smark &               00:00:25 &                     \emark &                          &   19.07 \\
    nes &              nes2000 &     \smark &      00:02:41 &                   \smark &                    00:56:39 &                      \smark &                       00:00:13 &              \smark &               00:00:13 &                     \emark &                          &   12.14 \\
    nes &              nes1996 &     \smark &      00:06:38 &                   \smark &                    00:55:58 &                      \smark &                       00:00:22 &              \smark &               00:00:22 &                     \emark &                          &   18.35 \\
one\_comp\_mm\_elim\_abs & one\_comp\_mm\_elim\_abs &     \smark &      00:16:10 &                     \emark &                          &                        \emark &                             &                \emark &                     &                     \emark &                          &      \\
\bottomrule
\end{tabular}
\\[1em]
    \smark~match, \mmark~mismatch, \emark~error. Durations are reported in \textsc{hh:mm:ss} format.
    \textsc{Speedup} = \textsc{Stan} / \textsc{NumPyro Compr}.
\end{small}
\end{table*}

\paragraph{RQ2: Accuracy.}
To evaluate inference accuracy we compare posterior distributions with the criteria used by regression tests for Stan:\footnote{\url{https://github.com/stan-dev/performance-tests-cmdstan}}
For each parameter, we check if the error between the means is less than $30\%$ of the standard deviation of the reference.
For multidimensional parameters we check the same property for every component: 
\begin{small}
  \begin{center}
$
  | \textrm{mean}(\theta_{\textrm{ref}}) - \textrm{mean}(\theta)| < 0.3 \, \textrm{stddev}(\theta_{\textrm{ref}}).
$
  \end{center}
\end{small}

PosteriorDB provides reference samples for $49$ pairs (model, dataset).  Using Stan with the same configuration (iterations, warmups, chains, thinning, seed), only~$31$ pairs pass the accuracy test and are thus valid baselines for our evaluation.
We run using the Pyro and NumPyro implementation of NUTS with the same configuration\footnote{The configuration interface for NUTS in Pyro and NumPyro is similar to  the CmdStanPy interface.}, and compare the results with the reference posteriors (NUTS, the No U-Turn Sampler~\cite{homan2014nuts}, is an optimized HMC and Stan's preferred inference method). 
\Cref{tab:eval_accuracy} summarizes the results (additional results on the 49 examples are given in \ifextended \Cref{apx:evaluation}\else \cite{deepstan-extended}\fi).

Most of the models yield posterior distributions that match the reference.
The four remaining errors are due to the following missing functions in our implementation of the standard library: \python{cov_exp_quad} (accel\_gp and gp\_regr) and ODE solvers (lotka\_volterra and one\_comp\_mm\_elim\_abs).
The mismatch (garch11) is due to a constraint that we do not know how to compile in Pyro/NumPyro (the domain of a parameter is constrained by the value of another one).

\Cref{tab:eval_accuracy} shows that NUTS in Pyro is much slower than its NumPyro counterpart on our examples.
Due to the high computational cost of running inference in Pyro (more than 100h for hmm\_drive\_0), we focus on the NumPyro backend to compare the different compilation schemes.
Results show no difference between the comprehensive compilation scheme and the mixed version.
When the generative translation is possible, the results also match except for one example (eight\_schools\_noncentered) due to a parameter constraint that is not propagated in the model~(see \Cref{sec:implem}).

As discussed in \Cref{sec:implem}, the mixed compilation scheme can recover the code produced by the generative translation when possible.
\Cref{tab:eval_accuracy} shows that the extra priors introduced by the translation have no impact on the accuracy of the inference when using NUTS.
However, these priors could play a critical role for other inference schemes, e.g., the importance sampling algorithm.

\paragraph{RQ3: Speed.}
To compare inference speed, \Cref{tab:eval_accuracy} reports average runtime for Stan and NumPyro over five runs with varying seed values.
For obvious computational cost reasons, we only report the duration of one Pyro run.
As in RQ2, iterations, warmups, chains, and thinning configurations are given in PosteriorDB.

\Cref{tab:eval_accuracy} shows that the runtime of DeepStan with the NumPyro backend is competitive with Stan under all three compilation schemes.
In addition, runtime durations for the models compiled with the mixed, comprehensive, and generative scheme are almost identical when inference succeeds.
These results indicate that the chosen compilation scheme has negligible influence on inference speed. 
Moreover, as shown in the last column, the NumPyro backend speeds up most benchmarks compared to the highly optimized Stan inference engine (geometric mean: \speedup x on \nbbench benchmarks).

Two examples (eight\_schools-eight\_schools\_centered and arma11)  are very sensitive to random seed variations in Stan and NumPyro (relative standard deviation ${\mathit{std}/\mathit{mean} > 1}$). For all other examples $\mathit{std}/\mathit{mean} \leq 0.1$.

Some of the benchmarks involve nested loops that are still experimental in NumPyro (arK, dogs, dogs\_log, hmm\_drive\_0, hmm\_example).
For these example NumPyro is typically slower that Stan.
If we exclude them we get an overall speedup of 3.8x on 21 benchmarks.

For \Cref{tab:eval_accuracy} we pre-compiled the models to focus on inference time.
The average compilation time for our compiler with both backends was 0.3s (std: 0.02) compared to 10.5s (std: 5.1) for Stan.
The NumPyro backend thus outperforms Stan in both compilation and inference speed for these examples.

\subsection{Stan Extensions}

This section evaluates DeepStan, our extension with explicit variational guides and support for deep probabilistic programming with neural networks.
We consider two questions:

\pagebreak

\begin{description}
  \item[RQ4:] Are explicit variational guides useful?
  \item[RQ5:] For deep probabilistic models, how does DeepStan compare to hand-written Pyro code?
\end{description}

\paragraph{RQ4: Explicit guides.}
The \emph{multimodal} example shown in \cref{fig:multimodal} is a mixture of two Gaussian distributions with different means but identical variances.
The first two histograms of \cref{fig:multimodal} show that in both Stan and DeepStan, this example is particularly challenging for NUTS.
Using multiple chains, NUTS finds the two modes, but the chains do not mix and the relative densities are incorrect.
This is a known limitation of HMC.\footnote{\url{https://mc-stan.org/users/documentation/case-studies/identifying_mixture_models.html}}

Stan also offers ADVI~\cite{advi17jmlr}, an implementation of black-box VI where guides are automatically synthesized from the model using a mean-field approximation.
This choice implies that ADVI cannot approximate multi-modal distribution as illustrated in the last histogram of \cref{fig:multimodal}.
On the other hand, using the custom variational guide presented in \Cref{fig:multimodal}, DeepStan with VI is able to find the two modes.

\begin{figure}

\begin{minipage}[t]{0.51\linewidth}
\vspace{-3.35em}
\begin{lststantable}
parameters {
  real cluster; real theta; }
model {
  real mu;
  cluster ~ normal(0, 1);
  if (cluster > 0) mu = 20;
  else mu = 0;
  theta ~ normal(mu, 1); }
guide parameters {
  real m1; real m2;
  real<lower=0> s1;
  real<lower=0> s2; }
guide {
  cluster ~ normal(0, 1);
  if (cluster > 0) theta ~ normal(m1, s1);
  else theta ~ normal(m2, s2); }
\end{lststantable}
\end{minipage}
\begin{minipage}[t]{0.48\linewidth}
\includegraphics[scale=0.247, trim={.5cm 3.07cm 0.5cm 2.5cm}, clip]{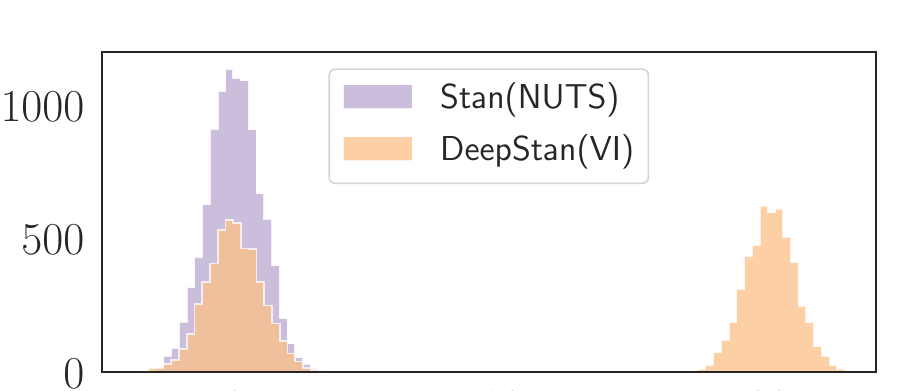}
  \includegraphics[scale=0.247, trim={.5cm 3.07cm 0.5cm 2.5cm}, clip]{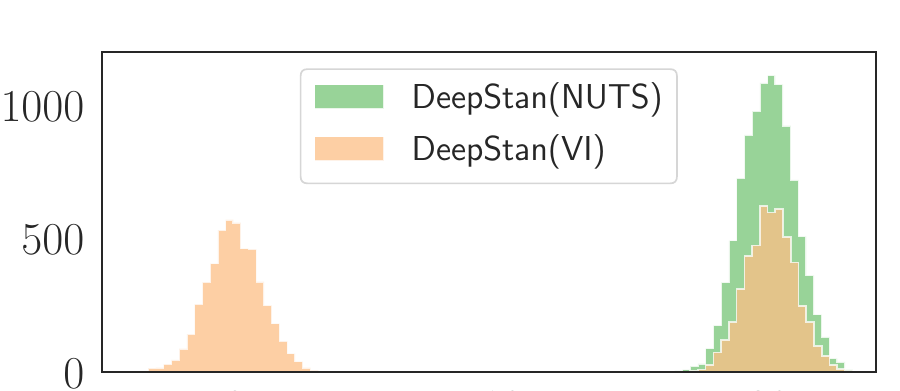}
  \includegraphics[scale=0.247, trim={.5cm 1.3cm 0.5cm 2.5cm}, clip]{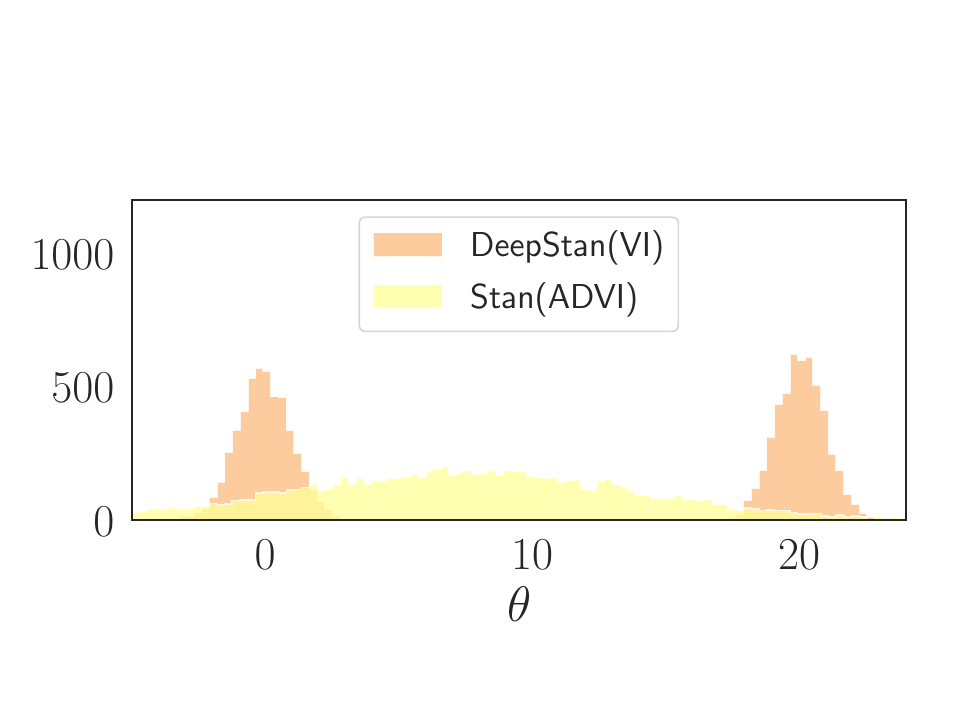}
\end{minipage}
\vspace*{-0.4em}
\caption{\label{fig:multimodal}
DeepStan code and histograms of the multimodal example using
Stan, DeepStan with NUTS, DeepStan with VI, and Stan with ADVI.}
\vspace*{-0.25em}
\end{figure}

\paragraph{RQ5: Deep probabilistic models.}\label{sec:exp_deep}
As Stan lacks support for deep probabilistic models, it cannot be used as a baseline.
Instead, we compare the performance of the compiled code with hand-written Pyro code on the VAE described in \Cref{sec:nn} and a simple Bayesian neural network.

Variational autoencoders were not designed as a predictive model but as a generative model to reconstruct images.
Evaluating the performance of a VAE is thus non-obvious.
We trained two VAEs on the MNIST dataset using VI:
one hand-written in Pyro, the other written in DeepStan.
For each image in the test set, the trained VAEs compute a latent representation of dimension~5.
We cluster these representations using KMeans with~10 clusters.
We measure VAE performance with the pairwise F1 metric:
true positives are the number of images of the same digit that appear in the same cluster.
For Pyro F1=0.41 (precision=0.43, recall=0.40), and for DeepStan F1=0.43 (precision=0.44, recall=0.42).
These numbers shows that compiling DeepStan to Pyro does not impact the performance of such deep probabilistic models.

We trained two implementations of a 2-level Bayesian multi-layer perceptron~(MLP) with the parameters all lifted to random variables (see \cref{sec:mlp}):
one hand-written in Pyro, the other written in DeepStan.
We trained both models for~20 epochs on the training set.
For each model we generated~100 samples of concrete weights and biases to obtain an ensemble MLP that can be used to compute a distribution of predicted labels.
The accuracy for both models is 92\% on the test set and the agreement between the two models is above 95\%.
The execution time is comparable.
These experiments show that compiling DeepStan models to Pyro has little impact on the model.
Changing the priors on the network parameters from \stan{normal(0,1)} to \stan{normal(0,10)} (see \Cref{sec:mlp}) increases accuracy from~0.92 to~0.96.
This further validates our compilation, compiling parameter priors to \python{observe} statements on deep probabilistic models.

 \section{Related work}\label{sec:related}

To the best of our knowledge, we propose the first comprehensive translation of Stan to a generative PPL.
The closest related work was developed by the Pyro team~\cite{chen2018transpiling}.
Their work focuses on performance and ours on completeness.
Their proposed compilation technique corresponds to the generative translation presented in \Cref{sec:generative_translation} and thus only handles a subset of Stan.
The code is not open-source, and we rely on our own implementation of the generative translation in \Cref{sec:evaluation}.
Compared to our approach, they are also looking into independence assumptions between loop iterations to generate parallel code.
Combining these ideas with our approach
is a promising future direction.
They do not extend Stan with either VI or neural networks.
Similarly, in Appendix B.2 of~\cite{GorinovaGS19}, Gorinova et al.\ outline the generative translation of \Cref{sec:generative_translation}, and also mention the issue with multiple updates but do not provide a solution.
\citet{Wonyeol_et_al_2020} introduce a density-based semantics for Pyro, but this semantics does not handle Stan's non-generative features.

The goal of compiling Stan to Pyro is to create a platform for experimenting with new ideas.
For example, \Cref{sec:guides} extends Stan with explicit variational guides.
Similarly, Pyro now offers inference on discrete parameters\footnote{https://pyro.ai/examples/enumeration.html} that we could port to Stan using our backends.

In recent years, taking advantage of the maturity of DL
frameworks, multiple deep probabilistic programming languages have
been proposed:
PyMC3~\cite{salvatier_wiecki_fonnesbeck_2015} built on top of Theano,
Edward~\cite{tran_et_al_2017} and
ZhuSuan~\cite{shi_et_al_2017} built on top of TensorFlow, and
Pyro~\cite{bingham_et_al_2019} and
ProbTorch~\cite{siddharth_et_al_2017} built on top of PyTorch.
All these languages are implemented as libraries. The users
thus need to master the entire technology stack of the library, the
underlying DL framework, and the host language.
In comparison, DeepStan is a self-contained language and the compiler helps the programmer via dedicated static analyses.

 \section{Conclusion}\label{sec:conclusion}

This paper introduces a comprehensive compilation scheme from Stan to generative probabilistic programming languages.
This shows that Stan is at most as expressive as this family of languages.
We implemented a compiler from Stan to Pyro.
Additionally, we designed and implemented extensions for Stan with
explicit variational guides and neural networks.

\paragraph{Acknowledgement}
The authors are greatful to the following people for their helpful feedback and encouragements during this work: E. Bingham, K. Kate, Y. Mroueh, F. Obermeyer, and A. Pauthier.

\balance
\bibliography{bibfile}


\begin{thebibliography}{35}


\ifx \showCODEN    \undefined \def \showCODEN     #1{\unskip}     \fi
\ifx \showDOI      \undefined \def \showDOI       #1{#1}\fi
\ifx \showISBNx    \undefined \def \showISBNx     #1{\unskip}     \fi
\ifx \showISBNxiii \undefined \def \showISBNxiii  #1{\unskip}     \fi
\ifx \showISSN     \undefined \def \showISSN      #1{\unskip}     \fi
\ifx \showLCCN     \undefined \def \showLCCN      #1{\unskip}     \fi
\ifx \shownote     \undefined \def \shownote      #1{#1}          \fi
\ifx \showarticletitle \undefined \def \showarticletitle #1{#1}   \fi
\ifx \showURL      \undefined \def \showURL       {\relax}        \fi
\providecommand\bibfield[2]{#2}
\providecommand\bibinfo[2]{#2}
\providecommand\natexlab[1]{#1}
\providecommand\showeprint[2][]{arXiv:#2}

\bibitem[\protect\citeauthoryear{Baudart, Burroni, Hirzel, Mandel, and
  Shinnar}{Baudart et~al\mbox{.}}{2021}]%
        {deepstan-short}
\bibfield{author}{\bibinfo{person}{Guillaume Baudart}, \bibinfo{person}{Javier
  Burroni}, \bibinfo{person}{Martin Hirzel}, \bibinfo{person}{Louis Mandel},
  {and} \bibinfo{person}{Avraham Shinnar}.} \bibinfo{year}{2021}\natexlab{}.
\newblock \showarticletitle{Compiling Stan to Generative Probabilistic
  Languages and Extension to Deep Probabilistic Programming}. In
  \bibinfo{booktitle}{\emph{{PLDI}}}. {ACM}.
\newblock


\bibitem[\protect\citeauthoryear{Baudart, Hirzel, and Mandel}{Baudart
  et~al\mbox{.}}{2018}]%
        {baudart_hirzel_mandel_2018}
\bibfield{author}{\bibinfo{person}{Guillaume Baudart}, \bibinfo{person}{Martin
  Hirzel}, {and} \bibinfo{person}{Louis Mandel}.}
  \bibinfo{year}{2018}\natexlab{}.
\newblock \showarticletitle{Deep Probabilistic Programming Languages: {A}
  Qualitative Study}.
\newblock \bibinfo{journal}{\emph{CoRR}}  \bibinfo{volume}{abs/1804.06458}
  (\bibinfo{year}{2018}).
\newblock


\bibitem[\protect\citeauthoryear{Bingham, Chen, Jankowiak, Obermeyer, Pradhan,
  Karaletsos, Singh, Szerlip, Horsfall, and Goodman}{Bingham
  et~al\mbox{.}}{2019}]%
        {bingham_et_al_2019}
\bibfield{author}{\bibinfo{person}{Eli Bingham}, \bibinfo{person}{Jonathan~P.
  Chen}, \bibinfo{person}{Martin Jankowiak}, \bibinfo{person}{Fritz Obermeyer},
  \bibinfo{person}{Neeraj Pradhan}, \bibinfo{person}{Theofanis Karaletsos},
  \bibinfo{person}{Rohit Singh}, \bibinfo{person}{Paul~A. Szerlip},
  \bibinfo{person}{Paul Horsfall}, {and} \bibinfo{person}{Noah~D. Goodman}.}
  \bibinfo{year}{2019}\natexlab{}.
\newblock \showarticletitle{Pyro: Deep Universal Probabilistic Programming}.
\newblock \bibinfo{journal}{\emph{J. Mach. Learn. Res.}}  \bibinfo{volume}{20}
  (\bibinfo{year}{2019}), \bibinfo{pages}{28:1--28:6}.
\newblock


\bibitem[\protect\citeauthoryear{Blei, Kucukelbir, and McAuliffe}{Blei
  et~al\mbox{.}}{2016}]%
        {blei_kucukelbir_mcauliffe_2017}
\bibfield{author}{\bibinfo{person}{David~M. Blei}, \bibinfo{person}{Alp
  Kucukelbir}, {and} \bibinfo{person}{Jon~D. McAuliffe}.}
  \bibinfo{year}{2016}\natexlab{}.
\newblock \showarticletitle{Variational Inference: {A} Review for
  Statisticians}.
\newblock \bibinfo{journal}{\emph{CoRR}}  \bibinfo{volume}{abs/1601.00670}
  (\bibinfo{year}{2016}).
\newblock


\bibitem[\protect\citeauthoryear{Bradbury, Frostig, Hawkins, Johnson, Leary,
  Maclaurin, and Wanderman-Milne}{Bradbury et~al\mbox{.}}{2018}]%
        {jax2018github}
\bibfield{author}{\bibinfo{person}{James Bradbury}, \bibinfo{person}{Roy
  Frostig}, \bibinfo{person}{Peter Hawkins}, \bibinfo{person}{Matthew~James
  Johnson}, \bibinfo{person}{Chris Leary}, \bibinfo{person}{Dougal Maclaurin},
  {and} \bibinfo{person}{Skye Wanderman-Milne}.}
  \bibinfo{year}{2018}\natexlab{}.
\newblock \bibinfo{booktitle}{\emph{{JAX}: composable transformations of
  {P}ython+{N}um{P}y programs}}.
\newblock
\urldef\tempurl%
\url{http://github.com/google/jax}
\showURL{%
\tempurl}


\bibitem[\protect\citeauthoryear{Carlin and Louis}{Carlin and Louis}{2008}]%
        {carlin2008bayesian}
\bibfield{author}{\bibinfo{person}{Bradley~P Carlin} {and}
  \bibinfo{person}{Thomas~A Louis}.} \bibinfo{year}{2008}\natexlab{}.
\newblock \bibinfo{booktitle}{\emph{Bayesian methods for data analysis}}.
\newblock \bibinfo{publisher}{CRC Press}.
\newblock


\bibitem[\protect\citeauthoryear{Carpenter, Gelman, Hoffman, Lee, Goodrich,
  Betancourt, Brubaker, Guo, Li, and Riddell}{Carpenter et~al\mbox{.}}{2017}]%
        {carpenter2017stan}
\bibfield{author}{\bibinfo{person}{Bob Carpenter}, \bibinfo{person}{Andrew
  Gelman}, \bibinfo{person}{Matthew~D Hoffman}, \bibinfo{person}{Daniel Lee},
  \bibinfo{person}{Ben Goodrich}, \bibinfo{person}{Michael Betancourt},
  \bibinfo{person}{Marcus Brubaker}, \bibinfo{person}{Jiqiang Guo},
  \bibinfo{person}{Peter Li}, {and} \bibinfo{person}{Allen Riddell}.}
  \bibinfo{year}{2017}\natexlab{}.
\newblock \showarticletitle{Stan: A probabilistic programming language}.
\newblock \bibinfo{journal}{\emph{Journal of Statistical Software}}
  \bibinfo{volume}{76}, \bibinfo{number}{1} (\bibinfo{year}{2017}),
  \bibinfo{pages}{1--37}.
\newblock
\urldef\tempurl%
\url{https://doi.org/10.18637/jss.v076.i01}
\showDOI{\tempurl}


\bibitem[\protect\citeauthoryear{Chen, Singh, Bingham, and Goodman}{Chen
  et~al\mbox{.}}{2018}]%
        {chen2018transpiling}
\bibfield{author}{\bibinfo{person}{Jonathan~P. Chen}, \bibinfo{person}{Rohit
  Singh}, \bibinfo{person}{Eli Bingham}, {and} \bibinfo{person}{Noah Goodman}.}
  \bibinfo{year}{2018}\natexlab{}.
\newblock \showarticletitle{Transpiling {Stan} models to {Pyro}}. In
  \bibinfo{booktitle}{\emph{{ProbProg}}}.
\newblock


\bibitem[\protect\citeauthoryear{Cusumano{-}Towner, Saad, Lew, and
  Mansinghka}{Cusumano{-}Towner et~al\mbox{.}}{2019}]%
        {cusamotowner_et_al_2019}
\bibfield{author}{\bibinfo{person}{Marco~F. Cusumano{-}Towner},
  \bibinfo{person}{Feras~A. Saad}, \bibinfo{person}{Alexander~K. Lew}, {and}
  \bibinfo{person}{Vikash~K. Mansinghka}.} \bibinfo{year}{2019}\natexlab{}.
\newblock \showarticletitle{Gen: a general-purpose probabilistic programming
  system with programmable inference}. In \bibinfo{booktitle}{\emph{{PLDI}}}.
  \bibinfo{publisher}{{ACM}}, \bibinfo{pages}{221--236}.
\newblock
\urldef\tempurl%
\url{https://doi.org/10.1145/3314221.3314642}
\showDOI{\tempurl}


\bibitem[\protect\citeauthoryear{Gelman and Hill}{Gelman and Hill}{2006}]%
        {gelman2006data}
\bibfield{author}{\bibinfo{person}{Andrew Gelman} {and}
  \bibinfo{person}{Jennifer Hill}.} \bibinfo{year}{2006}\natexlab{}.
\newblock \bibinfo{booktitle}{\emph{Data analysis using regression and
  multilevel/hierarchical models}}.
\newblock \bibinfo{publisher}{Cambridge university press}.
\newblock
\urldef\tempurl%
\url{https://doi.org/10.1017/CBO9780511790942}
\showDOI{\tempurl}


\bibitem[\protect\citeauthoryear{Gelman, Stern, Carlin, Dunson, Vehtari, and
  Rubin}{Gelman et~al\mbox{.}}{2013}]%
        {gelman2013bayesian}
\bibfield{author}{\bibinfo{person}{Andrew Gelman}, \bibinfo{person}{Hal~S
  Stern}, \bibinfo{person}{John~B Carlin}, \bibinfo{person}{David~B Dunson},
  \bibinfo{person}{Aki Vehtari}, {and} \bibinfo{person}{Donald~B Rubin}.}
  \bibinfo{year}{2013}\natexlab{}.
\newblock \bibinfo{booktitle}{\emph{Bayesian data analysis}}.
\newblock \bibinfo{publisher}{Chapman and Hall/CRC}.
\newblock


\bibitem[\protect\citeauthoryear{Goodman, Mansinghka, Roy, Bonawitz, and
  Tenenbaum}{Goodman et~al\mbox{.}}{2008}]%
        {goodman_et_al_2008}
\bibfield{author}{\bibinfo{person}{Noah~D. Goodman}, \bibinfo{person}{Vikash~K.
  Mansinghka}, \bibinfo{person}{Daniel~M. Roy}, \bibinfo{person}{Keith
  Bonawitz}, {and} \bibinfo{person}{Joshua~B. Tenenbaum}.}
  \bibinfo{year}{2008}\natexlab{}.
\newblock \showarticletitle{Church: a language for generative models}. In
  \bibinfo{booktitle}{\emph{{UAI}}}. \bibinfo{publisher}{{AUAI} Press},
  \bibinfo{pages}{220--229}.
\newblock


\bibitem[\protect\citeauthoryear{Goodman and Stuhlm{\"u}ller}{Goodman and
  Stuhlm{\"u}ller}{2014}]%
        {goodman_stuhlmuller_2014}
\bibfield{author}{\bibinfo{person}{Noah~D. Goodman} {and}
  \bibinfo{person}{Andreas Stuhlm{\"u}ller}.} \bibinfo{year}{2014}\natexlab{}.
\newblock \bibinfo{title}{The Design and Implementation of Probabilistic
  Programming Languages}.
\newblock
\newblock
\urldef\tempurl%
\url{http://dippl.org}
\showURL{%
\tempurl}
\newblock
\shownote{Accessed April 2021.}


\bibitem[\protect\citeauthoryear{Gordon, Henzinger, Nori, and Rajamani}{Gordon
  et~al\mbox{.}}{2014}]%
        {gordon_et_al_2014}
\bibfield{author}{\bibinfo{person}{Andrew~D. Gordon},
  \bibinfo{person}{Thomas~A. Henzinger}, \bibinfo{person}{Aditya~V. Nori},
  {and} \bibinfo{person}{Sriram~K. Rajamani}.} \bibinfo{year}{2014}\natexlab{}.
\newblock \showarticletitle{Probabilistic programming}. In
  \bibinfo{booktitle}{\emph{{FOSE}}}. \bibinfo{publisher}{{ACM}},
  \bibinfo{pages}{167--181}.
\newblock
\urldef\tempurl%
\url{https://doi.org/10.1145/2593882.2593900}
\showDOI{\tempurl}


\bibitem[\protect\citeauthoryear{Gorinova, Gordon, and Sutton}{Gorinova
  et~al\mbox{.}}{2019}]%
        {GorinovaGS19}
\bibfield{author}{\bibinfo{person}{Maria~I. Gorinova},
  \bibinfo{person}{Andrew~D. Gordon}, {and} \bibinfo{person}{Charles Sutton}.}
  \bibinfo{year}{2019}\natexlab{}.
\newblock \showarticletitle{Probabilistic programming with densities in
  SlicStan: efficient, flexible, and deterministic}.
\newblock \bibinfo{journal}{\emph{Proc. {ACM} Program. Lang.}}
  \bibinfo{volume}{3}, \bibinfo{number}{{POPL}} (\bibinfo{year}{2019}),
  \bibinfo{pages}{35:1--35:30}.
\newblock
\urldef\tempurl%
\url{https://doi.org/10.1145/3290348}
\showURL{%
\tempurl}


\bibitem[\protect\citeauthoryear{Hoffman and Gelman}{Hoffman and
  Gelman}{2014}]%
        {homan2014nuts}
\bibfield{author}{\bibinfo{person}{Matthew~D. Hoffman} {and}
  \bibinfo{person}{Andrew Gelman}.} \bibinfo{year}{2014}\natexlab{}.
\newblock \showarticletitle{The No-U-turn sampler: adaptively setting path
  lengths in Hamiltonian Monte Carlo}.
\newblock \bibinfo{journal}{\emph{J. Mach. Learn. Res.}} \bibinfo{volume}{15},
  \bibinfo{number}{1} (\bibinfo{year}{2014}), \bibinfo{pages}{1593--1623}.
\newblock


\bibitem[\protect\citeauthoryear{Kingma and Welling}{Kingma and
  Welling}{2014}]%
        {kingma_welling_2013}
\bibfield{author}{\bibinfo{person}{Diederik~P. Kingma} {and}
  \bibinfo{person}{Max Welling}.} \bibinfo{year}{2014}\natexlab{}.
\newblock \showarticletitle{Auto-Encoding Variational Bayes}. In
  \bibinfo{booktitle}{\emph{{ICLR}}}.
\newblock


\bibitem[\protect\citeauthoryear{Kozen}{Kozen}{1981}]%
        {kozen_1981}
\bibfield{author}{\bibinfo{person}{Dexter Kozen}.}
  \bibinfo{year}{1981}\natexlab{}.
\newblock \showarticletitle{Semantics of Probabilistic Programs}.
\newblock \bibinfo{journal}{\emph{J. Comput. Syst. Sci.}} \bibinfo{volume}{22},
  \bibinfo{number}{3} (\bibinfo{year}{1981}), \bibinfo{pages}{328--350}.
\newblock
\urldef\tempurl%
\url{https://doi.org/10.1016/0022-0000(81)90036-2}
\showDOI{\tempurl}


\bibitem[\protect\citeauthoryear{Kucukelbir, Tran, Ranganath, Gelman, and
  Blei}{Kucukelbir et~al\mbox{.}}{2017}]%
        {advi17jmlr}
\bibfield{author}{\bibinfo{person}{Alp Kucukelbir}, \bibinfo{person}{Dustin
  Tran}, \bibinfo{person}{Rajesh Ranganath}, \bibinfo{person}{Andrew Gelman},
  {and} \bibinfo{person}{David~M. Blei}.} \bibinfo{year}{2017}\natexlab{}.
\newblock \showarticletitle{Automatic Differentiation Variational Inference}.
\newblock \bibinfo{journal}{\emph{J. Mach. Learn. Res.}}  \bibinfo{volume}{18}
  (\bibinfo{year}{2017}), \bibinfo{pages}{14:1--14:45}.
\newblock


\bibitem[\protect\citeauthoryear{Lee, Yu, Rival, and Yang}{Lee
  et~al\mbox{.}}{2020}]%
        {Wonyeol_et_al_2020}
\bibfield{author}{\bibinfo{person}{Wonyeol Lee}, \bibinfo{person}{Hangyeol Yu},
  \bibinfo{person}{Xavier Rival}, {and} \bibinfo{person}{Hongseok Yang}.}
  \bibinfo{year}{2020}\natexlab{}.
\newblock \showarticletitle{Towards verified stochastic variational inference
  for probabilistic programs}.
\newblock \bibinfo{journal}{\emph{{PACMPL}}} \bibinfo{volume}{4},
  \bibinfo{number}{{POPL}} (\bibinfo{year}{2020}),
  \bibinfo{pages}{16:1--16:33}.
\newblock
\urldef\tempurl%
\url{https://doi.org/10.1145/3371084}
\showDOI{\tempurl}


\bibitem[\protect\citeauthoryear{Lunn, Spiegelhalter, Thomas, and Best}{Lunn
  et~al\mbox{.}}{2009}]%
        {lunn2009bugs}
\bibfield{author}{\bibinfo{person}{David Lunn}, \bibinfo{person}{David
  Spiegelhalter}, \bibinfo{person}{Andrew Thomas}, {and} \bibinfo{person}{Nicky
  Best}.} \bibinfo{year}{2009}\natexlab{}.
\newblock \showarticletitle{The BUGS project: Evolution, critique and future
  directions}.
\newblock \bibinfo{journal}{\emph{Stat. in medicine}} \bibinfo{volume}{28},
  \bibinfo{number}{25} (\bibinfo{year}{2009}), \bibinfo{pages}{3049--3067}.
\newblock
\urldef\tempurl%
\url{https://doi.org/10.1002/sim.3680}
\showDOI{\tempurl}


\bibitem[\protect\citeauthoryear{Narayanaswamy, Paige, van~de Meent, Desmaison,
  Goodman, Kohli, Wood, and Torr}{Narayanaswamy et~al\mbox{.}}{2017}]%
        {siddharth_et_al_2017}
\bibfield{author}{\bibinfo{person}{Siddharth Narayanaswamy},
  \bibinfo{person}{Brooks Paige}, \bibinfo{person}{Jan{-}Willem van~de Meent},
  \bibinfo{person}{Alban Desmaison}, \bibinfo{person}{Noah~D. Goodman},
  \bibinfo{person}{Pushmeet Kohli}, \bibinfo{person}{Frank~D. Wood}, {and}
  \bibinfo{person}{Philip H.~S. Torr}.} \bibinfo{year}{2017}\natexlab{}.
\newblock \showarticletitle{Learning Disentangled Representations with
  Semi-Supervised Deep Generative Models}. In
  \bibinfo{booktitle}{\emph{{NIPS}}}. \bibinfo{pages}{5925--5935}.
\newblock


\bibitem[\protect\citeauthoryear{Neal}{Neal}{1996}]%
        {neal_2012}
\bibfield{author}{\bibinfo{person}{Radford~M. Neal}.}
  \bibinfo{year}{1996}\natexlab{}.
\newblock \bibinfo{booktitle}{\emph{Bayesian Learning for Neural Networks}}.
  Vol.~\bibinfo{volume}{118}.
\newblock \bibinfo{publisher}{Springer}.
\newblock
\urldef\tempurl%
\url{https://doi.org/10.1007/978-1-4612-0745-0}
\showDOI{\tempurl}


\bibitem[\protect\citeauthoryear{Paszke, Gross, Chintala, Chanan, Yang, DeVito,
  Lin, Desmaison, Antiga, and Lerer}{Paszke et~al\mbox{.}}{2017}]%
        {paszke_et_al_2017}
\bibfield{author}{\bibinfo{person}{Adam Paszke}, \bibinfo{person}{Sam Gross},
  \bibinfo{person}{Soumith Chintala}, \bibinfo{person}{Gregory Chanan},
  \bibinfo{person}{Edward Yang}, \bibinfo{person}{Zachary DeVito},
  \bibinfo{person}{Zeming Lin}, \bibinfo{person}{Alban Desmaison},
  \bibinfo{person}{Luca Antiga}, {and} \bibinfo{person}{Adam Lerer}.}
  \bibinfo{year}{2017}\natexlab{}.
\newblock \showarticletitle{Automatic Differentiation in {PyTorch}}. In
  \bibinfo{booktitle}{\emph{AutoDiff Workshop}}.
\newblock


\bibitem[\protect\citeauthoryear{Phan, Pradhan, and Jankowiak}{Phan
  et~al\mbox{.}}{2019}]%
        {numpyro_2019}
\bibfield{author}{\bibinfo{person}{Du Phan}, \bibinfo{person}{Neeraj Pradhan},
  {and} \bibinfo{person}{Martin Jankowiak}.} \bibinfo{year}{2019}\natexlab{}.
\newblock \showarticletitle{Composable Effects for Flexible and Accelerated
  Probabilistic Programming in NumPyro}.
\newblock \bibinfo{journal}{\emph{CoRR}}  \bibinfo{volume}{abs/1912.11554}
  (\bibinfo{year}{2019}).
\newblock


\bibitem[\protect\citeauthoryear{Plummer et~al\mbox{.}}{Plummer
  et~al\mbox{.}}{2003}]%
        {plummer2003jags}
\bibfield{author}{\bibinfo{person}{Martyn Plummer} {et~al\mbox{.}}}
  \bibinfo{year}{2003}\natexlab{}.
\newblock \showarticletitle{JAGS: A program for analysis of Bayesian graphical
  models using Gibbs sampling}. In \bibinfo{booktitle}{\emph{Workshop on distr.
  stat. comp.}}, Vol.~\bibinfo{volume}{124}. Vienna, Austria.
\newblock


\bibitem[\protect\citeauthoryear{Rezende, Mohamed, and Wierstra}{Rezende
  et~al\mbox{.}}{2014}]%
        {rezende_mohamed_wierstra_2014}
\bibfield{author}{\bibinfo{person}{Danilo~Jimenez Rezende},
  \bibinfo{person}{Shakir Mohamed}, {and} \bibinfo{person}{Daan Wierstra}.}
  \bibinfo{year}{2014}\natexlab{}.
\newblock \showarticletitle{Stochastic Backpropagation and Approximate
  Inference in Deep Generative Models}. In \bibinfo{booktitle}{\emph{{ICML}}}
  \emph{(\bibinfo{series}{{JMLR} Workshop and Conference Proceedings},
  Vol.~\bibinfo{volume}{32})}. \bibinfo{publisher}{JMLR.org},
  \bibinfo{pages}{1278--1286}.
\newblock


\bibitem[\protect\citeauthoryear{Salvatier, Wiecki, and Fonnesbeck}{Salvatier
  et~al\mbox{.}}{2016}]%
        {salvatier_wiecki_fonnesbeck_2015}
\bibfield{author}{\bibinfo{person}{John Salvatier}, \bibinfo{person}{Thomas~V.
  Wiecki}, {and} \bibinfo{person}{Christopher Fonnesbeck}.}
  \bibinfo{year}{2016}\natexlab{}.
\newblock \showarticletitle{Probabilistic programming in Python using PyMC3}.
\newblock \bibinfo{journal}{\emph{PeerJ Comput. Sci.}}  \bibinfo{volume}{2}
  (\bibinfo{year}{2016}), \bibinfo{pages}{e55}.
\newblock
\urldef\tempurl%
\url{https://doi.org/10.7717/peerj-cs.55}
\showDOI{\tempurl}


\bibitem[\protect\citeauthoryear{Shi, Chen, Zhu, Sun, Luo, Gu, and Zhou}{Shi
  et~al\mbox{.}}{2017}]%
        {shi_et_al_2017}
\bibfield{author}{\bibinfo{person}{Jiaxin Shi}, \bibinfo{person}{Jianfei Chen},
  \bibinfo{person}{Jun Zhu}, \bibinfo{person}{Shengyang Sun},
  \bibinfo{person}{Yucen Luo}, \bibinfo{person}{Yihong Gu}, {and}
  \bibinfo{person}{Yuhao Zhou}.} \bibinfo{year}{2017}\natexlab{}.
\newblock \showarticletitle{ZhuSuan: {A} Library for Bayesian Deep Learning}.
\newblock \bibinfo{journal}{\emph{CoRR}}  \bibinfo{volume}{abs/1709.05870}
  (\bibinfo{year}{2017}).
\newblock


\bibitem[\protect\citeauthoryear{Staton}{Staton}{2017}]%
        {staton_2017}
\bibfield{author}{\bibinfo{person}{Sam Staton}.}
  \bibinfo{year}{2017}\natexlab{}.
\newblock \showarticletitle{Commutative Semantics for Probabilistic
  Programming}. In \bibinfo{booktitle}{\emph{{ESOP}}}
  \emph{(\bibinfo{series}{Lecture Notes in Computer Science},
  Vol.~\bibinfo{volume}{10201})}. \bibinfo{publisher}{Springer},
  \bibinfo{pages}{855--879}.
\newblock
\urldef\tempurl%
\url{https://doi.org/10.1007/978-3-662-54434-1_32}
\showDOI{\tempurl}


\bibitem[\protect\citeauthoryear{Staton, Yang, Wood, Heunen, and Kammar}{Staton
  et~al\mbox{.}}{2016}]%
        {staton_et_al_2016}
\bibfield{author}{\bibinfo{person}{Sam Staton}, \bibinfo{person}{Hongseok
  Yang}, \bibinfo{person}{Frank~D. Wood}, \bibinfo{person}{Chris Heunen}, {and}
  \bibinfo{person}{Ohad Kammar}.} \bibinfo{year}{2016}\natexlab{}.
\newblock \showarticletitle{Semantics for probabilistic programming:
  higher-order functions, continuous distributions, and soft constraints}. In
  \bibinfo{booktitle}{\emph{{LICS}}}. \bibinfo{publisher}{{ACM}},
  \bibinfo{pages}{525--534}.
\newblock
\urldef\tempurl%
\url{https://doi.org/10.1145/2933575.2935313}
\showDOI{\tempurl}


\bibitem[\protect\citeauthoryear{Tolpin, van~de Meent, Yang, and Wood}{Tolpin
  et~al\mbox{.}}{2016}]%
        {tolpin_et_al_2016}
\bibfield{author}{\bibinfo{person}{David Tolpin}, \bibinfo{person}{Jan{-}Willem
  van~de Meent}, \bibinfo{person}{Hongseok Yang}, {and}
  \bibinfo{person}{Frank~D. Wood}.} \bibinfo{year}{2016}\natexlab{}.
\newblock \showarticletitle{Design and Implementation of Probabilistic
  Programming Language Anglican}. In \bibinfo{booktitle}{\emph{{IFL}}}.
  \bibinfo{publisher}{{ACM}}, \bibinfo{pages}{6:1--6:12}.
\newblock
\urldef\tempurl%
\url{https://doi.org/10.1145/3064899.3064910}
\showDOI{\tempurl}


\bibitem[\protect\citeauthoryear{Tran, Hoffman, Saurous, Brevdo, Murphy, and
  Blei}{Tran et~al\mbox{.}}{2017}]%
        {tran_et_al_2017}
\bibfield{author}{\bibinfo{person}{Dustin Tran}, \bibinfo{person}{Matthew~D.
  Hoffman}, \bibinfo{person}{Rif~A. Saurous}, \bibinfo{person}{Eugene Brevdo},
  \bibinfo{person}{Kevin Murphy}, {and} \bibinfo{person}{David~M. Blei}.}
  \bibinfo{year}{2017}\natexlab{}.
\newblock \showarticletitle{Deep Probabilistic Programming}. In
  \bibinfo{booktitle}{\emph{{ICLR} (Poster)}}.
\newblock


\bibitem[\protect\citeauthoryear{van~de Meent, Paige, Yang, and Wood}{van~de
  Meent et~al\mbox{.}}{2018}]%
        {van2018introduction}
\bibfield{author}{\bibinfo{person}{Jan{-}Willem van~de Meent},
  \bibinfo{person}{Brooks Paige}, \bibinfo{person}{Hongseok Yang}, {and}
  \bibinfo{person}{Frank Wood}.} \bibinfo{year}{2018}\natexlab{}.
\newblock \showarticletitle{An Introduction to Probabilistic Programming}.
\newblock \bibinfo{journal}{\emph{CoRR}}  \bibinfo{volume}{abs/1809.10756}
  (\bibinfo{year}{2018}).
\newblock


\bibitem[\protect\citeauthoryear{Vehtari and Magnusson}{Vehtari and
  Magnusson}{2020}]%
        {posteriordb}
\bibfield{author}{\bibinfo{person}{Aki Vehtari} {and} \bibinfo{person}{Måns
  Magnusson}.} \bibinfo{year}{2020}\natexlab{}.
\newblock \showarticletitle{{PosteriorDB}: a database with data, models and
  posteriors}. In \bibinfo{booktitle}{\emph{Stan Conf.}}
\newblock
\urldef\tempurl%
\url{https://github.com/stan-dev/posteriordb}
\showURL{%
\tempurl}


\end{thebibliography}
\bibliographystyle{acm-reference-format}

\ifextended
\clearpage
\appendix

\section{Auxiliary Lemmas}

The first additional property that we need is that the initial value of $\kwf{target}$ only impact the final value of $\kwf{target}$ but no other variable.
\begin{lemma}
\label{lem:no-target}
For all Stan statement $\mathit{stmt}$,
for all variable $x$ different from $\kwf{target}$,
then for all values $t_1$ and $t_2$,
$$
\sem{\mathit{stmt}}_{\gamma[\kwf{target} \leftarrow t_1]}(x)
=
\sem{\mathit{stmt}}_{\gamma[\kwf{target} \leftarrow t_2]}(x)
$$
\end{lemma}
\begin{proof}
  The proof is done by induction on the structure of the statements and number of reductions.
  We detail just the most interesting cases.

  \paragraph{Assignment}
  Evaluating $\sassign{x}{e}$ does not update $\kwf{target}$ and according to \Cref{hyp:target}, the expression~$e$ does not depends on the value of target.
  So ${\sem{e}_{\gamma[\kwf{target} \leftarrow t_1]} = \sem{e}_{\gamma[\kwf{target} \leftarrow t_2]}}$ and thus
  $
  \sem{\sassign{x}{e}}_{\gamma[\kwf{target} \leftarrow t_1]}(x)
  =
  \sem{\sassign{x}{e}}_{\gamma[\kwf{target} \leftarrow t_2]}(x)
  $.

  \paragraph{Target Update}
  Since the evaluation of $\starget{e}$ only updates the value of $\kwf{target}$, we have by definition 
  
  \begin{small}
  $$
  \sem{\starget{e}}_{\gamma[\kwf{target} \leftarrow t_1]}(x) = \sem{\starget{e}}_{\gamma[\kwf{target} \leftarrow t_2]}(x)
  $$
  \end{small}

  \paragraph{Sequence}
  For a sequence $\sseq{\mathit{stmt}_1}{\mathit{stmt}_2}$ we have:

  \begin{small}
    $$
    \begin{array}{@{}l}
      \sem{\sseq{\mathit{stmt}_1}{\mathit{stmt}_2}}_{\gamma[\kwf{target} \leftarrow t_1]}(x)
      \\\qquad
      =
      \sem{\mathit{stmt}_2}_{\sem{\mathit{stmt}_1}_{\gamma[\kwf{target} \leftarrow t_1]}}(x)
      \quad\{ \text{ by definition } \}
      \\\qquad
      = \sem{\mathit{stmt}_2}_{\sem{\mathit{stmt}_1}_{\gamma[\kwf{target} \leftarrow t_2]}}(x)
      \quad\{ \text{ by induction on $\mathit{stmt_1}$ } \}
      \\\qquad
      = \sem{\sseq{\mathit{stmt}_1}{\mathit{stmt}_2}}_{\gamma[\kwf{target} \leftarrow t_2]}(x)
      \quad\{ \text{ by definition } \}
    \end{array}
    $$
  \end{small}

  \paragraph{Loop}
  For a loop $\swhile{e}{s}$, by induction on the structure of the statements, for all environment $\gamma$, values $t_1$ and $t_2$ and variable $x\neq \kwf{target}$, we have $\sem{s}_{\gamma[\kwf{target} \leftarrow t_1]}(x) = \sem{s}_{\gamma[\kwf{target} \leftarrow t_2]}(x)$, which implies:

  \begin{small}
  $$
  \sem{s}_{\gamma[\kwf{target} \leftarrow t_1]} = \sem{s}_{\gamma}[\kwf{target} \leftarrow \sem{s}_{\gamma[\kwf{target} \leftarrow t_1]}]
  $$
  \end{small}

  In addition, by \Cref{hyp:target} we know that for all environment $\gamma$,
  $\sem{e}_{\gamma[\kwf{target} \leftarrow t_1]} = \sem{e}_{\gamma[\kwf{target} \leftarrow t_2]}$.

  By \Cref{hyp:term}, there is a finite number~$n$ of loop iterations.
  So we can prove by induction on the number of remaining iterations that
  for all $t_1$, $t_2$:
  \begin{small}
  $$
  \sem{\swhile{e}{s}}_{\gamma^n[\kwf{target} \leftarrow t_1]}(x)
  =
  \sem{\swhile{e}{s}}_{\gamma^n[\kwf{target} \leftarrow t_2]}(x)
  $$
  \end{small}
  where for all $0 \leq i < n$, $\gamma^i = \sem{s}_{\gamma^{i+1}}$, and $\gamma^n = \gamma$.

  If $n = 0$, $\sem{e}_{\gamma^0[\kwf{target} \leftarrow t_1]} = \sem{e}_{\gamma^0[\kwf{target} \leftarrow t_2]} = 0$.
  So by definition of the semantics we have:
  \begin{small}
  $$
  \gamma^0[\kwf{target} \leftarrow t_1](x) = \gamma^0[\kwf{target} \leftarrow t_2](x)
  $$
  \end{small}

  For the inductive case, if $n = i + 1$, there are some iterations left so $\sem{e}_{\gamma^{i+1}[\kwf{target} \leftarrow t_1]} = \sem{e}_{\gamma^{i+1}[\kwf{target} \leftarrow t_2]} \not= 0$.
  So by definition of the semantics we have:

  \begin{small}
    $$
    \begin{array}{@{}l}
    \sem{\swhile{e}{s}}_{\gamma^{i+1}[\kwf{target} \leftarrow t_1]}(x)
    \\\quad
    = \sem{\swhile{e}{s}}_{\sem{s}_{\gamma^{i+1}[\kwf{target} \leftarrow t_1]}}(x)
      \quad \{ \text{ by definition } \}
    \\\quad
    = \{ \text{ by definition of $\gamma^i$ and induction on $s$ } \}
      \\\quad\phantom{=!}
      \sem{\swhile{e}{s}}_{\gamma^{i}[\kwf{target} \leftarrow \sem{s}_{\gamma^{i+1}[\kwf{target} \leftarrow t_1]}(\kwf{target})]}(x)
    \\\quad
    = \sem{\swhile{e}{s}}_{\gamma^{i}[\kwf{target} \leftarrow t_2]}(x)
       \quad \{ \text{ induction on $s$ } \}
    \\\quad
    = \{ \text{ by definition of $\gamma^i$ and induction on $s$ } \}
      \\\quad\phantom{=!}
      \sem{\swhile{e}{s}}_{\sem{s}_{\gamma^{i+1}}[\kwf{target} \leftarrow t_2]}(x)
    \\\quad
    = \sem{\swhile{e}{s}}_{\gamma^{i+1}[\kwf{target} \leftarrow t_2]}(x)
      \quad \{ \text{ by definition } \}
    \end{array}
    $$
  \end{small}
\end{proof}

The next lemma state that every statement can be evaluated in an environment where $\kwf{target}$ is set to zero.

\begin{lemma}
\label{lem:stan-target}
For all Stan statement $\mathit{stmt}$  and any real value~$t$ we have:
$$
    \sem{\mathit{stmt}}_{\gamma[\kwf{target} \leftarrow t]}(\kwf{target}) = t +  \sem{\mathit{stmt}}_{\gamma[\kwf{target} \leftarrow 0]}(\kwf{target})
    $$
\end{lemma}

\begin{proof}
  The proof is done by induction on the structure of the statements and number of reductions.

  \paragraph{Assignment.}
  Evaluating $\sassign{x}{e}$ does not update $\kwf{target}$.
  Therefore, for all $t$,
  ${\sem{\sassign{x}{e}}_{\gamma[\kwf{target} \leftarrow t]}(\kwf{target}) = t}$ and thus
  $$
  \sem{\sassign{x}{e}}_{\gamma[\kwf{target} \leftarrow t]}(\kwf{target}) = t +  \sem{\sassign{x}{e}}_{\gamma[\kwf{target} \leftarrow 0]}(\kwf{target})
  $$

  \paragraph{Target Update}
  Since the evaluation of $\starget{e}$ only updates the value of $\kwf{target}$, we have by definition

  \begin{small}
  $$
  \begin{array}{@{}l}
    \sem{\starget{e}}_{\gamma[\kwf{target} \leftarrow t]}(\kwf{target}) \\
    \qquad = t + \sem{\starget{e}}_{\gamma[\kwf{target} \leftarrow 0]}(\kwf{target})
  \end{array}
  $$
\end{small}

  \paragraph{Sequence}
  Let $\gamma_1 = \sem{\mathit{stmt}_1}_{\gamma[\kwf{target} \leftarrow 0]}$.
  By induction, we have $\sem{\mathit{stmt}_1}_{\gamma[\kwf{target} \leftarrow t]} = t + \gamma_1(\kwf{target})$.

  \begin{small}
  $$
  \begin{array}{@{}l}
    \sem{\sseq{\mathit{stmt}_1}{\mathit{stmt}_2}}_{\gamma[\kwf{target} \leftarrow t]}(\kwf{target})
    \\\quad
    = \sem{\mathit{stmt}_2}_{\sem{\mathit{stmt}_1}_{\gamma[\kwf{target} \leftarrow t]}}(\kwf{target})
      \quad \{\text{ by definition } \}
    \\\quad
    = \{ \text{ by induction on $\mathit{stmt}_2$ } \}
      \\\quad\phantom{=!}
      {\def\arraystretch{1.4}\begin{array}[t]{@{}l}
      \sem{\mathit{stmt}_1}_{\gamma[\kwf{target} \leftarrow t]}(\kwf{target}) \\
      + \sem{\mathit{stmt}_2}_{\sem{\mathit{stmt}_1}_{\gamma[\kwf{target} \leftarrow t]}[\kwf{target} \leftarrow 0]}(\kwf{target})
      \end{array}}
    \\\quad
    = \{ \text{ by induction on $\mathit{stmt}_1$ } \}
      \\\quad\phantom{=!}
      {\def\arraystretch{1.4}\begin{array}[t]{@{}l}
      t +
      \sem{\mathit{stmt_1}}_{\gamma[\kwf{target} \leftarrow 0]}(\kwf{target}) \\
      + \sem{\mathit{stmt}_2}_{\sem{\mathit{stmt}_1}_{\gamma[\kwf{target} \leftarrow t]}[\kwf{target} \leftarrow 0]}(\kwf{target})
      \end{array}}
    \\\quad
= \{ \text{ by \Cref{lem:no-target} and induction on $\mathit{stmt}_2$ } \}
      \\\quad\phantom{=!}
      t +
      \sem{\mathit{stmt}_2}_{\sem{\mathit{stmt}_1}_{\gamma[\kwf{target} \leftarrow 0]}[\kwf{target} \leftarrow 0]}(\kwf{target})
    \\\quad
    = t + \sem{\sseq{\mathit{stmt}_1}{\mathit{stmt}_2}}_{\gamma[\kwf{target} \leftarrow 0]}(\kwf{target})
      \quad \{\text{ by definition } \}
  \end{array}
  $$
\end{small}

  \paragraph{Loops}
  By \Cref{hyp:term}, there is a finite number~$n$ of loop iterations.
  So we can prove by induction on the number of remaining iterations that
  \begin{small}
  $$
  \begin{array}{@{}l}
  \sem{\swhile{e}{s}}_{\gamma^n[\kwf{target} \leftarrow t]}(\kwf{target})
  \\\qquad =
  t + \sem{\swhile{e}{s}}_{\gamma^n[\kwf{target} \leftarrow 0]}(\kwf{target})
  \end{array}
  $$
  \end{small}
  where for all $0 \leq i < n$, $\gamma^i = \sem{s}_{\gamma^{i+1}}$, and $\gamma^n = \gamma$.

  If $n = 0$, $\sem{e}_{\gamma^0[\kwf{target} \leftarrow t]} = \sem{e}_{\gamma^0[\kwf{target} \leftarrow 0]} = 0$.
  So by definition of the semantics we have:
  \begin{small}
  $$
  \gamma^0[\kwf{target} \leftarrow t](\kwf{target}) = t + \gamma^0[\kwf{target} \leftarrow 0](\kwf{target})
  $$
  \end{small}

  For the inductive case, if $n = i + 1$, there are some iterations left so by definition of the semantics we have:

  \begin{small}
  $$
  \begin{array}{@{}l}
  \sem{\swhile{e}{s}}_{\gamma^{i+1}[\kwf{target} \leftarrow t]}(\kwf{target})
  \\\quad
  = \sem{\swhile{e}{s}}_{\sem{s}_{\gamma^{i+1}[\kwf{target} \leftarrow t]}}(\kwf{target})
    \quad \{\text{ by definition } \}
  \\\quad
  = \{ \text{ by definition of $\gamma^i$ and \Cref{lem:no-target} } \}
    \\\quad\phantom{=!}
    \sem{\swhile{e}{s}}_{\gamma^i[\kwf{target} \leftarrow \sem{s}_{\gamma^{i+1}[\kwf{target \leftarrow t}]}(\kwf{target})]}(\kwf{target})
  \\\quad
  = \{ \text{ by induction on the number of reductions } \}
    \\\quad\phantom{=!}
    {\def\arraystretch{1.4}\begin{array}[t]{@{}l}
  \sem{s}_{\gamma^{i+1}[\kwf{target \leftarrow t}]}(\kwf{target}) \\
  + \sem{\swhile{e}{s}}_{\gamma^i[\kwf{target} \leftarrow 0]}(\kwf{target})
  \end{array}}
  \\\quad
  = \{ \text{ by induction on $s$ } \}
    \\\quad\phantom{=!}
    {\def\arraystretch{1.4}\begin{array}[t]{@{}l}
    t + \sem{s}_{\gamma^{i+1}[\kwf{target} \leftarrow 0]}(\kwf{target}) \\
    + \sem{\swhile{e}{s}}_{\gamma^i[\kwf{target} \leftarrow 0]}(\kwf{target})
    \end{array}}
  \\\quad
  = \{ \text{ by definition of $\gamma^i$ } \}
    \\\quad\phantom{=!}
    {\def\arraystretch{1.4}\begin{array}[t]{@{}l}
    t + \sem{s}_{\gamma^{i+1}[\kwf{target} \leftarrow 0]}(\kwf{target}) \\
    + \sem{\swhile{e}{s}}_{\sem{s}_{\gamma^{i+1}}[\kwf{target} \leftarrow 0]}(\kwf{target})
    \end{array}}
  \\\quad
  = \{ \text{ by \Cref{lem:no-target} } \}
    \\\quad\phantom{=!}
    {\def\arraystretch{1.4}\begin{array}[t]{@{}l}
    t + \sem{s}_{\gamma^{i+1}[\kwf{target} \leftarrow 0]}(\kwf{target}) \\
    + \sem{\swhile{e}{s}}_{\sem{s}_{\gamma^{i+1}[\kwf{target} \leftarrow 0]}[\kwf{target} \leftarrow 0]}(\kwf{target})
    \end{array}}
  \\\quad
  = \{ \text{ by induction } \}
    \\\quad\phantom{=!}
    {\def\arraystretch{1.4}\begin{array}[t]{@{}l}
    t + \sem{\swhile{e}{s}}_{\gamma^{i+1}[\kwf{target} \leftarrow 0]}(\kwf{target})
  \end{array}}
  \end{array}
  $$
\end{small}

\end{proof}

\section{Proofs: Correctness of the Compilation}
\label{apx:correctness}

\begin{lemma}
  \label{lem:params-apx}
  For all Stan programs $p$ with $\mathit{stmt} = \secmodel{p}$ and $\mathcal{P} = \secparams{p}$,  and environments~$\gamma$:
\begin{small}
  $$
  \psem{\pcomp{p}}_\gamma \propto
    \lambda U. \int_U \psem{\scomp{\ereturn{\eunit}}{\mathit{stmt}}}_{\gamma[\mathcal{P} \leftarrow \theta]}(\{\eunit\})d\theta
  $$
\end{small}
\end{lemma}
\begin{proof}
  Let $\mathcal{P} = x_1, ..., x_n$.
  By definition of the compilation function, $\pcomp{p}$ has the shape:
\begin{small}
  $$
  \elet{x_1}{D_1}{
\dots
      \elet{x_n}{D_n}{
        \scomp{\ereturn{\mathcal{P}}}{\mathit{stmt}}}}
$$
  \end{small}
  where for each parameter $x_i$, the distribution $D_i$ is either \python{uniform} or \python{improper_uniform}.
  In both cases the corresponding density is constant w.r.t. the Lebesgue measure on its domain.

  Since the kernels defined by GProb expressions are always s-finite~\cite{staton_2017}, from the semantics of GProb (\Cref{sec:formal_generative_language}) and the Fubini-Tonelli theorem, we have:

  \begin{small}
  $$
  \def\arraystretch{1.8}
  \begin{array}{@{}l}
    \psem{\pcomp{p}}_\gamma 
    \\ \;= \psem{\elet{x_1}{D_1}{
\dots
          \elet{x_n}{D_n}{
            \scomp{\ereturn{\mathcal{P}}}{\mathit{stmt}}}}}_\gamma
\\ \;=
       \lambda U.\!\!
       \displaystyle\int_{X_1}
       {\def\arraystretch{1.4}\begin{array}[t]{@{}l}
       \!\!\!\! D_1(dv_1) \,... \\
       \int_{X_n}
       {\def\arraystretch{1.4}\begin{array}[t]{@{}l}
       \!\!\! D_n(dv_n) \\
       \psem{\scomp{\ereturn{\mathcal{P}}}{\mathit{stmt}}}_{\gamma[(x_1,...,x_n) \leftarrow (v_1, ..., v_n)]}(U)
      \end{array}}
       \end{array}}
\\ \;\propto
       \lambda U.\!\!
       \displaystyle\int_{X_1} \!\!\!\!...\!\!\!
       \int_{X_n}\!\!
       \psem{\scomp{\ereturn{\mathcal{P}}}{\mathit{stmt}}}_{\gamma[(x_1,...,x_n) \leftarrow (v_1, ..., v_n)]}(U)\ dv_1 ... dv_n
    \\ \;=
       \lambda U.\!\!
       \displaystyle\int_{X = X_1 \times ... \times X_n}
       \psem{\scomp{\ereturn{\mathcal{P}}}{\mathit{stmt}}}_{{\gamma[(x_1,...,x_n) \leftarrow \theta]}}(U)\ d\theta
  \end{array}
  $$
\end{small}

  The evaluation of $\psem{\scomp{\ereturn{\mathcal{P}}}{\mathit{stmt}}}_{\gamma[\mathcal{P} \leftarrow \theta]}(U)$ terminates with the value $\psem{\ereturn{\mathcal{P}}}_{\gamma'[\mathcal{P} \leftarrow \theta]}(U)$ where $\gamma'$ is the environment obtained after the evaluation of the model statements $\mathit{stmt}$. 

  However, in Stan, parameters declared in $\mathcal{P}$ cannot appear in the left-hand side of an assignment. The evaluation of the model statements $\mathit{stmt}$ cannot update the value of the parameters.
  We can thus simplify $\psem{\ereturn{\mathcal{P}}}_{\gamma'[\mathcal{P} \leftarrow \theta]}(U)$ into $\psem{\ereturn{\mathcal{P}}}_{[\mathcal{P} \leftarrow \theta]}(U)$.
  The evaluation of the compiled code can then be decomposed as follows:
  \begin{small}
  $$
  \def\arraystretch{1.8}
  \begin{array}{l}
    \psem{\scomp{\ereturn{\mathcal{P}}}{\mathit{stmt}}}_{\gamma[\mathcal{P} \leftarrow \theta]}(U)
    \\ \qquad =
    \displaystyle\int_X
      {\def\arraystretch{1.4}\begin{array}[t]{@{}l}
      \psem{\scomp{\ereturn{\eunit}}{\mathit{stmt}}}_{\gamma[\mathcal{P} \leftarrow \theta]}(dx)\ \times \\
      \psem{\ereturn{\mathcal{P}}}_{[\mathcal{P} \leftarrow \theta]}(U)
      \end{array}}
    \\ \qquad =
    \psem{\scomp{\ereturn{\eunit}}{\mathit{stmt}}}_{\gamma[\mathcal{P} \leftarrow \theta]}(\{\eunit\}) \times
    \delta_{\theta}(U)
  \end{array}
  $$
\end{small}

  \noindent
  Going back to the previous equation, we now have
  \begin{small}
  $$
  \def\arraystretch{1.8}
  \begin{array}{@{}l@{}}
    \psem{\pcomp{p}}_\gamma
    \\\quad
    =
\lambda U. \displaystyle\int_{X}
    \psem{\scomp{\ereturn{\eunit}}{\mathit{stmt}}}_{\gamma[\mathcal{P} \leftarrow \theta]}(\{\eunit\}) \times
    \delta_{\theta}(U)\ d\theta
    \\\quad
    =
    \lambda U. \displaystyle\int_U \psem{\scomp{\ereturn{\eunit}}{\mathit{stmt}}}_{\gamma[\mathcal{P} \leftarrow \theta]}(\{\eunit\})\ d\theta
  \end{array}
  $$
\end{small}
\end{proof}

\begin{lemma}
  \label{lem:body-apx}
  For all Stan statements $\mathit{stmt}$ compiled with a continuation $k$,
  if $\gamma(\kwf{target}) = 0$, and $\sem{\mathit{stmt}}_\gamma = \gamma'$,
$$
  \psem{\scomp{k}{\mathit{stmt}}}_{\gamma} = \lambda U.
  \exp(\gamma'(\kwf{target})) \times \psem{k}_{\gamma'[\kwf{target} \leftarrow 0]}(U)
  $$
\end{lemma}
\begin{proof}
  The proof is done by induction on the structure of $\mathit{stmt}$ and number of reductions using the definition of the compilation function (\Cref{sec:compil-all}) and the semantics of GProb.

  \paragraph{Assignment.}
  Evaluating $\sassign{x}{e}$ does not update $\kwf{target}$ and its initial value is $0$ by hypothesis. With $\gamma' = \sem{\sassign{x}{e}}_\gamma$ we have $\gamma'[\kwf{target} \leftarrow 0] = \gamma'$ and $\exp(\gamma'(\kwf{target})) = 1$. Then from GProb's semantics we have:

    \begin{small}
    \begin{center}$
    \def\arraystretch{1.2}
    \begin{array}{@{}l@{\;}c@{\;\;}l}
      \psem{\scomp{k}{\sassign{x}{e}}}_{\gamma}
      & = & \{ \text{ by definition of $\scomp{k}{.}$ } \}
      \\ &   & \psem{\elet{x}{\ereturn{e}}{k}}_{\gamma}
      \\ & = & \{ \text{ by definition of the semantics } \}
      \\ &   & \lambda U.
      \displaystyle\int_X \psem{\ereturn{e}}_{\gamma}(dv) \times \psem{k}_{\gamma[x \leftarrow v]} (U)
      \\ & = & \{ \text{ by definition of the semantics } \}
      \\ &   & \lambda U.
      \displaystyle\int_X \delta_{\sem{e}_{\gamma}}(dv) \times \psem{k}_{\gamma[x \leftarrow v]} (U)
      \\ & = & \{ \text{ by the integration of the $\delta$ distribution } \}
      \\ &   & \lambda U.
      1 \times \psem{k}_{\gamma[x \leftarrow \sem{e}_{\gamma}]} (U)
      \\ & = & \{ \text{ by the semantics of $\sem{\sassign{x}{e}}_{\gamma}$ } \}
      \\ &   & \lambda U.
      1 \times \psem{k}_{\sem{\sassign{x}{e}}_{\gamma}} (U)
      \\ & = & \{ \text{ by definition of $\gamma'$ } \}
      \\ &   & \lambda U.
      \exp(\gamma'(\kwf{target})) \times \psem{k}_{\gamma'[\kwf{target} \leftarrow 0]} (U)
    \end{array}
    $\end{center}
    \end{small}

  \paragraph{Target update.}
  Since the evaluation of $\starget{e}$ only updates the value of $\kwf{target}$ and its initial value is $0$, with $\gamma' = \sem{\starget{e}}_{\gamma}$ we have $\gamma = \gamma'[\kwf{target} \leftarrow 0]$, and $\gamma'(\kwf{target}) = \sem{e}_\gamma$. Then from GProb's semantics we have:
    \begin{small}
    $$
    \def\arraystretch{1.2}
    \begin{array}{@{}l@{}}
    \psem{\scomp{k}{\starget{e}}}_{\gamma}\\
    \begin{array}{@{}l@{\;}c@{\;\;}l}
      \quad & = & \{ \text{ by definition of $\scomp{k}{.}$ } \}
      \\ &  & \psem{\elet{\eunit}{\efactor{e}}{k}}_{\gamma}
      \\ & = & \{ \text{ by definition of the semantics } \}
      \\ &   & \lambda U.
      \displaystyle\int_\eunit \exp(\sem{e}_{\gamma}) \delta_{\eunit}(dv) \times \psem{k}_{\gamma} (U)
      \\ & = &\{ \text{ by the integration of the $\delta$ distribution } \}
      \\ &   &  \lambda U.
      \exp(\sem{e}_{\gamma}) \times \psem{k}_{\gamma} (U)
      \\ & = & \{ \text{ by the semantics of $\sem{\starget{e}}_{\gamma}$ } \}
      \\ &   & \lambda U.
      \exp(\sem{\starget{e}}_{\gamma}(\kwf{target})) \times \psem{k}_{\gamma} (U)
      \\ & = & \{ \text{ by definition of $\gamma'$ } \}
      \\ &   & \lambda U.
      \exp(\gamma'(\kwf{target})) \times \psem{k}_{\gamma'[\kwf{target} \leftarrow 0]} (U)
   \end{array}
  \end{array}
    $$
  \end{small}

  \paragraph{Sequence.}
    If $\gamma_1 = \sem{\mathit{stmt}_1}_\gamma$ and $\gamma_2 = \sem{\mathit{stmt}_2}_{\gamma_1[\kwf{target} \leftarrow 0]}$, the induction hypothesis and the semantics of GProb yield:

    \begin{small}
    $$
    \def\arraystretch{1.4}
    \begin{array}{l}
      \psem{\scomp{k}{\sseq{\mathit{stmt}_1}{\mathit{stmt}_2}}}_{\gamma}
      = \psem{\scomp{\scomp{k}{\mathit{stmt_2}}}{\mathit{stmt_1}}}_{\gamma}
      \\ \; = \{ \text{ by induction on $\mathit{stmt}_1$ and definition of $\gamma_1$ } \}
      \\ \; \phantom{=~}
      \lambda U.
      {\def\arraystretch{1.1}
       \begin{array}[t]{@{}l}
         \exp(\gamma_1(\kwf{target})) \times \psem{\scomp{k}{\mathit{stmt_2}}}_{\gamma_1[\kwf{target} \leftarrow 0]}(U) \\
       \end{array}}
      \\ \; = \{ \text{ by induction on $\mathit{stmt}_2$ and definition of $\gamma_2$ } \}
      \\ \; \phantom{=~}
      \lambda U.
      {\def\arraystretch{1.1}
       \begin{array}[t]{@{}l}
         \exp(\gamma_1(\kwf{target})) \times \exp(\gamma_2(\kwf{target})) \times \psem{k}_{\gamma_2[\kwf{target} \leftarrow 0]}(U) \\
       \end{array}}
      \\ \; = \lambda U.
      {\def\arraystretch{1.1}
       \begin{array}[t]{@{}l}
         \exp(\gamma_1(\kwf{target}) + \gamma_2(\kwf{target})) \times \psem{k}_{\gamma_2[\kwf{target} \leftarrow 0]}(U) \\
       \end{array}}
   \end{array}
    $$
  \end{small}

  On the other hand, from \Cref{lem:stan-target} we have:\\
  $\sem{\sseq{\mathit{stmt}_1}{\mathit{stmt}_2}}_\gamma(\kwf{target}) = \gamma_1(\kwf{target}) + \gamma_2(\kwf{target})$ which conclude the proof of this case.

  \paragraph{Loop.} For the case $\swhile{e}{s}$, we first look at the semantics of the loop on the Stan side.
  We can prove that

  \begin{small}
  \begin{equation}
  \label{eq:while-stan}
  \def\arraystretch{1.4}
  \begin{array}{l}
  \sem{\swhile{e}{s}}_{\gamma}(\kwf{target})
  = \displaystyle\sum_{i=0}^{n-1} \gamma^{i}(\kwf{target})
  \end{array}
  \end{equation}
  \end{small}
  \noindent
  where $n$ is the number of loop iterations, for all $0 \leq i < n$, $\gamma^i = \sem{s}_{\gamma^{i+1}[\kwf{target} \leftarrow 0]}$, and $\gamma^n = \gamma$.
  By \Cref{hyp:term} the programs terminate, so $n$ is finite.

  This is a direct consequence of the following property since $\gamma(\kwf{target}) = 0$:
  \begin{small}
  \begin{equation}
  \label{eq:while-stan-ind}
  \def\arraystretch{1.4}
  \begin{array}{l}
  \sem{\swhile{e}{s}}_{\gamma^n[\kwf{target} \leftarrow 0]}(\kwf{target})
  = \displaystyle\sum_{i=0}^{n-1} \gamma^{i}(\kwf{target})
  \end{array}
  \end{equation}
  \end{small}

  The proof of \cref{eq:while-stan-ind} is done by induction on the number of remaining iterations.
  If $n = 0$, the condition of the loop is false.
  Since the condition of the loop cannot depend on the value of \kwf{target}, it means that $\sem{e}_{\gamma^0} = \sem{e}_{\gamma^0[\kwf{target} \leftarrow 0]} = 0$, so by application of the semantics we have directly:
  \begin{small}
  $$
  \def\arraystretch{1.4}
  \begin{array}{l}
  \sem{\swhile{e}{s}}_{\gamma^0[\kwf{target} \leftarrow 0]}(\kwf{target})
= 0
  \end{array}
  $$
  \end{small}

  For the inductive case~($n = i + 1$), there is some loop iterations remaining and thus $\sem{e}_{\gamma^{i+1}} = \sem{e}_{\gamma^{i+1}[\kwf{target} \leftarrow 0]} \not= 0$. By application on the semantics we have:
  \begin{small}
  $$
  \def\arraystretch{1.4}
  \begin{array}{l}
  \sem{\swhile{e}{s}}_{\gamma^{i+1}[\kwf{target} \leftarrow 0]}(\kwf{target})
  \\\qquad
  = \sem{\swhile{e}{s}}_{\sem{s}_{\gamma^{i+1}[\kwf{target} \leftarrow 0]}}
  \\\qquad
  = \sem{\swhile{e}{s}}_{\gamma^i} \quad\{ \text{ by definition of $\gamma^i$ }\}
  \\\qquad
  = \gamma^i + \sem{\swhile{e}{s}}_{\gamma^i[\kwf{target} \leftarrow 0]}
    \quad\{ \text{ by \Cref{lem:stan-target} }\}
  \\\qquad
  = \{ \text{ by induction on the number of iterations }\}
  \\\qquad\phantom{=~}
    \gamma^i + \displaystyle\sum_{j=0}^{i-1} \gamma^{j}(\kwf{target})
= \displaystyle\sum_{j=0}^{i} \gamma^{j}(\kwf{target})
  \end{array}
  $$
  \end{small}

  Let's look at the loop $\ewhile{\mathcal{X}}{e}{\scomp{\ereturn{\mathcal{X}}}{s}}$ where $\mathcal{X}$ is the variables updated by~$s$ ($\mathit{lhs}(s) = \mathcal{X}$).
  We will use the two following properties.
  For all $\gamma$, if $\sem{e}_\gamma = 0$
\begin{small}
  \begin{equation}
    \label{eq:while0}
      \def\arraystretch{1.4}
      \begin{array}[c]{l}
        \psem{\ewhile{\mathcal{X}}{e}{\scomp{\ereturn{\mathcal{X}}}{s}}}_{\gamma}
        = \delta_{\gamma(\mathcal{X})}
      \end{array}
  \end{equation}
  \end{small}
  For all $\gamma$, if $\sem{e}_\gamma \not= 0$ and $\sem{s}_\gamma = \gamma'$:
  \begin{small}
  \begin{equation}
    \label{eq:while1}
      \def\arraystretch{1.4}
      \begin{array}[c]{l}
        \psem{\ewhile{\mathcal{X}}{e}{\scomp{\ereturn{\mathcal{X}}}{s}}}_{\gamma}
        \\\qquad
        =
        \lambda U. {\def\arraystretch{1.4}\begin{array}[t]{@{}l}
                     \exp(\gamma'(\kwf{target}))\ \times \\
                     \psem{\ewhile{\mathcal{X}}{e}{\scomp{\ereturn{\mathcal{X}}}{s}}}_{\gamma'[\kwf{target} \leftarrow 0]}(U)
                   \end{array}}
      \end{array}
  \end{equation}
  \end{small}
  The proofs of both of these properties start by unfolding the definition of the semantics:
  \begin{small}
  $$
  \def\arraystretch{1.4}
  \begin{array}{l}
    \psem{\ewhile{\mathcal{X}}{e}{\scomp{\ereturn{\mathcal{X}}}{s}}}_{\gamma}
    \\\qquad
    =
    \lambda U. {\def\arraystretch{1.4}\begin{array}[t]{@{}l}
                 \mathit{if}\ \sem{e}_\gamma = 0\ \mathit{then}\ \delta_{\gamma(\mathcal{X})}(U)\\
                 \mathit{else}\! \displaystyle\int_X
                 {\def\arraystretch{1.4}\begin{array}[t]{@{}l}
                   \psem{\scomp{\ereturn{\mathcal{X}}}{s}}_\gamma(dv)\ \times \\
                   \psem{\ewhile{\mathcal{X}}{e}{\scomp{\ereturn{\mathcal{X}}}{s}}}_{\gamma[\mathcal{X}\leftarrow v]}(U)
                 \end{array}}
                \end{array}}
  \end{array}
  $$
  \end{small}

  From there, the proof of \cref{eq:while0} where $\sem{e}_\gamma = 0$ is trivial.
  For \cref{eq:while1} where $\sem{e}_\gamma \not= 0$ and $\sem{s}_\gamma = \gamma'$,
  we apply the induction hypothesis on $\psem{\scomp{\ereturn{\mathcal{X}}}{s}}_\gamma$:
  \begin{small}
  $$
  \def\arraystretch{1.4}
  \begin{array}{l}
    \psem{\ewhile{\mathcal{X}}{e}{\scomp{\ereturn{\mathcal{X}}}{s}}}_{\gamma}
    \\
    = \{ \text{ by simplification of $\sem{e}_\gamma \not= 0$} \}
    \\\phantom{=~}
    \lambda U. \displaystyle\int_X
                 {\def\arraystretch{1.4}\begin{array}[t]{@{}l}
                   \psem{\scomp{\ereturn{\mathcal{X}}}{s}}_\gamma(dv)\ \times \\
                   \psem{\ewhile{\mathcal{X}}{e}{\scomp{\ereturn{\mathcal{X}}}{s}}}_{\gamma[\mathcal{X}\leftarrow v]}(U)
                 \end{array}}
    \\
    = \{ \text{ by induction on the structure of the statement} \}
    \\\phantom{=~}
    \lambda U. \displaystyle\int_X
                 {\def\arraystretch{1.4}\begin{array}[t]{@{}l}
                   (\exp(\gamma'(\kwf{target})) \times \psem{\ereturn{\mathcal{X}}}_{\gamma'[\kwf{target} \leftarrow 0]})(dv)\ \times \\
                   \psem{\ewhile{\mathcal{X}}{e}{\scomp{\ereturn{\mathcal{X}}}{s}}}_{\gamma[\mathcal{X}\leftarrow v]}(U)
                 \end{array}}
    \\
    =\{ \text{ by definition of the semantics of \kwf{return}} \}
    \\\phantom{=~}
    \lambda U. \displaystyle\int_X
                 {\def\arraystretch{1.4}\begin{array}[t]{@{}l}
                   (\exp(\gamma'(\kwf{target})) \times \delta_{\gamma'[\kwf{target} \leftarrow 0](\mathcal{X})})(dv)\ \times \\
                   \psem{\ewhile{\mathcal{X}}{e}{\scomp{\ereturn{\mathcal{X}}}{s}}}_{\gamma[\mathcal{X}\leftarrow v]}(U)
                 \end{array}}
    \\
    =\{ \text{ by the integration of the $\delta$ distribution } \}
    \\\phantom{=~}
    \lambda U. {\def\arraystretch{1.4}\begin{array}[t]{@{}l}
                 \exp(\gamma'(\kwf{target}))\ \times \\
                 \psem{\ewhile{\mathcal{X}}{e}{\scomp{\ereturn{\mathcal{X}}}{s}}}_{\gamma[\mathcal{X} \leftarrow \gamma'[\kwf{target} \leftarrow 0](\mathcal{X})]}(U)
               \end{array}}
    \\
    =\{ \text{ the only difference between $\gamma$ and $\gamma'$ are in $\mathcal{X}$ by def of $\mathit{lhs}$ } \}
    \\\phantom{=~}
    \lambda U. {\def\arraystretch{1.4}\begin{array}[t]{@{}l}
                 \exp(\gamma'(\kwf{target}))\ \times \\
                 \psem{\ewhile{\mathcal{X}}{e}{\scomp{\ereturn{\mathcal{X}}}{s}}}_{\gamma'[\kwf{target} \leftarrow 0]}(U)
               \end{array}}
  \end{array}
  $$
  \end{small}

  We can now prove by induction on the number of loop iterations~$n$~(that we assume finite) that

  \begin{small}
  \begin{equation}
    \label{eq:while-gprob}
    \def\arraystretch{1.4}
    \begin{array}{l}
      \psem{\ewhile{\mathcal{X}}{e}{\scomp{\ereturn{\mathcal{X}}}{s}}}_{\gamma^n[\kwf{target} \leftarrow 0]}
      \\\qquad
      =
      \lambda U. \left(\displaystyle\prod_{i=0}^{n-1} \exp(\gamma^{i}(\kwf{target}))\right) \times \delta_{\gamma^0(\mathcal{X})}(U)
    \end{array}
  \end{equation}
  \end{small}

  \noindent
  where for all $0 \leq i < n$, $\gamma^i = \sem{s}_{\gamma^{i+1}[\kwf{target} \leftarrow 0]}$.

  In the base case of the induction~($n =0 $), there is no more iteration~($\sem{e}_{\gamma^0} = \sem{e}_{\gamma^0[\kwf{target} \leftarrow 0]} = 0$).
  By application of~\cref{eq:while0}, we have:
  \begin{small}
  $$
    \def\arraystretch{1.4}
    \begin{array}{l}
      \psem{\ewhile{\mathcal{X}}{e}{\scomp{\ereturn{\mathcal{X}}}{s}}}_{\gamma^0[\kwf{target} \leftarrow 0]}
=
      \delta_{\gamma(\mathcal{X})}
    \end{array}
  $$
  \end{small}

  For the inductive case~($n = i + 1$), we know that $\sem{e}_{\gamma^{i+1}} = \sem{e}_{\gamma^{i+1}[\kwf{target} \leftarrow 0]} \not= 0$ since there are some more loop iterations to execute. Therefore, we can apply \cref{eq:while1}:
  \begin{small}
    $$
    \def\arraystretch{1.4}
    \begin{array}{l}
      \psem{\ewhile{\mathcal{X}}{e}{\scomp{\ereturn{\mathcal{X}}}{s}}}_{\gamma^{i+1}[\kwf{target} \leftarrow 0]}
      \\\qquad
      = \{ \text{ by \cref{eq:while1} and definition of $\gamma^i$ } \}
      \\\qquad\phantom{=~}
      \lambda U. {\def\arraystretch{1.4}\begin{array}[t]{@{}l}
                   \exp(\gamma^i(\kwf{target}))\ \times \\
                   \psem{\ewhile{\mathcal{X}}{e}{\scomp{\ereturn{\mathcal{X}}}{s}}}_{\gamma^i[\kwf{target} \leftarrow 0]}(U)
                 \end{array}}
      \\\qquad
      = \{ \text{ by induction on the number of iterations }\}
      \\\qquad\phantom{=~}
      \lambda U. {\def\arraystretch{1.4}\begin{array}[t]{@{}l}
                   \exp(\gamma^i(\kwf{target})) \times \left(\displaystyle\prod_{j=0}^{i-1} \exp(\gamma^{j}(\kwf{target}))\right) \times \delta_{\gamma^0(\mathcal{X})}(U)
      \end{array}}
      \\\qquad
      = \lambda U. {\def\arraystretch{1.4}\begin{array}[t]{@{}l}
                    \left(\displaystyle\prod_{j=0}^{i} \exp(\gamma^{j}(\kwf{target}))\right) \times \delta_{\gamma^0(\mathcal{X})}(U)
      \end{array}}
    \end{array}
    $$
  \end{small}

  We can finally put all the pieces together to prove the case of $\swhile{e}{s}$.
We still assume that the loop has $n$ iterations.
  We first unroll the definition of compilation function and the semantics.
  \begin{small}
  $$
  \def\arraystretch{1.4}
  \begin{array}{l}
    \psem{\scomp{k}{\swhile{e}{s}}}_{\gamma}
    \\
    =
    \psem{\elet{\mathcal{X}}{\ewhile{\mathcal{X}}{e}{\scomp{\ereturn{\mathcal{X}}}{s}}}{k}}_{\gamma}
    \\
    =
    \lambda U.
    \displaystyle\int_{X}
       \psem{\ewhile{\mathcal{X}}{e}{\scomp{\ereturn{\mathcal{X}}}{s}}}_{\gamma}(dv)
       \times
       \psem{k}_{\gamma[\mathcal{X} \leftarrow v]}(U)

    \\
    = \{ \text{ by \cref{eq:while-gprob} with $\gamma^n = \gamma$ and because $\gamma(\kwf{target}) = 0$ } \}
    \\\phantom{=~}
    \lambda U.
    \displaystyle\int_{X}
                  \left(\displaystyle\prod_{i=0}^{n-1} \exp(\gamma^{i}(\kwf{target}))\right) \times \delta_{\gamma^0(\mathcal{X})}
       (dv)
       \times
       \psem{k}_{\gamma[\mathcal{X} \leftarrow v]}(U)
    \\
    = \{ \text{ by rewriting the product of exponential } \}
    \\\phantom{=~}
    \lambda U.
    \displaystyle\int_{X}
                  \exp(\displaystyle\sum_{i=0}^{n-1} \gamma^{i}(\kwf{target})) \times \delta_{\gamma^0(\mathcal{X})}
       (dv)
       \times
       \psem{k}_{\gamma[\mathcal{X} \leftarrow v]}(U)
    \\
    = \{ \text{ by \cref{eq:while-stan} with $\gamma' = \sem{\swhile{e}{s}}_{\gamma}$ } \}
    \\\phantom{=~}
    \lambda U.
    \displaystyle\int_{X}
                  \exp(\gamma'(\kwf{target})) \times \delta_{\gamma^0(\mathcal{X})}
       (dv)
       \times
       \psem{k}_{\gamma[\mathcal{X} \leftarrow v]}(U)
    \\
    = \{ \text{ by the integration of the $\delta$ distribution } \}
    \\\phantom{=~}
    \lambda U.
    \exp(\gamma'(\kwf{target}))
       \times
       \psem{k}_{\gamma[\mathcal{X} \leftarrow {\gamma^0(\mathcal{X})}]}(U)
    \\
    = \{ \text{ by definition of $\mathit{lhs}$ and since $\kwf{target} \not\in \mathcal{X}$ } \}
    \\\phantom{=~}
    \lambda U.
    \exp(\gamma'(\kwf{target}))
       \times
       \psem{k}_{\gamma^0[\kwf{target} \leftarrow 0]}(U)
    \\
    = \lambda U.
    \exp(\gamma'(\kwf{target}))
       \times
       \psem{k}_{\gamma'[\kwf{target} \leftarrow 0]}(U)

  \end{array}
  $$
  \end{small}

\end{proof}

\section{Evaluation}
\label{apx:evaluation}

We present here the results of the experiments for RQ2 and RQ3 described in \Cref{sec:evaluation} on the entire set of PosteriorDB models with reference, that is, without filtering out the models for which Stan does not pass the accuracy test.

\paragraph{RQ2: Accuracy}
\Cref{tab:eval_accuracy_full} presents the accuracy results.
For each run we report the mean relative error in green if the model pass the accuracy test, orange otherwise (the accuracy test fails if the relative error for one of the parameter is above~$0.3$).

Compared to the results presented in \Cref{sec:evaluation} we have two additional errors for the NumPyro backend: gp\_pois\_regr (missing \python{cov_exp_quad}) and diamonds\_diamonds (missing \python{student_t_lccdf}).
We also have two additional mismatches low\_dim\_gauss\_mix and hmm\_drive\_1 both due to an \python{ordered} constraint that is not supported in the versions of Pyro and NumPyro that we used.

For the Pyro backend two examples for which Stan returns a mismatch timed-out after 10h (indicated with a ``---'' in \Cref{tab:eval_accuracy_full}): hmm\_drive\_1
 and low\_dim\_gauss\_mix.

\paragraph{RQ3: Speed}
\Cref{tab:eval_speed_full} presents the speed results.
For each run we report the average duration in second and the standard deviation.

\newcommand{\green}[1]{{\color{green!60!black}#1}}
\newcommand{\orange}[1]{{\color{orange!90!black}#1}}
\newcommand{\gray}[1]{{\color{black!40!white}#1}}

\begin{table*}
  \caption{Comparing inference results with PosteriorDB references. For each column we report the mean relative error in green if the example pass the accuracy test, orange otherwise.}
  \label{tab:eval_accuracy_full}
\begin{small}
  \begin{tabular}{@{}llrrrrr@{}}

&&&\multicolumn{1}{c}{\textsc{Pyro}}&\multicolumn{3}{c}{\textsc{NumPyro}}
\\
\cmidrule(lr){4-4}
\cmidrule(lr){5-7}
\textsc{Model} & \textsc{Dataset} & \textsc{Stan} & \textsc{Compr.} & \textsc{Compr.} & \textsc{Mixed} & \textsc{Gener.}\\
    \toprule
    accel\_gp &            mcycle\_gp &     \green{ 0.01} &                      \emark &                         \emark &                 \emark &                      \emark \\
    arK &                  arK &     \green{ 0.03} &                   \green{ 0.01} &                      \green{ 0.01} &              \green{ 0.01} &                   \green{ 0.01} \\
 arma11 &                 arma &     \green{ 0.01} &                   \green{ 0.01} &                      \green{ 0.00} &              \green{ 0.01} &                   \green{ 0.01} \\
    blr &                sblrc &     \orange{ 0.98} &                   \green{ 0.01} &                      \green{ 0.01} &              \green{ 0.01} &                      \emark \\
    blr &                sblri &     \orange{ 0.63} &                   \green{ 0.01} &                      \green{ 0.01} &              \green{ 0.01} &                      \emark \\
diamonds &             diamonds &     \orange{ 1.10} &                      \emark &                         \emark &                 \emark &                      \emark \\
   dogs &                 dogs &     \green{ 0.10} &                   \green{ 0.01} &                      \green{ 0.00} &              \green{ 0.01} &                   \green{ 0.01} \\
dogs\_log &                 dogs &     \green{ 0.03} &                   \green{ 0.01} &                      \green{ 0.01} &              \green{ 0.03} &                      \emark \\
earn\_height &             earnings &     \green{ 0.01} &                   \green{ 0.02} &                      \green{ 0.00} &              \green{ 0.00} &                      \emark \\
eight\_schools\_centered &        eight\_schools &     \green{ 0.02} &                   \green{ 0.03} &                      \green{ 0.03} &              \green{ 0.02} &                   \green{ 0.02} \\
eight\_schools\_noncentered &        eight\_schools &     \green{ 0.01} &                   \green{ 0.01} &                      \green{ 0.01} &              \green{ 0.01} &                   \orange{ 0.12} \\
garch11 &                garch &     \green{ 0.01} &                   \orange{ 0.99} &                      \orange{ 0.22} &              \orange{ 0.22} &                      \emark \\
gp\_pois\_regr &         gp\_pois\_regr &     \orange{ 0.14} &                      \emark &                         \emark &                 \emark &                      \emark \\
gp\_regr &         gp\_pois\_regr &     \green{ 0.00} &                      \emark &                         \emark &                 \emark &                      \emark \\
hmm\_drive\_0 &  bball\_drive\_event\_0 &     \green{ 0.10} &                   \green{ 0.01} &                      \green{ 0.01} &              \green{ 0.01} &                      \emark \\
hmm\_drive\_1 &  bball\_drive\_event\_1 &     \orange{ 1.90} &                      --- &                     \orange{ 27.70} &             \orange{ 27.70} &                      \emark \\
hmm\_example &          hmm\_example &     \green{ 0.09} &                   \green{ 0.00} &                      \green{ 0.01} &              \green{ 0.01} &                      \emark \\
kidscore\_interaction &                kidiq &     \green{ 0.03} &                   \green{ 0.01} &                      \green{ 0.01} &              \green{ 0.01} &                      \emark \\
kidscore\_interaction\_c &  kidiq\_with\_mom\_work &     \orange{ 0.09} &                   \green{ 0.00} &                      \green{ 0.01} &              \green{ 0.01} &                      \emark \\
kidscore\_interaction\_c2 &  kidiq\_with\_mom\_work &     \green{ 0.05} &                   \green{ 0.00} &                      \green{ 0.01} &              \green{ 0.01} &                      \emark \\
kidscore\_interaction\_z &  kidiq\_with\_mom\_work &     \orange{ 0.13} &                   \green{ 0.01} &                      \green{ 0.01} &              \green{ 0.01} &                      \emark \\
kidscore\_mom\_work &  kidiq\_with\_mom\_work &     \green{ 0.03} &                   \green{ 0.01} &                      \green{ 0.01} &              \green{ 0.01} &                      \emark \\
kidscore\_momhs &                kidiq &     \green{ 0.11} &                   \green{ 0.02} &                      \green{ 0.01} &              \green{ 0.01} &                      \emark \\
kidscore\_momhsiq &                kidiq &     \green{ 0.03} &                   \green{ 0.01} &                      \green{ 0.01} &              \green{ 0.01} &                      \emark \\
kidscore\_momiq &                kidiq &     \green{ 0.02} &                   \green{ 0.02} &                      \green{ 0.02} &              \green{ 0.02} &                      \emark \\
kilpisjarvi &      kilpisjarvi\_mod &     \green{ 0.02} &                   \green{ 0.01} &                      \green{ 0.01} &              \green{ 0.01} &                      \emark \\
log10earn\_height &             earnings &     \orange{ 0.56} &                   \green{ 0.00} &                      \green{ 0.00} &              \green{ 0.00} &                      \emark \\
logearn\_height &             earnings &     \green{ 0.10} &                   \green{ 0.00} &                      \green{ 0.00} &              \green{ 0.00} &                      \emark \\
logearn\_height\_male &             earnings &     \green{ 0.07} &                   \green{ 0.01} &                      \green{ 0.01} &              \green{ 0.01} &                      \emark \\
logearn\_interaction &             earnings &     \orange{ 0.20} &                   \green{ 0.01} &                      \green{ 0.01} &              \green{ 0.01} &                      \emark \\
logearn\_interaction\_z &             earnings &     \orange{ 0.14} &                   \green{ 0.02} &                      \green{ 0.01} &              \green{ 0.01} &                      \emark \\
logearn\_logheight\_male &             earnings &     \green{ 0.04} &                   \green{ 0.01} &                      \green{ 0.02} &              \green{ 0.02} &                      \emark \\
logmesquite &             mesquite &     \orange{ 0.20} &                   \green{ 0.01} &                      \green{ 0.01} &              \green{ 0.01} &                      \emark \\
logmesquite\_logva &             mesquite &     \orange{ 0.31} &                   \green{ 0.01} &                      \green{ 0.01} &              \green{ 0.01} &                      \emark \\
logmesquite\_logvas &             mesquite &     \green{ 0.05} &                   \green{ 0.01} &                      \green{ 0.01} &              \green{ 0.01} &                      \emark \\
logmesquite\_logvash &             mesquite &     \orange{ 0.17} &                   \green{ 0.01} &                      \green{ 0.01} &              \green{ 0.01} &                      \emark \\
logmesquite\_logvolume &             mesquite &     \orange{ 0.15} &                   \green{ 0.01} &                      \green{ 0.01} &              \green{ 0.01} &                      \emark \\
lotka\_volterra &     hudson\_lynx\_hare &     \green{ 0.04} &                      \emark &                         \emark &                 \emark &                      \emark \\
low\_dim\_gauss\_mix &    low\_dim\_gauss\_mix &     \orange{ 0.84} &                      --- &                     \orange{ 10.07} &             \orange{ 15.11} &                  \orange{ 30.22} \\
mesquite &             mesquite &     \green{ 0.01} &                   \green{ 0.01} &                      \green{ 0.02} &              \green{ 0.02} &                      \emark \\
    nes &              nes1980 &     \green{ 0.03} &                   \green{ 0.01} &                      \green{ 0.02} &              \green{ 0.02} &                      \emark \\
    nes &              nes1976 &     \green{ 0.03} &                   \green{ 0.01} &                      \green{ 0.01} &              \green{ 0.01} &                      \emark \\
    nes &              nes1992 &     \orange{ 0.06} &                   \green{ 0.01} &                      \green{ 0.01} &              \green{ 0.01} &                      \emark \\
    nes &              nes1984 &     \orange{ 0.08} &                   \green{ 0.01} &                      \green{ 0.02} &              \green{ 0.02} &                      \emark \\
    nes &              nes1972 &     \green{ 0.04} &                   \green{ 0.01} &                      \green{ 0.01} &              \green{ 0.01} &                      \emark \\
    nes &              nes2000 &     \green{ 0.05} &                   \green{ 0.01} &                      \green{ 0.01} &              \green{ 0.01} &                      \emark \\
    nes &              nes1996 &     \green{ 0.04} &                   \green{ 0.01} &                      \green{ 0.01} &              \green{ 0.01} &                      \emark \\
    nes &              nes1988 &     \orange{ 0.13} &                   \green{ 0.01} &                      \green{ 0.01} &              \green{ 0.01} &                      \emark \\
one\_comp\_mm\_elim\_abs & one\_comp\_mm\_elim\_abs &     \green{ 0.02} &                      \emark &                         \emark &                 \emark &                      \emark \\
\bottomrule
\end{tabular}
\end{small}
\end{table*}

\begin{table*}
\caption{Comparing inference speed on PosteriorDB models. For each columns we report the duration in second with the following format mean\gray{(std)}. Results are averaged over 5 runs.}
\label{tab:eval_speed_full}
\begin{small}
      \begin{tabular}{@{}llr@{ }lr@{ }lr@{ }lr@{ }l@{}}

        &&&&\multicolumn{6}{c}{\textsc{NumPyro}}
        \\
        \cmidrule(lr){5-10}
        \textsc{Model} & \textsc{Dataset} & \multicolumn{2}{c}{\textsc{Stan}} & \multicolumn{2}{c}{\textsc{Compr.}} & \multicolumn{2}{c}{\textsc{Mixed}} & \multicolumn{2}{c}{\textsc{Gener.}}
        \\
            \toprule
            accel\_gp &            mcycle\_gp &      1102 &     \gray{(41.6)} &                         &                                &                 &                        &                      &                             \\
            arK &                  arK &        57 &      \gray{(2.7)} &                         38 &                       \gray{(2.0)} &                 37 &               \gray{(0.7)} &                      34 &                    \gray{(0.6)} \\
         arma11 &                 arma &        96 &    \gray{(206.0)} &                         42 &                       \gray{(0.9)} &               1306 &            \gray{(2818.3)} &                    1133 &                 \gray{(2428.4)} \\
            blr &                sblrc &         2 &      \gray{(0.0)} &                          6 &                       \gray{(0.1)} &                  6 &               \gray{(0.7)} &                      &                             \\
            blr &                sblri &         1 &      \gray{(0.0)} &                          6 &                       \gray{(0.1)} &                  6 &               \gray{(0.1)} &                      &                             \\
       diamonds &             diamonds &      2460 &     \gray{(40.7)} &                         &                                &                 &                        &                      &                             \\
           dogs &                 dogs &        66 &      \gray{(2.0)} &                        382 &                       \gray{(6.4)} &                372 &              \gray{(14.1)} &                     369 &                    \gray{(8.8)} \\
       dogs\_log &                 dogs &        32 &      \gray{(2.3)} &                        223 &                       \gray{(9.2)} &                214 &               \gray{(7.0)} &                      &                             \\
    earn\_height &             earnings &        78 &      \gray{(1.1)} &                         15 &                       \gray{(0.1)} &                 15 &               \gray{(0.6)} &                      &                             \\
eight\_schools\_centered &        eight\_schools &         5 &      \gray{(6.2)} &                          7 &                       \gray{(0.5)} &                  6 &               \gray{(0.2)} &                       6 &                    \gray{(0.3)} \\
eight\_schools\_noncentered &        eight\_schools &         1 &      \gray{(0.1)} &                          6 &                       \gray{(0.4)} &                  6 &               \gray{(0.7)} &                       6 &                    \gray{(0.4)} \\
        garch11 &                garch &        17 &      \gray{(0.6)} &                        127 &                       \gray{(2.4)} &                124 &               \gray{(2.9)} &                      &                             \\
   gp\_pois\_regr &         gp\_pois\_regr &        45 &      \gray{(2.9)} &                         &                                &                 &                        &                      &                             \\
        gp\_regr &         gp\_pois\_regr &         2 &      \gray{(0.1)} &                         &                                &                 &                        &                      &                             \\
    hmm\_drive\_0 &  bball\_drive\_event\_0 &       230 &      \gray{(4.7)} &                       1542 &                      \gray{(37.4)} &               1538 &              \gray{(26.7)} &                      &                             \\
    hmm\_drive\_1 &  bball\_drive\_event\_1 &       185 &     \gray{(18.1)} &                        712 &                     \gray{(110.7)} &                750 &              \gray{(86.0)} &                      &                             \\
    hmm\_example &          hmm\_example &        28 &      \gray{(1.8)} &                         62 &                       \gray{(1.2)} &                 62 &               \gray{(1.6)} &                      &                             \\
kidscore\_interaction &                kidiq &       102 &      \gray{(3.4)} &                         13 &                       \gray{(0.4)} &                 13 &               \gray{(0.3)} &                      &                             \\
kidscore\_interaction\_c &  kidiq\_with\_mom\_work &         9 &      \gray{(0.0)} &                          6 &                       \gray{(0.2)} &                  6 &               \gray{(0.2)} &                      &                             \\
kidscore\_interaction\_c2 &  kidiq\_with\_mom\_work &        10 &      \gray{(0.3)} &                          6 &                       \gray{(0.1)} &                  6 &               \gray{(0.1)} &                      &                             \\
kidscore\_interaction\_z &  kidiq\_with\_mom\_work &         9 &      \gray{(0.2)} &                          6 &                       \gray{(0.6)} &                  7 &               \gray{(0.4)} &                      &                             \\
kidscore\_mom\_work &  kidiq\_with\_mom\_work &        14 &      \gray{(0.3)} &                          9 &                       \gray{(0.3)} &                  9 &               \gray{(0.2)} &                      &                             \\
 kidscore\_momhs &                kidiq &         5 &      \gray{(0.1)} &                          6 &                       \gray{(0.1)} &                  6 &               \gray{(0.2)} &                      &                             \\
kidscore\_momhsiq &                kidiq &        28 &      \gray{(0.8)} &                          8 &                       \gray{(0.7)} &                  8 &               \gray{(0.0)} &                      &                             \\
 kidscore\_momiq &                kidiq &        13 &      \gray{(0.2)} &                          7 &                       \gray{(0.2)} &                  7 &               \gray{(0.2)} &                      &                             \\
    kilpisjarvi &      kilpisjarvi\_mod &        59 &      \gray{(0.5)} &                         21 &                       \gray{(0.2)} &                 21 &               \gray{(0.6)} &                      &                             \\
log10earn\_height &             earnings &        81 &      \gray{(1.3)} &                         16 &                       \gray{(0.8)} &                 15 &               \gray{(0.4)} &                      &                             \\
 logearn\_height &             earnings &        79 &      \gray{(2.0)} &                         15 &                       \gray{(0.4)} &                 15 &               \gray{(0.2)} &                      &                             \\
logearn\_height\_male &             earnings &       225 &      \gray{(5.9)} &                         23 &                       \gray{(0.3)} &                 23 &               \gray{(0.6)} &                      &                             \\
logearn\_interaction &             earnings &       563 &     \gray{(14.5)} &                         42 &                       \gray{(1.1)} &                 40 &               \gray{(1.1)} &                      &                             \\
logearn\_interaction\_z &             earnings &        41 &      \gray{(0.8)} &                          8 &                       \gray{(0.2)} &                  9 &               \gray{(0.6)} &                      &                             \\
logearn\_logheight\_male &             earnings &       867 &     \gray{(18.4)} &                         75 &                       \gray{(2.0)} &                 75 &               \gray{(2.5)} &                      &                             \\
    logmesquite &             mesquite &         8 &      \gray{(0.1)} &                          7 &                       \gray{(0.1)} &                  7 &               \gray{(0.1)} &                      &                             \\
logmesquite\_logva &             mesquite &         5 &      \gray{(0.3)} &                          6 &                       \gray{(0.2)} &                  6 &               \gray{(0.2)} &                      &                             \\
logmesquite\_logvas &             mesquite &        14 &      \gray{(0.3)} &                          8 &                       \gray{(0.1)} &                  8 &               \gray{(0.4)} &                      &                             \\
logmesquite\_logvash &             mesquite &        12 &      \gray{(0.2)} &                          7 &                       \gray{(0.1)} &                  7 &               \gray{(0.1)} &                      &                             \\
logmesquite\_logvolume &             mesquite &         1 &      \gray{(0.0)} &                          6 &                       \gray{(0.6)} &                  5 &               \gray{(0.1)} &                      &                             \\
 lotka\_volterra &     hudson\_lynx\_hare &       186 &      \gray{(7.2)} &                         &                                &                 &                        &                      &                             \\
low\_dim\_gauss\_mix &    low\_dim\_gauss\_mix &        58 &      \gray{(0.5)} &                         26 &                       \gray{(0.3)} &                 28 &               \gray{(0.3)} &                      27 &                    \gray{(0.4)} \\
       mesquite &             mesquite &        15 &      \gray{(0.3)} &                          8 &                       \gray{(0.3)} &                  8 &               \gray{(0.1)} &                      &                             \\
            nes &              nes1980 &       216 &      \gray{(5.3)} &                         15 &                       \gray{(0.3)} &                 15 &               \gray{(0.5)} &                      &                             \\
            nes &              nes1976 &       419 &      \gray{(8.0)} &                         20 &                       \gray{(0.4)} &                 21 &               \gray{(0.5)} &                      &                             \\
            nes &              nes1992 &       478 &     \gray{(11.1)} &                         25 &                       \gray{(0.7)} &                 25 &               \gray{(1.1)} &                      &                             \\
            nes &              nes1984 &       433 &     \gray{(10.5)} &                         24 &                       \gray{(0.5)} &                 24 &               \gray{(0.3)} &                      &                             \\
            nes &              nes1972 &       478 &     \gray{(14.2)} &                         25 &                       \gray{(0.9)} &                 25 &               \gray{(0.7)} &                      &                             \\
            nes &              nes2000 &       161 &      \gray{(3.5)} &                         13 &                       \gray{(0.2)} &                 13 &               \gray{(0.2)} &                      &                             \\
            nes &              nes1996 &       398 &     \gray{(10.1)} &                         22 &                       \gray{(0.3)} &                 22 &               \gray{(0.4)} &                      &                             \\
            nes &              nes1988 &       327 &      \gray{(6.1)} &                        120 &                       \gray{(2.0)} &                121 &               \gray{(1.9)} &                      &                             \\
one\_comp\_mm\_elim\_abs & one\_comp\_mm\_elim\_abs &       970 &     \gray{(67.1)} &                         &                                &                 &                        &                      &                             \\
\bottomrule
\end{tabular}
\end{small}
\end{table*}

 \else
\fi

\end{document}